\newcommand{\calA}{{\mathcal{A}}}
\newcommand{\Reg}{\text{\rm Reg}}
\newcommand{\one}{\boldsymbol{1}}
\DeclareMathOperator*{\argmin}{argmin}
\DeclareMathOperator*{\argmax}{argmax}
\newcommand{\field}[1]{\mathbb{#1}}
\newcommand{\E}{\field{E}}
\newcommand{\hatL}{\widehat{L}}
\newcommand{\cb}{\text{CB}}
\newcommand{\trace}[1]{\text{trace}\left({#1}\right)}
\newcommand{\up}{\alpha}
\newcommand{\down}{\beta}
\newcommand{\term}{\textbf{term}}
\newcommand{\Var}{\mathbb{V}}
\newcommand{\Cov}{\text{Cov}}
\newtheorem{lemma}{Lemma}
\newtheorem{theorem}{Theorem}
\newtheorem*{theorem*}{Theorem}
\newtheorem{definition}{Definition}
\newenvironment{rtheorem}[3][]{

\bigskip

\noindent \ifthenelse{\equal{#1}{}}{\bf #2 #3}{\bf #2 #3 (#1)}
\begin{it}
}{\end{it}}
\newcommand{\order}{\ensuremath{\mathcal{O}}}
\newcommand{\otil}{\ensuremath{\widetilde{\mathcal{O}}}}
\theoremstyle{definition}
\newtheorem{example}{Example}
\newcommand{\dist}{\ensuremath{\lambda}}
\icmltitlerunning{Federated Residual Learning}
\newcommand{\alg}{FedRes\xspace}
\newcommand{\alglong}{Federated Residual Learning\xspace}
\begin{document}
\twocolumn[
\icmltitle{Federated Residual Learning}




\begin{icmlauthorlist}
\icmlauthor{Alekh Agarwal}{Alekh Agarwal}
\icmlauthor{John Langford}{John Langford}
\icmlauthor{Chen-Yu Wei}{Chen-Yu Wei}

\end{icmlauthorlist}
\icmlaffiliation{Alekh Agarwal}{Microsoft Research, Redmond}
\icmlaffiliation{John Langford}{Microsoft Research, New York City}
\icmlaffiliation{Chen-Yu Wei}{University of Southern California}
\icmlcorrespondingauthor{Chen-Yu Wei}{chenyu.wei@usc.edu}
\icmlkeywords{Federated learning, distributed learning}

\vskip 0.3in
]



\printAffiliationsAndNotice{This work was done when Chen-Yu Wei was an intern at Microsoft Research, Redmond. }  

\begin{abstract}
We study a new form of federated learning where the clients train personalized local models and make predictions jointly with the server-side shared model. Using this new federated learning framework, the complexity of the central shared model can be minimized
while still gaining all the performance benefits that joint
training provides. Our framework is robust to data heterogeneity, addressing the slow convergence problem traditional federated learning methods face when the data is non-i.i.d. across clients. We test the theory empirically and find substantial performance gains over baselines.
\end{abstract}

\section{Introduction}
\label{section: introduction}
In federated learning~\cite{mcmahan2017communication, smith2017federated, DBLP:journals/corr/abs-1802-07876}, the training samples are acquired from a host of clients. The goal is to learn a significantly more accurate model than each client could achieve using just the locally available data.  Most prior work considered learning a single centralized model by incorporating the samples from all the clients. While this scheme indeed provides the benefits of joint training, increasing the overall data efficiency, its performance suffers when the clients have different data distributions \cite{li2019federated}. In this paper, we provide a solution that enables federated learning to work well in such environments, while preserving all the desirable properties.


To illustrate the key challenges of our setting, we adopt the problem of content recommendation as a main motivating example throughout the paper. In this setting, each client is typically a computer or a mobile device, associated with a user.  The goal of learning is to improve the user's engagement with the presented content, measured via metrics such as click-through rate or dwell time. Depending on the approach, the learning task might involve predicting the values of these metrics, and use them to guide the recommendation decision. Some salient aspects of this setting are:
\begin{enumerate}
\item Different users have different preferences, so personalized model is needed.
\item The data samples collected from each user are not enough to train a powerful personalized model.
\item Incorporating all personalization in a centralized model can result in a huge model size, making it intractable.
\end{enumerate}

We address the above issues by proposing a \emph{model separation} approach, a new form of federated learning. Specifically, we consider the scenario where the server of the system maintains a \emph{global model} that is shared across all clients, and each client maintains its own personalized \emph{local model}.  For a certain client's prediction task, the prediction is jointly made by the global model and the local model. As a simple example, we can let the final prediction value to be the sum of the prediction values given by the global model and the local model.

For this setting, we develop novel federated learning algorithms. Since making the prediction on an example requires the predictions of both the global and the local models, they are effectively learn against the \emph{residuals} from the other one. Therefore, we name our framework and algorithms \alglong, or simply \alg.

This new framework has several desirable properties that make it suitable for large-scale deployment. First, the clients have freedom to design their own local models and the \emph{local features} that the local models are trained on. This allows devices of different hardware complexity to join the federated system with low cost. Second, in a version of our algorithm (i.e., the SGD-variant introduced in Section~\ref{subsec: sgd}), all information about the local model and the local features that the client uses to train the local model can be summarized as \emph{residuals} for the server. Since the residuals can usually be represented by a few bits for each data sample, the communication between the clients and the server can be rather efficient. Furthermore, since the client does not need to reveal the design of the local model and the local features it uses, the system largely preserves privacy.

 To model the real-world scenario, we incorporate the \emph{delay} between the server and the clients into our algorithm design and analysis, making our algorithm robust to delay. This is inspired by prior works on delayed feedback stochastic optimization~\cite{zinkevich2009slow, agarwal2011distributed, duchi2011dual,
  dekel2012optimal}, but requires new insights because our problem is complicated by the federated structure. We derive regret bounds for our algorithms, exhibiting improvements over purely global and local learning schemes, and showing its robustness to delays.  Our algorithms and analysis nicely work with mini-batches, which we show in Section~\ref{section: mini-batch}.

  Empirically, we evaluate the algorithm across a number of datasets. We demonstrate the efficacy of our algorithm over natural baselines as well as showing its robustness to delays and data heterogeneity.  Figure~\ref{fig:50 worker overall} provides an example, showing that \alglong yields superior performance over baselines operating with the same constraints.
\begin{figure}[t]
    \centering
    \includegraphics[width=0.5\textwidth,trim={0 0pt 0 0},clip]{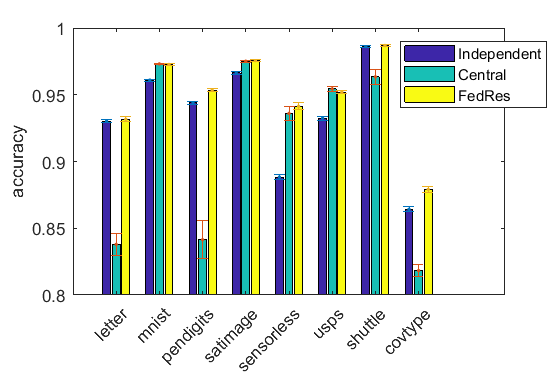}
    \caption{Accuracy of Independent, Central, and \alg approaches with $P=50$ clients in the absence of communication delays. The \alg approach is always nearly the best and substantially superior to alternatives in some cases.}
    \label{fig:50 worker overall}
\end{figure}

We note that our approach here is very basic, essentially a
modification of empirical risk minimization and gradient descent.  As such, it has general
applicability to many kinds of models---handwriting recognition,
reinforcement learning, and machine translation are all possibilities,
for example.

\subsection{Related work}
Federated learning has become a popular topic in machine learning. As proposed, the main focus is on communication efficiency~\cite{mcmahan2017communication}, with a global shared model in the federated learning system. There are also works dealing with the heterogeneity of the data distribution in federated systems \cite{smith2017federated, DBLP:journals/corr/abs-1802-07876, li2018federated,  mohri2019agnostic, karimireddy2019scaffold, jiang2019improving}. However, a fundamental difference between our work and theirs is that their global models and the local models still operate in the same parameter space, while our framework provides more flexibility in the design of local models, as we will see in Section~\ref{section: setting}.

Several papers have addressed stochastic optimization with
delayed feedback
\cite{zinkevich2009slow, agarwal2011distributed, duchi2011dual,
  dekel2012optimal} with different approaches. However, they all concluded that the asymptotic performance of stochastic optimization is not affected by the delay in feedback, provided that the amount of delay is bounded, and the objective function is smooth. Inspired by them, we extend their results to the more challenging federated setting, and draw similar conclusions.

Regarding how to reduce the complexity of a centralized model, the work of~\cite{spam09} proposed to use the feature hashing approach for spam filtering. Although they demonstrated dramatic compressions, there are many other applications where feature hashing
may harm performance.

\section{Problem Setting}
\label{section: setting}


We consider an online learning scenario in a federated learning system which consists of one server and $P$ clients. At any time $t$, the server keeps a \emph{global model}, which can be parameterized by a vector $w^g_t\in \mathbb{R}^d$ and each client $i$ keeps a \emph{local model}, parameterized by $w_{i,t}\in\mathbb{R}^{d_i}$. At each round $t$, client $i$ observes a feature vector $x_{i,t}=(x_{i,t}^g, x_{i,t}^l)$, where $x_{i,t}^g$ consists of \emph{global features}, and $x_{i,t}^l$ consists of \emph{local features}. The goal of client $i$ is to predict the label jointly with the global and the local models. More precisely, the global model gives a value $f(x_{i,t}^g; w_t^g)$ using global features; the local model gives another value $f(x_{i,t}^l; w_{i,t})$ using local features. They jointly incur a loss of
\begin{align*}
    \ell_{i,t}(w_t^g, w_{i,t})\triangleq \ell\Big(y_{i,t}, f(x_{i,t}^g; w_t^g), f(x_{i,t}^l; w_{i,t})  \Big),
\end{align*}
where $y_{i,t}$ is the true label, and $\ell$ is a loss function that reflects the accuracy of the joint prediction. An example of $\ell$ is the squared regression loss: $\ell(y,\hat{y}^g, \hat{y}^l) = (y-\hat{y}^g-\hat{y}^l)^2$.  

We also use $\dist$ to denote a set of weights over the clients, where $\dist_i \geq 0$ for all $i=1,2,\ldots,P$. The overall goal is to have low regret against the optimal joint global and local models. The (average) regret is defined as
\begin{equation}
\Reg = \scalebox{1.0}{$\displaystyle\sup_{u^g, u_{i}}\frac{1}{PT}\sum_{i=1}^P \dist_i \sum_{t=1}^T \left(\ell_{i,t}(w^g_t, w_{i,t}) - \ell_{i,t}(u^g, u_{i})\right).$}
\label{eqn:objective}
\end{equation}
It might appear that the model requires all clients to see the same number of examples as we draw a loss function $\ell_{i, t}$ for each client $i$ on every round $t$. We can easily circumvent this by setting the loss function to be identically $0$ if no data was observed on some round. Formally, if $N_i$ non-zero samples are observed at the client $i$, then setting $\lambda_i = T/N_i$ turns the objective into a sum of the average losses incurred at each client. For simplicity, in the later text, we all assume $\lambda_i=1$.

Below we give more concrete examples for our system.
\begin{example}[Linear regression]
    In this case, we define $\ell_{i,t}(w^g, w_i) = \left( y_{i,t} - w^{g\top} x^g_{i,t} - w_i^{\top}x^l_{i,t} \right)^2$ for some sample $(x_{i,t}^g, x_{i,t}^l, y_{i,t})\in\mathbb{R}^{d}\times \mathbb{R}^{d_i}\times \mathbb{R}$. Here, $y_{i,t}\in\mathbb{R}$ is the label; $x^g_{i,t}$ and $x^l_{i,t}$ are the features used by the global and local models respectively. Note that $x_{i,t}^g$ and $x_{i,t}^l$ can be identical, but we allow separate feature spaces for additional modeling flexibility.
\label{ex:lin_reg}
\end{example}

Typical works in federated learning focus on learning a good global model $w^g$ by minimizing the loss across all the clients. While this is desirable if the losses are drawn from an identical distribution across all the clients, it can fail to accurately predict at any client when they differ in a meaningful manner from each other. We now consider a further specialization of the example above to highlight the benefits of using a local model.

\begin{example}[Need for local models]
    In the setting of Example~\ref{ex:lin_reg} assume further that there exist vectors $u^g, \{u_i\}_{i=1}^P$ such that $y_{i,t} = u^g\cdot x_{i,t} + u_{i}\cdot x_{i,t}$ for all $i=1,\ldots,P$ and $t=1,\ldots,T$ where local and global features are identical. Assume $P$ is an even number and there is a vector $v$ such that $u_i = v$ for $i \leq P/2$ and $u_i = -v$ for $i > P/2$. The distribution of the covariates $x_{i,t}$ is identical across rounds and clients.  As $T$ becomes large, the optimal solution for our objective~\eqref{eqn:objective} coincides with the underlying parameters which generated the data. If we instead consider purely global training which would find $\min_w \sum_{i=1}^P \sum_{t=1}^T (y_{i,t} - w\cdot x_{i,t})^2$, then the solution of $w$ approaches $u^g$ as $T$ increases. However, when the model has converged, the clients still suffer a loss of $(v\cdot x_{i,t})^2$ for each sample. Thus, each client ends up with inaccurate predictions despite using a sufficiently expressive model.
\label{ex:lin_local}
\end{example}

In this work, we take into consideration the communication delay between the clients and the server. At each round, each client can upload data samples to the server, and/or fetch global models to the client side. We assume that at time $t$, client $i$ is able to fetch an outdated global model that is constructed at time $t-\down_i$, where $\down_i$ is the \emph{downlink delay} for client $i$. On the other hand, we assume that the data examples sent at time $t$ by client $i$ are received by the server at time $t+\up_i$, where $\up_i$ is the \emph{uplink delay} of client $i$. The \emph{round-trip delay} is denoted as $\tau_i=\up_i+\down_i$, and we assume $\tau_i\leq \tau$ for all clients $i$.

\paragraph{More notations and assumptions. }
For a random vector $v$, we use $\Var[v]$ to denote $\E[\|v-\E[v]\|^2]=\trace{\Cov[v]}$.
Denote the gradient of the losses with respect to global parameters and local parameters by $\nabla^g \ell_{i,t}(w^g, w_i)\triangleq \nabla_{w^g}\ell_{i,t}(w^g, w_i)$ and $\nabla^l\ell_{i,t}(w^g, w_i)\triangleq \nabla_{w_i}\ell_{i,t}(w^g, w_i)$. $\nabla \ell_{i,t}(w^g, w_i)$ denotes $\nabla_{(w^g, w_i)} \ell_{i,t}(w^g, w_i)$. For the loss function, we make the following assumptions for any pair $(w^g, w_i)$ such that $\|w^g\|, \|w_i\| \leq D$:
\begin{itemize}
    \item The value of the loss $\ell_{i,t}(w^g, w_i)$ lies in $[0,1]$.
    \item The losses are convex and $\gamma$-smooth jointly in both parameters. A function $f$ is $\gamma$-smooth if for all $a,b$
    \begin{align*}
        f(a) - f(b) \leq \nabla f(b)\cdot (a-b) + \frac{\gamma}{2}\|a-b\|^2.
    \end{align*}
    \item The $\ell_2$-norm of the gradient of the loss $\|\nabla \ell_{i,t}(w^g, w_i)\|$ is upper bounded by $G$.\footnote{Smoothness of $\ell$ implies that gradients exist almost everywhere so that we can avoid working with subgradients.}
\end{itemize}
We also assume that each client's data samples $(x_{i,t}^g, x_{i,t}^l,y_{i,t})$ are i.i.d. across time, but the distributions can differ across the different clients. We use $\Pi_D(v)\triangleq \argmin_{u: \|u\|\leq D}\|u-v\|$ to denote that projection operator onto a ball of radius $D$.


\section{Algorithms}
We extend two common statistical learning algorithms to our \alglong setting. One is the empirical risk minimization (ERM) approach that is fully general in that it can be coupled with any centralized loss minimization scheme, while the second is a stochastic gradient descent (SGD) approach which is a computationally attractive incremental approach for large-scale settings. We introduce them in Section~\ref{subsec: erm} and \ref{subsec: sgd} respectively.

\subsection{ERM-based approach}
\label{subsec: erm}
Empirical-risk minimization (ERM) is a simple and generic way of finding a good model given i.i.d. data samples. In the traditional centralized setting, the learner simply finds the model that minimizes the empirical loss on the previously observed data. We extend this algorithm to our    setting as follows (assuming $\up_i=\up$ and $\down_i=\down$ for all $i$): in each round, client $i$ fetches the newest global model $w^g_{t-\down}$, and then finds a local model $w_{i,t}$ which, together with $w^g_{t-\down}$, jointly minimize the empirical loss on all previously observed data of client $i$ (Algorithm~\ref{alg: ERM client}). On the server side, in each round, the server receives the newest data samples $z_{i,t-\up}$ and local models $w_{i,t-\up}$ from all clients, and then finds a global model $w_{t}^g$ that, together with all local models, jointly minimizes the total empirical loss across all the clients (Algorithm~\ref{alg: ERM server}).

\begin{algorithm}[t]
\caption{\alg.ERM.Client}
\label{alg: ERM client}
\For{$t=1, \ldots, T$}{
    Fetch the global model $w^g_{t-\down}$. \\
    Compute the local model:
    \begin{align}
        w_{i,t} = \argmin_{w: \|w\|\leq D}\left\{ \sum_{s=1}^{t-1} \ell_{i,s}(w_{t-\down}^g,  w)\right\}.   \label{eqn: client update rule erm}
    \end{align}
    Use the model pair $(w_{t-\down}^g, w_{i,t})$ to make preditions. \\
    Observe a new sample $z_{i,t} = (x_{i,t}^g, x_{i,t}^l, y_{i,t})$. \\
    Send $z_{i,t}$ and $w_{i,t}$ to the server.

}
\end{algorithm}

\begin{algorithm}[t]
\caption{\alg.ERM.Server}
\label{alg: ERM server}
\For{$t=1, \ldots, T$}{
    Receive $z_{i,t-\up}$ and $w_{i,t-\up}$ from all $i=1,\ldots, P$.
    \begin{align}
        w_{t+1}^g = \argmin_{w: \|w\|\leq D} \left\{  \sum_{i=1}^P\sum_{s=1}^{t-\up}\ell_{i,s}(w, w_{i,t-\up}) \right\} \label{eqn: server update rule erm}
    \end{align}
}
\end{algorithm}

Analyzing this algorithm is not as straightforward as in the centralized setting, because each client (server) is now facing a changing global (local) model, making the losses seen by the client (server) non-i.i.d. The algorithm is related to \emph{alternating minimization}, whose offline convergence property has been extensively studied in \cite{beck2015convergence}. Our analysis is inspired by \cite{beck2015convergence}, but further complicated because we deal with the online setting and consider the presence of delay. The following theorem gives a regret bound for this algorithm.

\begin{theorem}
    \label{theorem: main for erm}
    Suppose the variance of the loss $\Var[\ell_{i,t}(w^g, w_i)]$ is upper bounded by $\sigma^2$ for all $i,t$. Then
    \alg.ERM (Algorithm~\ref{alg: ERM client} and \ref{alg: ERM server}) guarantees
    \begin{align}
        &\E\left[ \frac{1}{PT}\sum_{i=1}^P \sum_{t=1}^T \ell_{i,t}\left(w^g_{t-\down}, w_{i,t}\right) - \ell_{i,t}\Big(w^g_*, w_{i,*}\Big)\right] \nonumber \\
        &=\widetilde{\order}\left(\sqrt{\frac{\left(d+\sum_{i=1}^P d_i\right)\sigma^2}{PT}} + \frac{\text{poly}(d,d_i,\gamma, D,\tau)}{T^{\frac{3}{4}}}\right).  \label{eqn: regret bound sgd}
    \end{align}
\end{theorem}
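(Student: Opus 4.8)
The plan is to first pass from the empirical regret to a population suboptimality statement via online-to-batch. Because the round-$t$ sample $z_{i,t}$ is drawn fresh and is independent of the models $w^g_{t-\down}$ and $w_{i,t}$ (both depend only on data observed strictly before round $t$), taking expectations replaces each empirical loss by its population counterpart. Writing the per-client population risk $L_i(w^g,w_i)=\E[\ell_{i,t}(w^g,w_i)]$ and the aggregate $F(w^g,\{w_i\})=\tfrac1P\sum_i L_i$, which is jointly convex and $\gamma$-smooth, and letting $(w^g_*,\{w_{i,*}\})$ minimize $F$, the left-hand side of the theorem reduces to $\frac1T\sum_{t=1}^T \E\left[F(\Theta_t)-F(\Theta_*)\right]$ with $\Theta_t:=(w^g_{t-\down},w_{1,t},\dots,w_{P,t})$. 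This isolates the real task: a convergence-rate bound for the delayed, stochastic, alternating-minimization iterates toward the joint population optimum.

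Next I would bound $F(\Theta_t)-F(\Theta_*)$ through a first-order argument. By convexity this is at most $\langle \nabla F(\Theta_t),\Theta_t-\Theta_*\rangle$, which splits into a global-block term $\langle \tfrac1P\sum_i \nabla^g L_i(w^g_{t-\down},w_{i,t}),\,w^g_{t-\down}-w^g_*\rangle$ and local-block terms $\tfrac1P\langle \nabla^l L_i(w^g_{t-\down},w_{i,t}),\,w_{i,t}-w_{i,*}\rangle$, each controlled by the relevant gradient norm times the diameter $2D$. The local gradients are near zero because $w_{i,t}$ is the \emph{empirical} minimizer of $\ell_{i,s}(w^g_{t-\down},\cdot)$: a Bernstein-type uniform-convergence bound for \emph{gradients} over the radius-$D$ ball (using the variance bound $\sigma^2$, boundedness in $[0,1]$, and $G$-Lipschitzness) converts empirical first-order optimality into $\norm{\nabla^l L_i}=\otil(\sqrt{d_i\sigma^2/t})$, since client $i$ has used about $t$ samples. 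The global gradient is handled the same way, but the server's optimality holds at the \emph{stale} point $(w^g_{t+1},w_{i,t-\up})$ rather than at $(w^g_{t-\down},w_{i,t})$, so I pay a smoothness-induced mismatch $\gamma\big(\norm{w^g_{t-\down}-w^g_{t+1}}+\tfrac1P\sum_i\norm{w_{i,t}-w_{i,t-\up}}\big)$ on top of the statistical term $\otil(\sqrt{d\sigma^2/(Pt)})$, as the global model is aggregated over roughly $Pt$ samples.

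Summing the statistical pieces then produces the leading term. Each step contributes $\otil(\sqrt{(d+d_i)\sigma^2/(Pt)})$; using $\frac1T\sum_{t=1}^T t^{-1/2}=\order(T^{-1/2})$ and Cauchy--Schwarz, $\frac1P\sum_i\sqrt{d_i}\le\sqrt{\sum_i d_i/P}$, these telescope to the advertised $\otil\!\big(\sqrt{(d+\sum_i d_i)\sigma^2/(PT)}\big)$. What remains is the accumulated delay mismatch, which I expect to be the source of the $\mathrm{poly}(d,d_i,\gamma,D,\tau)/T^{3/4}$ correction.

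The hard part will be bounding the drifts $\norm{w^g_{t-\down}-w^g_{t+1}}$ and $\norm{w_{i,t}-w_{i,t-\up}}$, i.e.\ the stability of consecutive ERM minimizers over a window of length $\tau$. This is genuinely delicate here because the losses are only convex, not strongly convex, so consecutive empirical minimizers need not be close in parameter space, and the federated coupling makes the two drifts circular: each new global update perturbs the second argument of \emph{every} client objective, and each local update perturbs \emph{every} server term. My plan is to set up a coupled recursion in which the per-round global drift is driven by the $\order(G/t)$ reweighting from one extra batch plus $\gamma$ times the local drift lagged by $\up$, and symmetrically the local drift is driven by $\order(G/t)$ plus $\gamma$ times the global drift lagged by $\down$; telescoping these (or, in the spirit of the offline alternating-minimization decrease of Beck, controlling the summed squared drift $\sum_t\norm{w^g_{t+1}-w^g_t}^2$ through a smoothness-based sufficient-decrease argument) and then balancing the $\tau$-fold accumulated drift against the statistical scale is what I expect to yield the lower-order $\mathrm{poly}(d,d_i,\gamma,D,\tau)/T^{3/4}$ term. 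The fact that this exponent is worse than $T^{-1/2}$ is exactly the price of converting stale block-optimality into joint suboptimality without strong convexity.
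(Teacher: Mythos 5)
Your reduction to population suboptimality and your accounting of the statistical terms are reasonable, but the proposal fails exactly at the step you yourself flag as ``the hard part,'' and the plan you sketch for that step cannot be carried out. Your argument needs the parameter-space drifts $\|w^g_{t-\down}-w^g_{t+1}\|$ and $\|w_{i,t}-w_{i,t-\up}\|$ to be small, and your coupled recursion presumes that an $\order(G/t)$ perturbation of an empirical objective (or a $\gamma$-scaled perturbation entering through the other block) moves its constrained minimizer by a comparable amount. That stability of the argmin is a strong-convexity (or error-bound) phenomenon and is simply false for merely convex, $\gamma$-smooth losses: an arbitrarily small tilt of a nearly flat objective can move the minimizer by the full diameter $2D$, so neither $\|w^g_{t+1}-w^g_t\|$ nor $\|w_{i,t}-w_{i,t-1}\|$ admits any nontrivial bound, and the circular coupling you describe cannot be broken by lagging or telescoping. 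The paper's proof is structured precisely to avoid ever bounding a drift in parameter space: it takes the \emph{empirical joint suboptimality} $\delta_{t,t'}=\frac{1}{P}\sum_{i}\Delta_{i,t'}(w^g_t,w_{i,t'})$ as the state variable, applies a Beck-style sufficient-decrease lemma for alternating minimization (Lemma~\ref{lemma: alternating key lemma}, instantiated in Lemma~\ref{lemma: erm component 1}) so that each server update contracts $\delta$ by $\frac{1}{18\gamma D^2+2}[\delta]_+^2$ with no reference to how far the minimizers moved, absorbs the effect of delay and of newly arriving samples purely through concentration of \emph{loss values} (Lemma~\ref{lemma: Lhat and L} and Lemma~\ref{lemma: erm component 2}), and then solves the resulting recursion $B_n\le B_{n-1}-C_1B_{n-1}^2+\cdots$ by induction (Lemma~\ref{lemma: recursion}); the $T^{-3/4}$ term in the statement is exactly the $n^{-3/4}$ component of that recursion's solution, not an accumulated drift.

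A secondary mismatch: the theorem's $\sigma^2$ bounds the variance of the \emph{loss values}, and the paper's leading term arises from Bernstein concentration of losses over a discretized parameter space (Lemma~\ref{lemma: Lhat and L}). Your leading term instead comes from uniform convergence of \emph{gradients} multiplied by the diameter $2D$, which would require a gradient-variance assumption (the assumption made for Theorem~\ref{theorem: main for sgd}, not for this theorem) and would produce a constant involving $D$ and the gradient noise scale rather than $\sigma$; as written, your route does not recover the claimed bound $\otil\big(\sqrt{(d+\sum_i d_i)\sigma^2/(PT)}\big)$ even in its dominant term.
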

The exact form of the lower-order term can be found in the proof in the appendix.
To see the usefulness of the bound in Theorem~\ref{theorem: main for erm}, we assume that all local models have the same dimension $d_1=\cdots=d_P=d'$. Then the dominant term in the above bound can be written as
$
    \order\left( \sqrt{\frac{\left(\frac{d}{P} + d'\right)\sigma^2}{T}} \right).
$
Comparing this with the bound when each client indepdently performs ERM on the whole feature set:
$
    \order\left( \sqrt{\frac{\left(d + d'\right)\sigma^2}{T}} \right),
$
one can see that the complexity from the global features are amortized among the clients. On the other hand, the delay only affects a lower order term, adding relatively insignificant cost to the system.

One drawback of Algorithm~\alg.ERM is that the clients have to transmit both the data samples and the local model to the server. Also, to calculate a new local model, the clients have to apply the newly received global model $w_{t-\down}^g$ to all the previous samples (Eq.\eqref{eqn: client update rule erm}). This makes the system inefficient both in communication and computation. A natural fix to this problem is to let the clients and the server use the following update rules (\textit{cf. } \eqref{eqn: client update rule erm} and \eqref{eqn: server update rule erm}):
\begin{align}
     &w_{i,t+1} = \argmin_{w}\left\{ \sum_{s=1}^t \ell_{i,s}(w^g_{s-\down}, w) \right\}   \label{eqn: fp client} \\
     &w_{t+1}^g = \argmin_{w} \left\{  \sum_{i=1}^P\sum_{s=1}^{t-\up}\ell_{i,s}(w, w_{i,s}) \right\}  \label{eqn: fp server}
\end{align}
To execute this algorithm, the clients only need to send $\ell_{i,t}(\cdot, w_{i,t})$ to the server. Since $\ell_{i,t}(\cdot, w_{i,t})=\ell(y_{i,t}, f(x_{i,t}^g; \cdot), f(x_{i,t}^l; w_{i,t}))$, sending the triplet $(y_{i,t}, x_{i,t}^g, f(x_{i,t}^l; w_{i,t}))$ is enough. We see that instead of communicating the whole local model $w_{i,t}$, they only need to communicate the \emph{local residual} $f(x_{i,t}^l; w_{i,t})$.
Unfortunately, we are unable to analyze this algorithm. If fact, the update rules \eqref{eqn: fp client} and \eqref{eqn: fp server} are related to the \emph{fictitious play} strategy in two-player cooperative games, where each learner plays the best response to the other agent's empirical behavior in the past. In general, fictitious play takes the learner an exponentially long time to converge \cite{monderer1996fictitious, brandt2010rate}. In Appendix~\ref{sec: comparing three}, we give an example showing that if the models are badly initialized, the convergence of the update rules \eqref{eqn: fp client} and \eqref{eqn: fp server} can indeed be very slow, compared to \eqref{eqn: client update rule erm} and \eqref{eqn: server update rule erm}.

Fortunately, in the next subsection, we have a communication and computational efficient algorithm that avoids all the above issues.

\subsection{SGD-based approach}
\label{subsec: sgd}

SGD is a commonly used stochastic optimization method for differentiable losses. To apply SGD to the federated setting, a natural idea is that upon receiving a new sample, the clients and the server perform individual updates using the gradient with respect to local and global parameters, respectively. We begin with two natural baseline update rules that implement this intuition, and highlight the issues with them before describing our update rule which gets around these issues.

\subsubsection{Challenges with some baselines}
Perhaps the most natural update rule for performing SGD on both client and server sides, in the presence of client-dependent delays is the following:
    \begin{align*}
        w_{i,t+1} &= w_{i,t} - \eta_i\nabla^l \ell_{i,t}(w_{t-\down_i}^g, w_{i,t}) \\
        w_{t+1}^g &= w_{t}^g - \eta \sum_{i=1}^P \nabla^g \ell_{i,t-\up_i} (w^g_t, w_{i,t-\up_i})
    \end{align*}
This update is a direct adaptation of the ERM algorithm. However, we are unable to show a similar regret bound for it as in Theorem~\ref{theorem: main for erm}, where the delay dependence is in a lower order term of the regret.


The problem of this update rule is that the updates of the clients and the server are \emph{mis-aligned}. Observe that the prediction model pair is $(w^g_{t-\down_i}, w_{i,t})$ on the client side, with the global model lagging behind the local model by an amount of $\down_i$. However, the server is performing gradient descent on the model pair $(w^g_t, w_{i, t-\up_i})$, where the local model is behind the global model. This slight mismatch makes the global parameter update to a slightly incorrect direction.

A natural remedy to this mis-alignment is to instead perform the following updates:
\begin{align*}
        w_{i,t+1} &= w_{i,t} - \eta_i\nabla^l \ell_{i,t}(w_{t-\down_i}^g, w_{i,t}) \\
        w_{t+1}^g &= w_{t}^g - \eta \sum_{i=1}^P \nabla^g \ell_{i,t-\up_i} (w^g_{t-\up_i-\down_i}, w_{i,t-\up_i})
    \end{align*}

That is, the updates always utilize a gradient evaluated at a pair of models $(w^g_{t-\down_i}, w_{i,t})$ for some client $i$ and time $t$. While this update rule has the right pairing of local and global models on both client and server, there is an asymmetry in the delays experienced by the two. For the clients, there is effectively no delay in that the local model always updates from the most current local model. On the other hand, the server experiences a round-trip delay of $\up_i + \down_i$ in order to maintain alignment with the most current local model it has access to for client $i$. This asymmetry presents some technical challenges in our analysis, and results in a delay dependence on the dominant term in the regret. We note that unlike the mis-alignment issue, it is plausible that this challenge can be handled by a more careful analysis. However, we now present a different solution by creating a symmetric delayed setting on both client and server ends.

\subsubsection{Our algorithm and results}

To address the aforementioned problems, we \emph{align} the model updates as well as the delay structures on both client and server. That is, all gradients are taken on model pairs of the form $(w_{t-\down_i}^g, w_{i,t})$ and the client also experiences a similar delay as the server. To achieve the latter, we let the client make \emph{delayed updates}: in~\eqref{eqn: local model update}, the client performs a descent step using a gradient that is one round-trip delayed. The final algorithms are shown in Algorithm~\ref{alg: SGD client} and \ref{alg: SGD server} for the clients and the server respectively. With this fix, we can now obtain a similar result to the ERM case --- \emph{the delay only appears in a lower-order term of the regret}:

\begin{algorithm}[t]
    \caption{\alg.SGD.Client}
    \label{alg: SGD client}
    \For{$t=1,\ldots, T$}{
        Fetch the global model $w_{t-\down_i}^g$. \\
        Update local model:
        \begin{align}
            w_{i,t}\leftarrow \Pi_D\Big\{w_{i,t-1} - \eta_i   \nabla^l_{i,t-\down_i-\up_i}\Big\},    \label{eqn: local model update}
        \end{align}
        where $\nabla^l_{i,s} \triangleq \nabla^l \ell_{i,s}(w^g_{s-\down_i}, w_{i,s})$. \\
        \ \\
        Use the model pair $(w_{t-\down_i}, w_{i,t})$ to make predictions. \\
        Observe a new sample $z_{i,t}=(x_{i,t}^g, x_{i,t}^l, y_{i,t})$. \\
        Send $Z_{i,t}=(x_{i,t}^g,  f(x_{i,t}^l; w_{i,t}), y_{i,t})$ to the server.
    }
\end{algorithm}

\begin{algorithm}[t]
    \caption{\alg.SGD.Server}
    \label{alg: SGD server}
    \For{$t=1,\ldots, T$}{
        Receive $Z_{i,t-\up_i}$ from all $i=1,\ldots, P$.
        \ \\
        Update global model:
        \begin{align}
            w^g_{t}\leftarrow \Pi_D\Bigg\{w^g_{t-1} - \eta\sum_{i=1}^P \nabla^g_{i,t-\up_i}\Bigg\}    \label{eqn: global model update}
        \end{align}
        where
        \begin{align*}
            \nabla^g_{i,s}
            &\triangleq \nabla^g \ell_{i,s}(w^g_{s-\down_i}, w_{i,s})\\
            &= \nabla_{w} \ell\left(y_{i,s}, f(x_{i,s}^g; w), f(x_{i,s}^l; w_{i,s}) \right)\Big|_{w=w^g_{s-\down_i}} \tag{computable from $Z_{i,s}$}
        \end{align*}

    }
\end{algorithm}


\begin{theorem}
    \label{theorem: main for sgd}
    Suppose the variance of the gradient of the losses $\Var[\nabla \ell_{i,t}(w^g, w_i)]$ is upper bounded by $\sigma^2$ for all $i,t$. Then \alg.SGD (Algorithm~\ref{alg: SGD client} and \ref{alg: SGD server}) guarantees that
    \begin{align}
        &\E\left[ \frac{1}{PT}\sum_{i=1}^P \sum_{t=1}^T \ell_{i,t}\left(w^g_{t-\down_i}, w_{i,t}\right) - \ell_{i,t}\Big(w^g_*, w_{i,*}\Big)\right] \nonumber \\
        &=\order\left(\sqrt{\frac{\left(\|w^g_*\|^2 + \sum_{i=1}^P \|w_{i,*}\|^2 \right) \sigma^2}{PT}} \right) \nonumber  \\
        &\qquad \quad + \order\left(\frac{\left(\gamma D^4 G^2 \tau^2\right)^{\frac{1}{3}}}{T^{\frac{2}{3}}} + \frac{DG\tau}{T}\right).  \label{eqn: regret bound sgd}
    \end{align}
\end{theorem}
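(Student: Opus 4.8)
The plan is to linearize the regret with convexity and then run two coupled projected-SGD telescopings---one for the clients' local updates \eqref{eqn: local model update} and one for the server's global update \eqref{eqn: global model update}---so that the round-trip delay is confined to error terms that smoothness renders lower order. Write $L_i(\cdot)=\E_{z_{i,t}}[\ell_{i,t}(\cdot)]$ for the population loss at client $i$, let $\theta_{i,t}=(w^g_{t-\down_i},w_{i,t})$ be the pair actually used for prediction, and $\theta_*=(w^g_*,w_{i,*})$ the comparator. Because each sample is fresh and $\theta_{i,t}$ depends only on samples observed strictly before round $t$ (both $w_{i,t}$ and $w^g_{t-\down_i}$ use gradients no fresher than effective time $t-\tau_i$), the expected per-round regret equals $\E[L_i(\theta_{i,t})-L_i(\theta_*)]$, and the stochastic-gradient noise is mean-zero against the history-measurable direction $\theta_{i,t}-\theta_*$. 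First I would therefore use convexity of $L_i$ to bound the total expected regret by $A^g+A^l$, where $A^g=\E\sum_{i,t}\nabla^g\ell_{i,t}(\theta_{i,t})\cdot(w^g_{t-\down_i}-w^g_*)$ and $A^l=\E\sum_{i,t}\nabla^l\ell_{i,t}(\theta_{i,t})\cdot(w_{i,t}-w_{i,*})$, with population gradients freely exchanged for the stochastic ones inside the expectation.

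The core is a nonexpansiveness estimate applied separately to the two updates. The key structural fact, which the algorithm was engineered to produce, is that after reindexing the telescoping sums \emph{both} updates act on an iterate lagging the prediction model by exactly the round-trip delay $\tau_i$: the local gradient $\nabla^l_{i,s}$ is applied at round $s+\tau_i$, and the aggregated global gradient $\nabla^g_{i,s}$ is applied at round $s+\up_i$ while the comparison point $w^g_{s-\down_i}$ sits $\tau_i-1$ steps earlier. I would thus split each inner product as (aligned part) $+$ (delay error). The aligned part telescopes in the usual way, contributing an initialization term of the form $\tfrac{\|w^g_*\|^2+\sum_i\|w_{i,*}\|^2}{2\eta}$ plus $\tfrac{\eta}{2}$ times sums of squared gradients; the delay error is the inner product of a gradient with the iterate drift $w_{i,s}-w_{i,s+\tau_i-1}$ (resp.\ $w^g_{s-\down_i}-w^g_{s+\up_i-1}$) accumulated over the intervening $\le\tau_i$ steps.

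Two quadratic quantities must then be tamed. For the squared-gradient sums I would invoke the self-bounding property of the smooth nonnegative losses together with the optimality conditions $\nabla^l L_i(\theta_*)=0$ and $\sum_i\nabla^g L_i(\theta_*)=0$: this converts the mean part $\|\nabla L_i(\theta_{i,t})\|^2$ into a multiple of the instantaneous suboptimality, which for $\eta$ small enough is reabsorbed into the left-hand regret, leaving only the genuine variance $\sigma^2$. Crucially, on the server side the clients' noises are independent, so the aggregated gradient $\sum_i\nabla^g_{i,t-\up_i}$ has variance $\le P\sigma^2$ rather than $P^2\sigma^2$; after the $\tfrac{1}{PT}$ normalization this is the source of the $\tfrac1P$ savings on the global features. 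For the delay errors I would bound each drift by $\eta\tau G$ (a sum of at most $\tau$ projected steps of size $\eta\|\nabla\|\le\eta G$) and use smoothness---gradient differences are at most $\gamma$ times the drift---to show the delay contributes a term of order $\gamma\eta^2\tau^2 G^2$ per unit time, i.e.\ with an extra factor of $\eta$ relative to the crude $\eta\tau G^2$ bound.

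Collecting terms, the normalized bound has the shape $\tfrac{\|w^g_*\|^2+\sum_i\|w_{i,*}\|^2}{\eta PT}+\eta\sigma^2+\gamma\eta^2\tau^2 G^2+\tfrac{DG\tau}{T}$, the last piece coming from the $O(\tau_i)$ edge terms dropped when reindexing the telescoping sums. Balancing the first two terms via $\eta\asymp\sqrt{(\|w^g_*\|^2+\sum_i\|w_{i,*}\|^2)/(PT\sigma^2)}$ yields the dominant $\sqrt{(\|w^g_*\|^2+\sum_i\|w_{i,*}\|^2)\sigma^2/(PT)}$ term, while balancing the initialization term against the delay term $\gamma\eta^2\tau^2 G^2$ (using $\|w^g_*\|,\|w_{i,*}\|\le D$) produces the $(\gamma D^4G^2\tau^2)^{1/3}T^{-2/3}$ term. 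I expect the main obstacle to be the self-bounding step: closing it even though each client's gradient is taken at its \emph{own} delayed pair $(w^g_{t-\down_i},w_{i,t})$ rather than at a single joint iterate---so one cannot directly invoke smoothness of one global objective---and, simultaneously, verifying that the delay cancels to second order so that $\tau$ never contaminates the leading $1/\sqrt{PT}$ rate. This is exactly where the deliberate symmetrization of the client and server delays (both equal to $\tau_i$) is indispensable, and where crude gradient-norm bounds would fail.
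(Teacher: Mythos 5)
Your architecture---linearize by convexity, run two coupled projected-SGD telescopings, confine the delay to second-order terms via smoothness, then balance step sizes---matches the skeleton of the paper's proof, and several key observations are right: the prediction pair $(w^g_{t-\down_i},w_{i,t})$ is measurable with respect to samples strictly before round $t$, the independence of client noises is what caps the server's aggregated variance at $P\sigma^2$ rather than $P^2\sigma^2$, and your final balancing reproduces both the $\sqrt{\cdot/(PT)}$ term and the $(\gamma D^4G^2\tau^2)^{1/3}T^{-2/3}$ term. The genuine gap is the step you yourself flag: the self-bounding absorption, and it is not a technicality---in its natural form it fails. Your crude telescoping leaves $\tfrac{\eta}{2}\sum_t\|g_t\|^2$ with $g_t$ the stochastic gradients, so you must absorb the population parts $\|\nabla^l L_i(w^g_{t-\down_i},w_{i,t})\|^2$ and $\|\sum_i\nabla^g L_i(\cdot)\|^2$ into the left-hand regret; the same need reappears in your delay argument, since expanding the drift as applied gradients produces cross terms $\nabla L_i(u_{i,t})\cdot\nabla L_i(u_{i,s})$ whose diagonal part is again $\|\nabla L_i\|^2$ (only the off-diagonal remainder is $\gamma$ times the drift). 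But per-client self-bounding gives $\|\nabla^l L_i(\theta_{i,t})\|^2\le 2\gamma\bigl(L_i(\theta_{i,t})-\min_w L_i(w^g_{t-\down_i},w)\bigr)$, and the comparator $(w^g_*,w_{i,*})$ minimizes $\sum_i L_i$ \emph{jointly}, not each client's loss: the gap $c_i \triangleq L_i(w^g_*,w_{i,*})-\min L_i$ is a nonvanishing heterogeneity constant. The leftover $\gamma\eta\cdot\tfrac1P\sum_i c_i$ (and $\gamma\eta\tau\cdot\tfrac1P\sum_i c_i$ from the delay cross terms) is not proportional to $\sigma^2$, so with the prescribed $\eta$ it yields terms of order $\gamma D\bar c/(\sigma\sqrt{T})$ and $\gamma D\tau\bar c/(\sigma\sqrt{T})$ that are absent from \eqref{eqn: regret bound sgd}---the second one even re-injects $\tau$ into the leading rate. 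Repairing this via the joint Bregman (co-coercivity) inequality and the cross-client cancellation coming from $\sum_i\nabla^g L_i(\theta_*)=0$ requires $\theta_*$ to be an interior unconstrained minimizer, an assumption the theorem (whose iterates are projected onto $\|w\|\le D$) never makes, and it degrades further when the $\down_i$ differ across clients.

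The paper's proof avoids the second-moment problem altogether, which is why it needs no optimality conditions on the comparator. Its projection lemma (Lemma~\ref{lemma: gradient update lemma}) pairs the applied gradient with the \emph{new} iterate and keeps the negative term $-\|w_t-w_{t-1}\|^2/(2\eta)$; the applied gradient $b_t$ is then split against its conditional mean $c_t$ and the regret-aligned population gradient $a_t$ (Lemmas~\ref{lemma: first term SGD} and~\ref{lemma: SGD local term bound}), so the noise is paired once with the history-measurable previous iterate (zero mean) and once with the increment $w_t-w_{t-1}$, which the negative term absorbs, leaving $\eta\|b_t-c_t\|^2$, i.e.\ pure variance. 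Likewise, the residual drift-times-gradient terms are not attacked through gradient differences but through the hybrid convexity--smoothness inequality (Lemma~\ref{lemma: useful}, used again in Lemma~\ref{lemma: SGD unprocessed term}), which converts them into telescoping population-loss differences plus $\tfrac{\gamma}{2}\|\text{drift}\|^2$, directly giving the $\gamma\eta^2\tau^2$-type terms and the $\order(DG\tau)$ boundary cost. So your plan has the right bookkeeping and the right targets, but the mechanism you rely on to tame the quadratic terms would not, as stated, deliver the theorem; the paper's decomposition is precisely the device that makes that mechanism unnecessary.
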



The complete proof of this theorem is provided in Appendix~\ref{sec:proof_sgd}. The techniques used in the analysis are inspired by those used in \cite{agarwal2011distributed}, which considers SGD in a delayed-feedback scenario and makes the dependence on delay only appeared in a lower-order term.
Similar to Theorem~\ref{theorem: main for erm}, we see that except for the additional regret caused by delay, the bound in Theorem~\ref{theorem: main for sgd} is an improvement over
\begin{align*}
        \order\left(\sqrt{\frac{\left(\sum_{i=1}^P \left(\|w^g_*\|^2+\|w_{i,*}\|^2\right)\right)\sigma^2}{PT}}\right),
\end{align*}
which is the achievable bound when all clients run independent SGD and compare their performance with the same benchmark $(w^g_*, w_{i,*})$.

\section{Reducing the communication through mini-batches} 
\label{section: mini-batch}
Our algorithms have heavy communication since the clients fetch a new global model each round.  This communication cost can be reduced by using mini-batches where both the clients and the server update their models once per batch. This can thus largely reduce the downlink communication because the client only needs to fetch the global model once per batch. The analysis in this section is inspired by the work of \citet{dekel2012optimal}. 

To analyze the algorithm with mini-batches, we can reuse our theorems developed in the previous sections. For example, in the \alg.SGD algorithm, if we use mini-batches of size $b$, we can define the aggregated loss
\begin{align}
    \widehat{\ell}_{i,n}(w^g, w_i) = \frac{1}{b}\sum_{t=(n-1)b+1}^{nb}\ell_{i,t}(w^g, w_i), \label{eqn: aggregated loss seq}
\end{align}
and run \alg.SGD for rounds $n=1,\ldots, \frac{T}{b}$. In the original algorithm, the clients accesses the global model $T$ times, but in the mini-batched algorithm, the clients only accesses $\frac{T}{b}$ times. We can also reuse Theorem~\ref{theorem: main for sgd} to analyze the regret of the batched algorithm. Applying Theorem~\ref{theorem: main for sgd} to the aggregated loss sequence defined in \eqref{eqn: aggregated loss seq}, we get
\begin{align*}
    &\E\left[ \frac{b}{PT} \sum_{i=1}^P \sum_{n=1}^{T/b}\widehat{\ell}_{i,n}(w_{n-\down_i'}^g, w_{i,n}) - \widehat{\ell}_{i,n}(w_g^*, w_{i,*}) \right] \\
    &=\order\left(\sqrt{\frac{\left(\|w_*^g\|^2 + \sum_{i=1}^P\|w_{i,*}\|^2 \right)\sigma^{'2}b}{PT}} \right)\\
    &\qquad + \order\left( \frac{(\gamma D^4 G^2\tau^{'2})^{\frac{1}{3}}b^{\frac{2}{3}}}{T^{\frac{2}{3}}}  + \frac{DG\tau' b}{T}\right) 
\end{align*}
where $\tau'=\frac{\tau}{b}+1$ is the delay counted in batches and $\sigma^{'2} = \frac{\sigma^2}{b}$ is the variance of the $\widehat{\ell}_{i,t}(w^g, w_i)$. The left-hand side turns out to be the true average loss of the learner, and the right-hand side is 
\begin{align*}
    &\order\left(\sqrt{\frac{\left(\|w_*^g\|^2 + \sum_{i=1}^P \|w_{i,*}\|^2  \right)\sigma^{2}}{PT}} \right)\\
    &\qquad  + \order\left(\frac{\left(\gamma D^4 G^2 (b+\tau)^2\right)^{\frac{1}{3}}}{T^{\frac{2}{3}}} + \frac{DG(b+\tau)}{T}\right).
\end{align*}
As one can see, the dominant term remains the same order, and the lower-order term is unaffected if $b<\tau$.

\section{Application: Contextual Bandits}
\label{sec: cb}
In this section, we demonstrate a specific application of our federated residual learning algorithms in the contextual bandit (henceforth CB) setting, a framework that is suitable to model recommendation systems and a variety of other online decision making settings.\footnote{See e.g. the ICML tutorial \url{https://hunch.net/~rwil/} and references therein for an overview} We show that our federated learning framework can be directly combined with the regression-based approach for CBs \citep{agarwal2012contextual, foster2018practical}. This enables CB learning to leverage advantage of personalization to individual clients while leveraging joint learning across multiple users as in a fully centralized setting, while prior approaches typically rely only on centralized learning~\citep{agarwal2016making}.

The protocol of the traditional (i.e., with single client) CB problem is as follows: at each round $t$,
\begin{itemize}
    \item Learner receives contexts $x_{t}(a)\in \mathbb{R}^d$ for all actions $a\in [K]$.
    \item Learner predicts an action $a_t\in [K]$.
    \item Learner observes the reward of the chosen action $r_t(a_t)\in[0,1]$.
\end{itemize}
In the regression-based CB setting, the learner has access to a class of regressors, which consists of functions from $\mathbb{R}^d$ to $[0,1]$. We suppose that the regressors are parametrized by $w$, and regressors can be written as $f(\cdot~; w)$. By the \emph{realizability assumption}, there is a regressor parametrized by $w^*$ that realizes the reward:
\begin{align*}
     \E[r_{t}(a)~|~ x_{t}(a)] = f(x_{t}(a); w^*).
\end{align*}
To evaluate the performance of the learner, we define the regret of the learner as
\begin{align*}
\Reg_{\cb} &= \E\left[\frac{1}{T}\sum_{t=1}^T r_t(a_t^*) -  r_t(a_t) \right]  \\
&= \E\left[\sum_{t=1}^T \max_{a\in[K]} f(x_{t}(a); w^*) -  f(x_{t}(a_t); w^*) \right],
\end{align*}
where $a_t^*=\argmax_{a} f(x_{t}(a); w^*)$ is the action chosen by the best regressor.

\paragraph{Federated CB setting.} In the federated CB setting, we assume that the reward for client $i$ can be joint realized by a global model $w^g_*$ and a local model $w_{i,*}$:
\begin{align*}
    \E[r_{i,t}(a)~|~x_{i,t}(a)] = f(x_{i,t}(a); w^g_*, w_{i,*}) \triangleq f_i^\star(x_{i,t}(a)).
\end{align*}
For example, in the residual learning scenario that we focus on in the previous sections, $f(x_{i,t}(a); w^g_*, w_{i,*}) = f(x_{i,t}^g(a); w^g_*) + f(x_{i,t}^l(a); w_{i,*})$,
where $x_{i,t}^g(a)$ and $x_{i,t}^l(a)$ are the global and local contexts (features) of client $i$ that correspond to action $a$ at time $t$. Let $a_{i,t}$ be the action chosen by client $i$ at time $t$. The regret is defined as
\begin{align}
\label{eqn: CB regret}
    &\Reg_{\cb}
    = \E\left[ \frac{1}{PT}\sum_{t=1}^T \sum_{i=1}^P r_{i,t}(a_{i,t}^*) - r_{i,t}(a_{i,t}) \right]\\\nonumber
    &= \E\left[\frac{1}{PT}\sum_{t=1}^T \sum_{i=1}^P \max_{a} f_i^\star(x_{i,t}(a)) - f_i^\star(x_{i,t}(a_{i,t}))  \right].
\end{align}

\subsection{$\epsilon$-greedy with federated regression}
\label{subsec: epsilon-greedy for CB}
Bandit problems are difficult than usual supervised learning problems due to the limited feedback (i.e., the learner only observes the reward of the action she picks in that round). To deal with this lack of information, in every round the $\epsilon$-greedy strategy uses a small probability $\epsilon\in (0,1)$ to randomly pick an action. When the data is i.i.d. across time, an alternative implementation is to perform exploration every $B=\frac{1}{\epsilon}$ rounds (we will use this version to simplify the presentation). The learner uses the data collected from these exploration rounds to update the model parameters $w_t^g, w_{i,t}$; for other rounds, the learner simply chooses actions based on the current parameters. More precisely, on each round of $t=B, 2B, 3B, \ldots$, each client uniformly randomly picks an action from $[K]$ (i.e., $a_{i,t}\sim \text{Uniform}\{[K]\}$), and feeds the following loss to \alg:
\begin{align*}
\ell_{i,t}(w^g, w_i)=\bigg( r_{i,t}(a_{i,t}) - f\left(x_{i,t}(a_{i,t}); w^g, w_{i}\right) \bigg)^2.
\end{align*}
In other rounds, all clients simply choose the following action and do not update the models:
\begin{align}
a_{i,t} = \argmax_{a\in[K]} f\left(x_{i,t}(a); \widehat{w}^g_t, \widehat{w}_{i,t}\right),  \label{eqn: CB choose action}
\end{align}
where $\widehat{w}^g_t, \widehat{w}_{i,t}$ are the global and local models maintained by client $i$ at time $t$ respectively.  The above algorithm has the regret guarantee given by the following theorem.

\begin{theorem}
     \label{theorem: cb}
     With the above algorithm for federated contextual bandits, the regret can be upper bounded as follows:
     \begin{align*}
          &\E\left[ \frac{1}{PT}\sum_{t=1}^T \sum_{i=1}^P r_{i,t}(a_{i,t}^*) - r_{i,t}(a_{i,t}) \right] \\
          &=  \mathcal{O}\Bigg( \left(\frac{K^4\left(\|w_*^g\|^2 + \sum_{i=1}^P \|w_{i,*}\|^2\right)\sigma^2}{PT}\right)^{\frac{1}{5}}  \\
          &\qquad +  \frac{\text{poly}\left(K, D,G,\gamma,\tau \right)}{T^{\frac{1}{4}}} \Bigg).
     \end{align*}
     if $B$ is chosen optimally (see Appendix~\ref{section: proof of Theorem CB} for the precise expression of the lower-order term).
\end{theorem}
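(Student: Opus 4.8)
The plan is to reduce the bandit regret to the regression guarantee of Theorem~\ref{theorem: main for sgd} through the standard $\epsilon$-greedy decomposition, using the realizability assumption to convert excess square loss into square prediction error. First I would split the per-client, per-round regret in \eqref{eqn: CB regret} into the contribution of the exploration rounds ($t\in\{B,2B,3B,\ldots\}$) and that of the exploitation rounds. Since rewards lie in $[0,1]$ and only a $1/B$ fraction of rounds are exploration rounds, the exploration rounds contribute at most $\order(1/B)$ to the average regret no matter what is played. The entire difficulty is therefore in the exploitation rounds.

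On an exploitation round, client $i$ plays the greedy action $\widehat a = \argmax_a f(x_{i,t}(a); \widehat w^g_t, \widehat w_{i,t})$ while the benchmark plays $a^* = \argmax_a f_i^\star(x_{i,t}(a))$. Writing $\widehat f = f(\cdot\,; \widehat w^g_t, \widehat w_{i,t})$ and using optimality of $\widehat a$ for $\widehat f$, the instantaneous regret obeys the familiar argmax-mismatch inequality
\begin{align*}
f_i^\star(x(a^*)) - f_i^\star(x(\widehat a)) \le \bigl|\widehat f(x(a^*)) - f_i^\star(x(a^*))\bigr| + \bigl|\widehat f(x(\widehat a)) - f_i^\star(x(\widehat a))\bigr|,
\end{align*}
so exploitation regret is controlled by the prediction error of $\widehat f$ at (at most two) specific actions. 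The key transfer step is to bound this by the error under the uniform exploration distribution: since each action is explored with probability $1/K$, for any context-dependent single action rule $a(\cdot)$ one has $\E_x\bigl[\,|\widehat f(x(a)) - f_i^\star(x(a))|\,\bigr] \le K\,\E_{x,a\sim\mathrm{Unif}}\bigl[\,|\widehat f(x(a)) - f_i^\star(x(a))|\,\bigr]$, and Jensen's inequality then converts the mean absolute error into $\sqrt{\mathcal{E}}$, where $\mathcal{E} \triangleq \E_{x,a\sim\mathrm{Unif}}[(\widehat f(x(a)) - f_i^\star(x(a)))^2]$ is exactly the quantity the regression algorithm controls. This is the step that upgrades a $K$-free regression rate to an explicit $K$ dependence.

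It remains to bound the average of $\mathcal{E}$ by Theorem~\ref{theorem: main for sgd}. The exploration rounds feed \alg.SGD the square losses $\ell_{i,t}(w^g, w_i) = (r_{i,t}(a_{i,t}) - f(x_{i,t}(a_{i,t}); w^g, w_i))^2$ with $a_{i,t}\sim\mathrm{Unif}[K]$; under realizability $\E[r_{i,t}(a)\mid x_{i,t}(a)] = f_i^\star(x_{i,t}(a))$, so the excess square loss of any model over $(w^g_*, w_{i,*})$ equals its square prediction error, and the left-hand side of Theorem~\ref{theorem: main for sgd} applied to the $M = T/B$ exploration rounds is precisely the average of $\mathcal{E}$. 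Combining the two contributions, averaging over clients and epochs, and pulling the square root outside via Jensen, the average regret is bounded by
\begin{align*}
\order\!\left(\frac{1}{B}\right) + \order\!\left(K\left(\frac{C\,\sigma^2 B}{PT}\right)^{\!\frac14}\right) + (\text{lower-order terms}),
\end{align*}
with $C = \|w^g_*\|^2 + \sum_{i=1}^P\|w_{i,*}\|^2$. Choosing $B$ to balance the first two terms, i.e. $B \asymp K^{-4/5}(PT/(C\sigma^2))^{1/5}$, yields the claimed dominant term $(K^4 C\sigma^2/(PT))^{1/5}$, while propagating the $M^{-2/3}$ and $M^{-1}$ delay terms of Theorem~\ref{theorem: main for sgd} through the square root and this choice of $B$ produces the $\mathrm{poly}(K,D,G,\gamma,\tau)/T^{1/4}$ remainder.

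The main obstacle I anticipate is the bookkeeping that aligns the regression regret \emph{at the models actually used by the SGD updates} (what Theorem~\ref{theorem: main for sgd} delivers, on the delayed pairs $(w^g_{t-\down_i}, w_{i,t})$) with the \emph{prediction error of the exploitation models}, across the epoch structure and the delays. One must argue that reusing a fixed model for the $B-1$ exploitation rounds of an epoch costs only a constant factor after averaging, that the i.i.d.-context assumption is precisely what legitimizes transferring the uniform-exploration error to the fresh exploitation contexts, and that the factor-of-$K$ density-ratio step is what produces the $K^4$ dependence after optimizing $B$. The interaction of the round-trip delay $\tau$ with the epoch length $B$ (analogous to the $\tau' = \tau/b + 1$ rescaling in the mini-batch reduction of Section~\ref{section: mini-batch}) is the other place where care is needed to keep the delay confined to the lower-order term.
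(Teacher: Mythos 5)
Your proposal is correct and follows essentially the same route as the paper's proof: the $\order(1/B)$ exploration cost, the argmax-mismatch inequality, realizability turning excess square loss into squared prediction error, the factor-of-$K$ transfer from the played action to the uniform exploration distribution (the paper writes this as bounding two action-specific errors by $2\sum_{a=1}^K|\cdot|$, which is the same density-ratio step), the i.i.d. transfer to exploitation contexts, Cauchy--Schwarz/Jensen to get the $1/4$-power rate, and the same optimization of $B$ yielding the $K^4$ and $T^{-1/5}$ dominant term with delay confined to the $T^{-1/4}$ remainder. No gaps.
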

The proof of Theorem~\ref{theorem: cb} is give in Appendix~\ref{section: proof of Theorem CB}. We note that the regret of this approach is sub-optimal in its dependence on $T$ when we compare with the best achievable rates in a fully centralized setting, owing to the use of a remarkably simple CB algorithm here for a proof of concept. However, even in this simple case, we observe as before that the delay only affects asymptotically non-dominant terms and furthermore does not influence the choice of the exploration level for the algorithm (as captured in the setting of $B$ in Appendix~\ref{section: proof of Theorem CB} which does not depend on $\tau$). In future work, it would be interesting to study how the optimal UCB-like approaches can be adapted to work in the federated setting through the similar use of federated regression oracles like we have done here for better dependence on $T$.

\section{Experiments}
\label{sec: exp}
To test our algorithms, we create datasets that mimic the federated learning scenario.

\subsection{Dataset generation}
\label{subsec: data generation}
We create binary classification from real multiclass classification datasets provided in LIBSVM Dataset \cite{chang2011libsvm} as follows:
\begin{itemize}
    \item For a multiclass classification dataset with the set of classes being $[K]=\{1,2,\ldots, K\}$, we randomly pick a subset $\calA$ of it. All data samples from $\calA$ are merged as a new class $C_0$.
    \item For each client, its assigned task is a binary classification problem between class $C_0$ and a random class from $[K]\backslash \calA$.
\end{itemize}
As can be seen, different clients face different classification problems which might be related: Suppose Client 1's task is to distinguish $C_0$ from class $A$; Client 2's task is to distinguish $C_0$ from class $B$. When there exists a single hyperplane that saperates $C_0$ from $A$ and $B$ well, then the two clients' task are closely related, although this is not guaranteed in the datasets we generate.

We then assign data to workers so that the following two properties are satisfied:
\begin{enumerate}
    \item Different clients may work on the same task (i.e., the same random class from $[K]\backslash \calA$), but the examples they are assigned to are guaranteed to be disjoint.
    \item The positive and negative examples assigned to each client are roughly balanced.
\end{enumerate}

In order to let the property 1 above hold, each client is assigned at most $\frac{\text{\# examples belonging to $\calA$}}{\text{\# clients}}$ data samples. In order to make this large enough for experimental purpose, $|\calA|$ should not be too small; on the other hand, in order to keep the task diversity of the clients, $K-|\calA|$ should also not be too small. We simply make a balanced choice of $|\calA|=\lfloor 0.3K \rfloor$.

In order to satisfy the two properties, we distribute the data to clients following the procedures below:
\begin{enumerate}
    \item Uniformly randomly distribute the samples of $\calA$ to all clients. Suppose each client receives $N$ samples in this stage. We set an upper bound $N_0$ so that $N\leq N_0$.
    \item Create \emph{buckets} of data samples from $[K]\backslash \calA$. Each bucket contains $N$ single-class samples.
    \item Each client is randomly assigned a bucket.
\end{enumerate}
At the end, each client has $2N$ samples with balanced classes.

In order to maintain the diversity of tasks, we pick from LIBSVM multiclass classification datasets that have no less than $6$ classes.

For the original feature vector of dimension $d$, we randomly make $\frac{d}{2}$ of them the global features and the other $\frac{d}{2}$ the local features.

\subsection{Test algorithms and implementation}
\label{subsec: test algorithms and implementation}
We test and compare three algorithms under the SGD framework:
\begin{enumerate}
    \item \textbf{Independent}: Each client performs individual SGD on their own dataset using the full set of features (i.e., global features plus local features).
    \item \textbf{Central}: The server runs SGD over the aggregated dataset from all clients using global features.
    \item \textbf{\alg}: \alg.SGD with the server learning on global features and the clients learning on local features.
\end{enumerate}
The first two algorithms are our baselines that correspond to fully-local and full-central solutions. We do not make the server learn on local features because in general local features can be differently defined by each client (and not all clients may want to share local features).

We use the linear regression implementation by Vowpal Wabbit (VW) \cite{VW} The VW command
we use for the linear regression model is ``{-}{-}adaptive''.

In all experiments we describe below, we set $N_0$ defined above to be $30$, meaning that each client has at most $60$ data samples. This simulates a regime where each client has relatively few data samples. For each experiment, we run the algorithms for $T=500$ rounds (so a training dataset may train for multiple epochs), and then test the performance on a held-out test dataset. Each number in the figures is an average over $50$ random rollouts.

\subsection{General comparison with the baselines}
We first make a general comparison among three methods.  We test under $P=10$ (in Figure~\ref{fig:10 worker overall}) and $P=50$ (Figure~\ref{fig:50 worker overall}). From the figures, we see that the \alg approach is always a near winner and sometimes greatly outperforms the baselines.

\begin{figure}[t]
    \centering
    \includegraphics[width=0.5\textwidth,trim={0 0pt 0 0},clip]{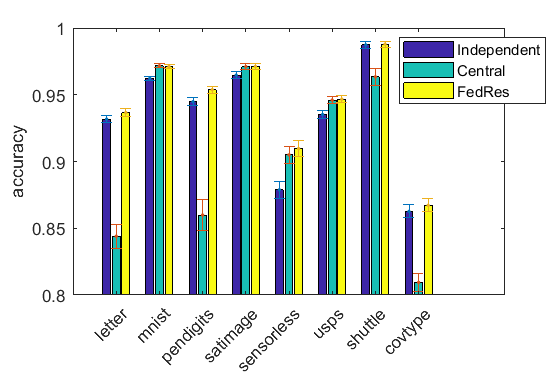}
    \caption{Accuracy of Independent, Central, and \alg approaches with $P=10$ clients in the absence of communication delays.}
    \label{fig:10 worker overall}
\end{figure}

\subsection{Robustness to task similarity}
\label{subsec: experiment for robustness}
We can observe from Figure~\ref{fig:50 worker overall} and~\ref{fig:10 worker overall} that there are two types of datasets: those for which Independent outperforms Central (letter, pendigits, shuttle, covtype), and those for which Central outperforms Independent (mnist, satimage, sensorless, usps). Intuitively, Central should outperform Independent when the tasks for different clients are similar, and on the contrary, Independent should outperform Central when the tasks are different in general (so aggregating the data hurts the performance). The former is the case when federated learning has benefits over independent client-side training. One can foresee that in this case, when the number of clients increases, the overall performance should improve because each client benefits from the effectively increased number of data samples. We indeed observe this phenomenon in Figure~\ref{fig: the performance with number of clients group 1}, where we plot the performance on the sensorless and mnist dataset. On the other hand, for datasets like letter and pendigits, where Independent performs better than Central, the performance of federated learning should improve little with the number of clients. This can also be observed from Figure~\ref{fig: the performance with number of clients group 2}, where we plot for letter and pendigits.

In all the experiments, \alg is always comparable with the best of Independent and Central, we can conclude that \alg is \emph{robust} to task similarity. That is, when the data distributions are similar across clients, the global model in our algorithm will take effect and bring the benefits of joint training; when the tasks are not similar, in which case using the global model might be harmful, our local model still keeps the performance of independent training.

\begin{figure}[t]
\subfigure[sensorless]{
   \includegraphics[width=8cm]{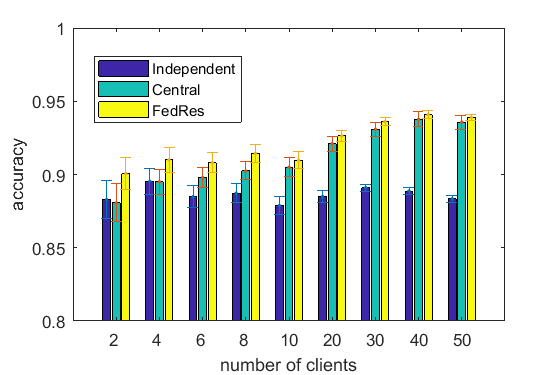}
 }\hfill
\subfigure[mnist]{
   \includegraphics[width=8cm]{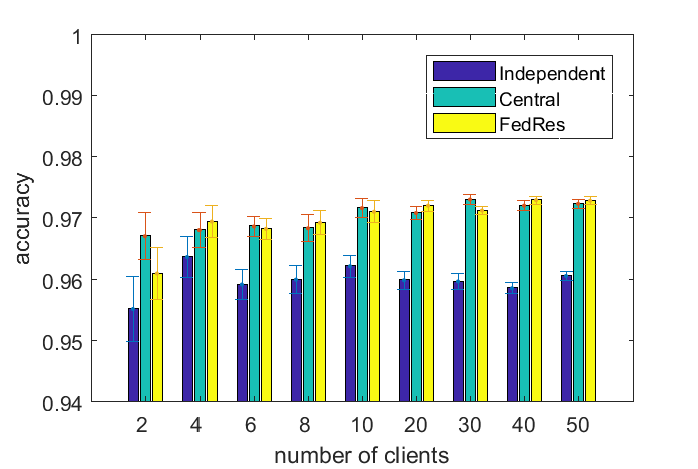}
 }\hfill

\caption{Test accuracy versus the numbers of clients for sensorless and mnist datasets. In these experiments, we let the delay be zero. Each data point is an average over $50$ random trials. }
 \label{fig: the performance with number of clients group 1}
\end{figure}

\begin{figure}[t]
\subfigure[letter]{
   \includegraphics[width=8cm]{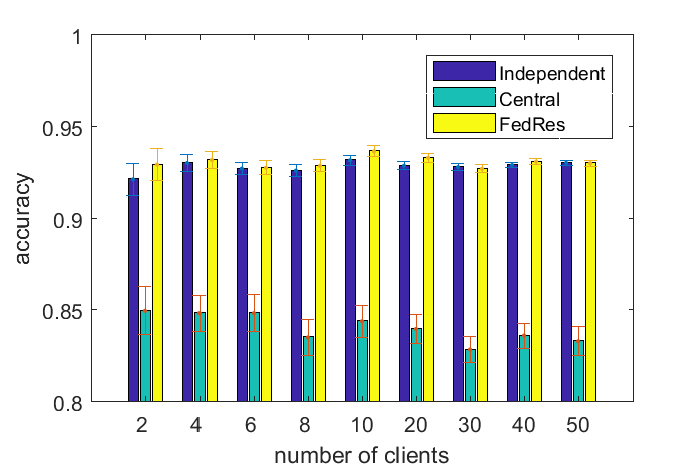}
 }\hfill
\subfigure[pendigits]{
   \includegraphics[width=8cm]{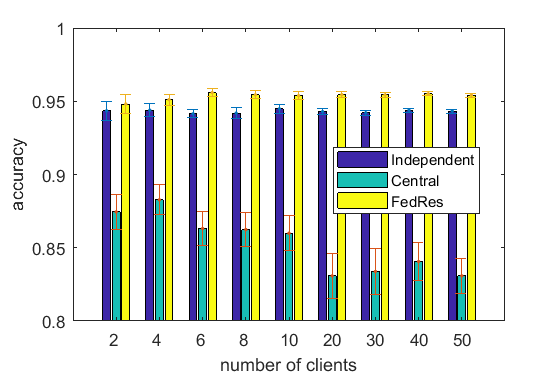}
 }\hfill

\caption{Test accuracy versus the numbers of clients for letter and pendigits datasets. In these experiments, we let the delay be zero. Each data point is an average over $50$ random trials. }
 \label{fig: the performance with number of clients group 2}
\end{figure}

\subsection{Effect of delay}
\label{subsec: effect of delay}
We also empirically test the effect of delay on the performance of the system in Figure~\ref{fig: plot_of_delay_main} which shows a modest degradation in performance with delay in two of our datasets. For more experimental results on the effect of delay and comparison with baseline algorithms, please see Appendix~\ref{sec:exp-details}.
\begin{figure}[t]
\subfigure[letter]{
   \includegraphics[width=8cm]{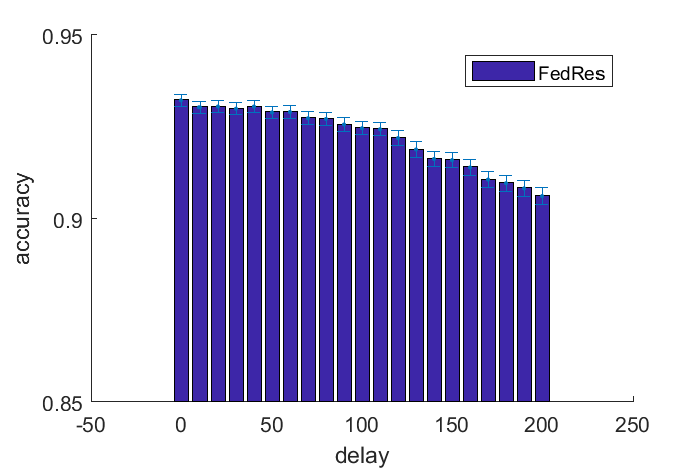}
 }\hfill
\subfigure[pendigits]{
   \includegraphics[width=8cm]{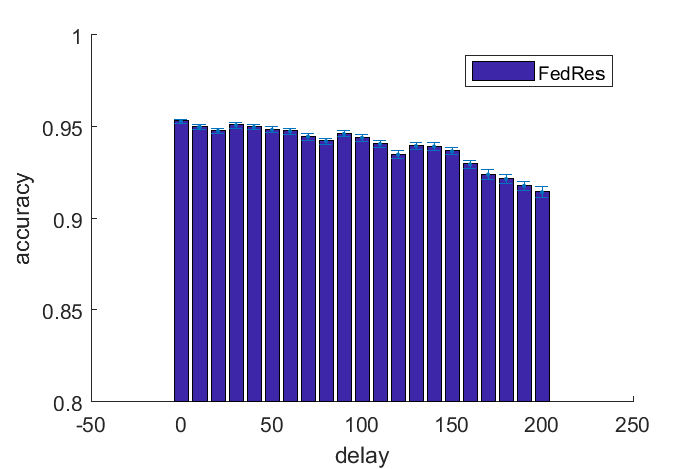}
 }\hfill

    \caption{Training on \textit{letter} and \textit{pendigits} with varying delays. Test accuracy versus delay for the letter dataset. We let the number of clients be $50$. Each data point is an average over $50$ random trials.}
    \label{fig: plot_of_delay_main}
\end{figure}
\section{Conclusion}
We proposed a new framework of federated learning in which simple extensions of ERM and SGD-style algorithms enable personalization in an efficient manner, both theoretically and empirically. While personalization was the primary goal here, only sharing local predictions to the server has useful consequences for privacy as well.

\bibliography{fedres}
\bibliographystyle{icml2020}

\appendix
\onecolumn
{\LARGE \textbf{Appendix}}\\
\ \\
We include the following items in the appendix:
\begin{center}
\begin{itemize}
     \item[A.] The proof of Theorem~\ref{theorem: main for erm} for the \alg.ERM algorithm
     \item[B.] The proof of Theorem~\ref{theorem: main for sgd} for the \alg.SGD algorithm
     \item[C.] Explaining the failure of the fictitious-play strategy described in Eq.~\ref{eqn: fp client} and~\ref{eqn: fp server} with simulation results
     \item[D.] The proof of Theorem~\ref{theorem: cb} for federated contextual bandits
     \item[E.] More experimental results that complement Section~\ref{sec: exp}
\end{itemize}
\end{center}
Specifically, in Section~\ref{subsec: effect of number of clients}, we provide the results of ``accuracy versus number of clients'' for the omitted datasets in Section~\ref{subsec: experiment for robustness}. In Section~\ref{subsec: effect of delay in appendix}, we conduct more extensive experiments on the effect of delays, and compare different schemes, making Section~\ref{subsec: effect of delay} more complete. In Section~\ref{subsec: combined plots}, we provide more ``accuracy versus number of clients'' plots under different amounts of delay. In Section~\ref{app exp conclusion}, we give a short conclusion for what we observe from the experiments.

\section{Proofs for Theorem~\ref{theorem: main for erm} (\alg.ERM algorithm)}
We define several notations to be used in the proofs.
\begin{definition}
    For any $w^g, w^l$,
    \begin{align*}
        L_i(w^g, w^l) &\triangleq \E\left[ \ell_{i,t}(w^g, w^l) \right] \\
        (w^g_*, w_{1,*}, \ldots, w_{P,*}) &\triangleq \argmin_{w^g, w_1,\ldots, w_P} \sum_{i=1}^P L_i(w^g, w_i) \\
        \hatL_{i,t}(w^g, w^l) &\triangleq \frac{1}{t-1}\sum_{s=1}^{t-1} \ell_{i,s}(w^g, w^l) \\
        \Delta_{i,t}(w^g, w^l) &\triangleq \hatL_{i,t}(w^g, w^l) - \hatL_{i,t}(w^g_*, w_{i,*}).
    \end{align*}
\end{definition}

\begin{definition}
    \label{definition: d sigma}
    Define
    \begin{align*}
        \overline{\sigma} &= \sqrt{\frac{1}{P}\sum_{i=1}^P \sigma_i^2}, \\
        \overline{d} &= \frac{1}{P}\left(d+\sum_{i=1}^P d_i\right),
    \end{align*}
    where $\sigma_i$ is an upper bound for the variance of $\ell_{i,t}(w^g, w_i)$ for any $w^g, w_i$, and $d, d_1, \ldots, d_P$ are the dimensions of $w^g, w_1, \ldots, w_P$ respectively.
\end{definition}

First, we bound the difference between $\sum_{i=1}^P \widehat{L}_{i,t}$ and $\sum_{i=1}^P L_{i}$.
\begin{lemma}
\label{lemma: Lhat and L}
Suppose $D\leq T$. With probability $1-\frac{1}{T}$, the following holds for all $t$ and all $(w^g, w_1, \ldots, w_P)$:
\begin{align*}
    \Bigg\vert \sum_{i=1}^P \widehat{L}_{i,t}(w^g, w_i) - \sum_{i=1}^P L_i(w^g, w_i) \Bigg\vert = \order\left( P\cdot\sqrt{\frac{\overline{\sigma}^2 \overline{d}\log T}{t-1}} + P^2\cdot  \frac{\overline{d}\log T}{t-1}  \right).
\end{align*}
\end{lemma}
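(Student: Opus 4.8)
The plan is to prove a uniform Bernstein-type concentration bound by a single covering argument over the \emph{joint} parameter vector $\theta = (w^g, w_1, \ldots, w_P)$, rather than concentrating each client separately and then summing (the latter discards the independence across clients and would inflate the leading factor from $P$ to $P^{3/2}$). First I would fix $\theta$ and write
$$\sum_{i=1}^P \left(\widehat{L}_{i,t}(w^g,w_i) - L_i(w^g,w_i)\right) = \frac{1}{t-1}\sum_{i=1}^P\sum_{s=1}^{t-1} X_{i,s}(\theta), \qquad X_{i,s}(\theta) \triangleq \ell_{i,s}(w^g,w_i) - L_i(w^g,w_i).$$
Because the clients' data streams are mutually independent and each client's samples are i.i.d.\ across $s$, the $P(t-1)$ variables $X_{i,s}$ are independent, mean-zero, bounded in $[-1,1]$ (losses lie in $[0,1]$), with $\Var[X_{i,s}]\le \sigma_i^2$, so the total variance is at most $(t-1)\sum_i\sigma_i^2 = (t-1)P\overline{\sigma}^2$. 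Bernstein's inequality then gives, for this fixed $\theta$, that with probability $1-\delta$
$$\Bigg| \sum_{i=1}^P \left(\widehat{L}_{i,t} - L_i\right) \Bigg| = \order\!\left(\sqrt{\frac{P\overline{\sigma}^2\log(1/\delta)}{t-1}} + \frac{\log(1/\delta)}{t-1}\right).$$

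Next I would make this uniform over $\theta$ by a standard $\epsilon$-net argument. The joint ball $\{\|w^g\|\le D,\ \|w_i\|\le D\ \forall i\}$ has dimension $d+\sum_i d_i = P\overline{d}$, hence admits an $\epsilon$-net of log-cardinality $\order(P\overline{d}\log(D/\epsilon))$. The gradient bound $\|\nabla \ell_{i,s}\|\le G$ makes each $\ell_{i,s}$, and therefore each $L_i$, $G$-Lipschitz; moving $\theta$ to its nearest net point changes the quantity of interest by at most $2PG\epsilon$, so taking $\epsilon = 1/(PGT)$ makes the discretization error $\order(1/T)$ while keeping $\log(D/\epsilon)=\order(\log T)$ (here the hypothesis $D\le T$ guarantees the logarithm is genuinely $\order(\log T)$). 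A union bound over the net and over the $T$ candidate values of $t$ then sets the effective confidence to $\log(1/\delta)=\order(P\overline{d}\log T)$ at total failure probability at most $1/T$.

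Substituting $\log(1/\delta)=\order(P\overline{d}\log T)$ into the per-$\theta$ Bernstein bound yields
$$\order\!\left(\sqrt{\frac{P\overline{\sigma}^2\cdot P\overline{d}\log T}{t-1}} + \frac{P\overline{d}\log T}{t-1}\right) = \order\!\left(P\sqrt{\frac{\overline{\sigma}^2\overline{d}\log T}{t-1}} + \frac{P\overline{d}\log T}{t-1}\right),$$
and since $P\overline{d}\le P^2\overline{d}$ this is dominated by the claimed expression. The degenerate case $t=1$, where $\widehat{L}_{i,t}$ is an empty average, is handled trivially using $\ell\in[0,1]$.

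I expect the main obstacle to be the bookkeeping of the covering step: one must run the concentration on the \emph{aggregated} sum, so that the variance proxy is $P\overline{\sigma}^2$ and the complexity term is the joint dimension $P\overline{d}$, because applying Bernstein per client and summing the suprema would replace the clean leading term $P\sqrt{\overline{\sigma}^2\overline{d}}$ by a strictly worse $P^{3/2}$ factor. Matching the stated leading order therefore hinges on coupling a single joint cover with a single aggregate Bernstein bound, and on verifying that the Lipschitz discretization and the $D\le T$ assumption keep all logarithmic factors at $\order(\log T)$.
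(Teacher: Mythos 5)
Your proposal is correct and follows essentially the same route as the paper's proof: a single Bernstein concentration bound on the aggregated sum, exploiting independence across clients so that the variance proxy is $P\overline{\sigma}^2$, combined with a cover of the joint parameter space of dimension $d+\sum_{i=1}^P d_i = P\overline{d}$ and union bounds over the cover and over $t$. The only cosmetic differences are that you apply Bernstein to all $P(t-1)$ individual summands rather than to the $t-1$ per-round aggregates of range $[0,P]$ (which in fact yields the slightly sharper lower-order term $P\overline{d}\log T/(t-1)$ in place of the paper's $P^2\overline{d}\log T/(t-1)$, both within the stated bound), and that you use an $\epsilon$-net with explicit Lipschitz control via $G$ where the paper discretizes each coordinate into $T^2$ grid values.
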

\begin{proof}
    We use Bernstein's inequality on the discretized space of $(w^g, w_1, \ldots, w_P)$. Recall that $(w^g, w_1, \ldots, w_P)\in \mathbb{R}^d\times \mathbb{R}^{d_1} \times \cdots \mathbb{R}^{d_P}$. We discretize each dimension into $T^2$ values, and so the total number of discretization points is $\left(T^2\right)^{d+\sum_{i=1}^P d_i}$. Suppose the nearest discretization point to $(w^g, w_1, \ldots, w_P)$ is $(\widehat{w}^g, \widehat{w}_1, \ldots, \widehat{w}_P)$. By Bernstein's inequality, with probability at least $1-\frac{1}{T^2}$ the following holds for all discretization points:
    \begin{align}
        &\left\lvert \sum_{i=1}^P \widehat{L}_{i,t}(\widehat{w}^g, \widehat{w}_i) -\sum_{i=1}^P L_i(\widehat{w}^g, \widehat{w}_i)\right\rvert \nonumber \\
        &= \left\lvert \sum_{i=1}^P \frac{1}{t-1}\sum_{s=1}^{t-1} \ell_{i,t}(\widehat{w}^g, \widehat{w}_i) -\sum_{i=1}^P L_i(\widehat{w}^g, \widehat{w}_i)\right\rvert \nonumber \\
        &= \order\left( \sqrt{\frac{\left(\sum_{i=1}^P \sigma_i^2\right)\left(d+\sum_{i=1}^P d_i\right)\log T}{t-1}} + \frac{P\left(d+\sum_{i=1}^P d_i\right)\log T}{t-1}  \right) \tag{$\ell_{i,t}\in[0,1]$} \\
        &= \order\left( P\cdot\sqrt{\frac{\overline{\sigma}^2 \overline{d}\log T}{t-1}} + P^2\cdot \frac{\overline{d}\log T}{t-1}  \right). \label{eqn: concentrated1}
     \end{align}
     The first equality comes from the fact that all clients generate data independently, so the variance of $\sum_{i=1}^P \ell_{i,s}(\widehat{w}^g, \widehat{w}_{i})$ is upper bounded by $\sum_{i=1}^P \sigma_i^2$. The $(d+\sum_{i=1}^Pd_i)\log T$ factor comes from $\log\left((T^2)^{d+\sum_{i=1}^P d_i} \right)$.
    Since the distance between $(\widehat{w}^g, \widehat{w}_1, \ldots, \widehat{w}_P)$ and $(w^g, w_1, \ldots, w_P)$ is no more than $\frac{D}{T^2}$ in each dimension, the above implies that
    \begin{align}
        &\left\lvert \sum_{i=1}^P \frac{1}{t-1}\sum_{s=1}^{t-1}\ell_{i,s}(w^g, w_i) -\sum_{i=1}^P L_i(w^g,w_i)\right\rvert \nonumber
        =\order\left( P\cdot\sqrt{\frac{\overline{\sigma}^2 \overline{d}\log T}{t-1}} + P^2\cdot \frac{\overline{d}\log T}{t-1} + \frac{PD\overline{d}}{T^2} \right)
    \end{align}
    holds with probability $1-\frac{1}{T^2}$ for all $w^g, w_i$. Using a union bound over $t$ finishes the proof.
\end{proof}

Next, we state a lemma that is useful for showing the convergence of alternating minimization, which is adapted from the analysis in \cite{beck2015convergence}.

\begin{lemma}
    \label{lemma: alternating key lemma}
    Let $\ell(u,v)$ be a $\gamma$-smooth joint convex function of $u$ and $v$, and $\Omega_u$, $\Omega_v$ are convex feasible sets of $u$, $v$ respectively. Now fix $u=u_0$, and let $v_0 = \argmin_{v\in\Omega_v}\ell(u_0, v)$. Suppose $\sup_{u\in\Omega_u}\|u\|\leq D$ and $\ell(u,v)\in[0,R]$ for any $u,v$. Then
    \begin{align*}
        \min_{u\in\Omega_u}\ell(u,v_0)\leq \ell(u_0, v_0) - \frac{1}{18\gamma D^2 + 2R}\left[\ell(u_0, v_0) - \ell(u_*,v_*)\right]_+^2.
    \end{align*}
    for any $u_*\in\Omega_u, v_*\in\Omega_v$.
\end{lemma}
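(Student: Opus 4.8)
The plan is to read this as the standard one-step descent guarantee underlying alternating minimization: performing a full minimization over $u$ while freezing $v=v_0$ must decrease the objective by an amount quadratic in the current suboptimality. Write $F \triangleq \ell(u_0,v_0) - \ell(u_*,v_*)$ for the current gap. If $F \le 0$ the claim is immediate, since $\min_{u\in\Omega_u}\ell(u,v_0)\le \ell(u_0,v_0)$ and $[F]_+ = 0$; so I would assume $F>0$ throughout.

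The first key step is to convert the gap $F$ into a statement about the $u$-gradient at $(u_0,v_0)$. Because $v_0 = \argmin_{v\in\Omega_v}\ell(u_0,v)$, the first-order optimality condition gives $\nabla_v \ell(u_0,v_0)\cdot(v_*-v_0)\ge 0$. Combining this with the joint-convexity lower bound
$$\ell(u_*,v_*)\ge \ell(u_0,v_0) + \nabla_u\ell(u_0,v_0)\cdot(u_*-u_0) + \nabla_v\ell(u_0,v_0)\cdot(v_*-v_0)$$
lets me drop the nonnegative $v$-term and conclude $\nabla_u\ell(u_0,v_0)\cdot(u_*-u_0)\le -F$. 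This is where optimality of $v_0$ does the essential work: it removes the coupling through $v$, so the whole gap becomes ``visible'' to a move in $u$.

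Next I would exploit feasibility of the segment toward $u_*$. Since $\Omega_u$ is convex and $u_0,u_*\in\Omega_u$, the point $u_\theta\triangleq u_0 + \theta(u_*-u_0)$ lies in $\Omega_u$ for every $\theta\in[0,1]$, hence $\min_{u\in\Omega_u}\ell(u,v_0)\le \ell(u_\theta,v_0)$. Applying joint $\gamma$-smoothness between $(u_\theta,v_0)$ and $(u_0,v_0)$ (the $v$-coordinate is unchanged, so only the $u$-displacement enters the quadratic term) and using the previous inequality together with $\|u_*-u_0\|\le 2D$ yields
$$\ell(u_\theta,v_0)\le \ell(u_0,v_0) - \theta F + 2\gamma\theta^2 D^2,$$
after which it remains only to choose $\theta\in[0,1]$ to minimize the right-hand side.

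The one real subtlety is the constraint $\theta\le 1$: the unconstrained quadratic optimizer $\theta=F/(4\gamma D^2)$ can exceed $1$ when $F$ is large, forcing a case split. I would sidestep this by taking $\theta = F/(4\gamma D^2 + F)\in(0,1)$, which substitutes in to give a guaranteed decrease of order $F^2/(8\gamma D^2 + 2F)$; finally, using $F = \ell(u_0,v_0)-\ell(u_*,v_*)\le R$ (since $\ell\in[0,R]$) to replace $2F$ by $2R$ in the denominator produces a bound of exactly the claimed form $\ell(u_0,v_0) - \frac{1}{18\gamma D^2 + 2R}[F]_+^2$, the constant $18$ being a loose-but-convenient consolidation of the $8\gamma D^2$ actually obtained. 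The main thing to get right is not heavy calculation but the bookkeeping of the two inequalities—optimality of $v_0$ and joint smoothness—and checking that the chosen $\theta$ keeps $u_\theta$ feasible.
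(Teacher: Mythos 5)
Your proof is correct, and it takes a genuinely different route from the paper's. The paper follows Beck's prox-based template: it constructs an auxiliary point $u_1$ as a projected gradient step with step size $1/\gamma$, uses smoothness plus the variational optimality of $u_1$ to show a descent of $\frac{\gamma}{2}\|u_1-u_0\|^2$, then separately lower-bounds $\|u_1-u_0\|$ in terms of the gap measured at the joint minimizer $(u_{\min},v_{\min})$, and finally needs a difference-of-squares rearrangement (this is where $\ell \le R$ enters, converting the gap at $\min_u\ell(u,v_0)$ into the gap at $\ell(u_0,v_0)$ and producing the denominator $18\gamma D^2+2R$). You instead take a conditional-gradient (Frank--Wolfe) style step along the feasible segment $u_\theta=u_0+\theta(u_*-u_0)$: joint convexity plus the first-order optimality of $v_0$ (the same two ingredients the paper uses, but applied with comparator $v_*$ rather than $v_{\min}$) give $\nabla_u\ell(u_0,v_0)\cdot(u_*-u_0)\le -F$, smoothness along the segment gives $\ell(u_\theta,v_0)\le\ell(u_0,v_0)-\theta F+2\gamma\theta^2D^2$, and your choice $\theta=F/(4\gamma D^2+F)$ cleanly avoids the case split at $\theta=1$. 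I verified the resulting decrease: it equals $F^2(2\gamma D^2+F)/(4\gamma D^2+F)^2\ge F^2/(8\gamma D^2+2F)$, and with $F\le R$ this dominates $F^2/(18\gamma D^2+2R)$, so the claimed bound follows. Your argument is shorter, avoids the auxiliary prox point and the rearrangement step entirely, needs $R$ only through the trivial bound $F\le R$, and actually yields a strictly sharper constant ($8\gamma D^2+2F$ in place of $18\gamma D^2+2R$); the paper's version buys nothing extra here beyond fidelity to the template of \cite{beck2015convergence} from which it is adapted.
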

\begin{proof}
    Define
    \begin{align*}
        u_1 = \argmin_{u \in \Omega_u }  \Big\| u - u_0 + \frac{1}{\gamma} \nabla_u \ell(u_0,v_0) \Big\|^2.
    \end{align*}


    By the smoothness of $\ell$, we have
    \begin{align}
        \ell(u_1, v_0)\leq \ell(u_0, v_0) + \nabla_u\ell(u_0, v_0)^\top (u_1-u_0) + \frac{\gamma}{2}\|u_1-u_0\|^2.    \label{eqn: alternating min tmp}
    \end{align}
    By the optimality of $u_1$, we have
    \begin{align}
        \left(u_1 - u_0 + \frac{1}{\gamma}\nabla_u\ell(u_0,v_0)\right)^\top \left(u'-u_1\right)\geq 0   \label{eqn: u eta optimality}
    \end{align}
    for all $u'\in\Omega_u$.

    Specially, by invoking \eqref{eqn: u eta optimality} with $u'=u_0$, we can further upper bound the right-hand side of \eqref{eqn: alternating min tmp} by
    \begin{align}
        \ell(u_0, v_0) -\gamma\|u_1-u_0\|^2 + \frac{\gamma}{2}\|u_1-u_0\|^2 \leq \ell(u_0, v_0) -\frac{\gamma}{2}\|u_1-u_0\|^2. \label{eqn: alternating min tmp 2}
    \end{align}
    Below we further lower bound $\|u_1-u_0\|^2$. Since $v_0$ is the minimizer of $\ell(u_0, \cdot)$ in $\Omega_v$, we have
    \begin{align}
        \nabla_v\ell(u_0, v_0)^\top (v'-v_0)\geq 0 \label{eqn: v 0 optimality}
    \end{align}
    for all $v'\in\Omega_v$.

    Define $(u_{\min}, v_{\min})=\argmin_{u\in\Omega_u, v\in\Omega_v}\ell(u,v)$.  With the above ingredients, we can bound
    \begin{align*}
        &\min_{u\in\Omega_u}\ell(u, v_0) - \ell(u_{\min}, v_{\min})\\
        &\leq \ell(u_1, v_0) - \ell(u_{\min}, v_{\min})\\
        &\leq \ell(u_0, v_0) - \ell(u_{\min}, v_{\min}) + \nabla_u\ell(u_0, v_0)^\top (u_1-u_0) + \frac{\gamma}{2}\|u_1-u_0\|^2.  \tag{by \eqref{eqn: alternating min tmp}} \\
        &\leq \nabla_u\ell(u_0, v_0)^\top (u_0-u_{\min}) + \nabla_v\ell(u_0, v_0)^\top (v_0-v_{\min}) + \nabla_u\ell(u_0, v_0)^\top (u_1-u_0) + \frac{\gamma}{2}\|u_1-u_0\|^2  \tag{by the convexity of $\ell$} \\
        &\leq \nabla_u\ell(u_0, v_0)^\top (u_1-u_{\min}) +  \frac{\gamma}{2}\|u_1-u_0\|^2 \tag{using \eqref{eqn: v 0 optimality} with $v'=v_{\min}$} \\
        &\leq \gamma(u_1-u_0)^\top (u_{\min}-u_1) + \frac{\gamma}{2}\|u_1-u_0\|^2  \tag{using \eqref{eqn: u eta optimality} with $u'=u_{\min}$}\\
        &\leq 2\gamma\|u_1-u_0\| D + \gamma\|u_1-u_0\|D \\
        &\leq 3\gamma\|u_1-u_0\|D,
    \end{align*}
    which implies
    \begin{align*}
        \|u_1-u_0\|^2 \geq \frac{1}{9\gamma^2D^2} \left[\min_{u\in\Omega_u}\ell(u, v_0) - \ell(u_{\min},v_{\min})\right]^2.
    \end{align*}
    Combine this with \eqref{eqn: alternating min tmp}, \eqref{eqn: alternating min tmp 2}, and using the fact $\min_{u\in\Omega_u}\ell(u,v_0)\leq \ell(u_1,v_0)$, we get
    \begin{align*}
        \min_{u\in\Omega_u}\ell(u,v_0)
        &\leq \ell(u_0, v_0) - \frac{1}{18\gamma D^2}\left[\min_{u\in\Omega_u}\ell(u, v_0) - \ell(u_{\min},v_{\min})\right]^2\\
        &= \ell(u_0, v_0) - \frac{1}{18\gamma D^2}\left[\ell(u_0, v_0) - \ell(u_{\min},v_{\min})\right]^2 \\
        &\qquad \qquad + \frac{1}{18\gamma D^2}\left(\ell(u_0,v_0)-\min_{u\in\Omega_u}(u,v_0)\right)\left(\ell(u_0,v_0)+\min_{u\in\Omega_u}(u,v_0) - 2\ell(u_{\min},v_{\min})\right) \\
        &\leq \ell(u_0, v_0) - \frac{1}{18\gamma D^2}\left[\ell(u_0, v_0) - \ell(u_{\min},v_{\min})\right]^2
        + \frac{R}{9\gamma D^2}\left(\ell(u_0,v_0)-\min_{u\in\Omega_u}\ell(u,v_0)\right).
    \end{align*}
    Rearranging this gives
    \begin{align*}
        \min_{u\in\Omega_u}\ell(u,v_0)
        &\leq \ell(u_0, v_0) - \frac{1}{18\gamma D^2 \left(1+\frac{R}{9\gamma D^2}\right)}\left[\ell(u_0, v_0) - \ell(u_{\min},v_{\min})\right]^2 \\
        &=\ell(u_0, v_0) - \frac{1}{18\gamma D^2 +2R}\left[\ell(u_0, v_0) - \ell(u_{\min},v_{\min})\right]^2 \\
        &\leq \ell(u_0, v_0) - \frac{1}{18\gamma D^2 +2R}\left[\ell(u_0, v_0) - \ell(u_*,v_*)\right]_+^2.
    \end{align*}
\end{proof}

\begin{lemma}
    \label{lemma: erm component 1}
    Let $\delta_{t,t'}$ denote $\frac{1}{P}\sum_{i=1}^P \Delta_{i,t'}(w^g_t, w_{i,t'})$. Then
    \alg.ERM (Algorithm~\ref{alg: ERM client} and~\ref{alg: ERM server}) ensures that for any $t>\tau+1$,
    \begin{align*}
        \delta_{t,t-\up}\leq \delta_{t-\down-\up, t-\up} - \frac{1}{18\gamma D^2+2}\left[\delta_{t-\down-\up, t-\up}\right]_+^2.
    \end{align*}
\end{lemma}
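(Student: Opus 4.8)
The plan is to recognize one client update followed by one server update as a single alternating-minimization step and to invoke Lemma~\ref{lemma: alternating key lemma} with the global parameter $w^g$ playing the role of $u$ and the stacked local parameters $(w_1,\ldots,w_P)$ playing the role of $v$. Fixing the time index $t-\up$ throughout, I would work with the averaged empirical loss
\begin{align*}
  L(w^g, (w_1,\ldots,w_P)) \triangleq \frac{1}{P}\sum_{i=1}^P \hatL_{i,t-\up}(w^g, w_i),
\end{align*}
which, as an average of the per-client empirical losses, is jointly convex, $\gamma$-smooth, and takes values in $[0,1]$. These are exactly the hypotheses of Lemma~\ref{lemma: alternating key lemma} with range $R=1$, so the denominator $18\gamma D^2+2R$ specializes to $18\gamma D^2+2$, matching the constant in the claim. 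The point of fixing the window $t-\up$ is that all three quantities $\hatL_{i,t-\up}(\cdot,\cdot)$ use the same empirical average, which will make the subtraction of the optimal-model term clean.

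First I would identify the two model pairs feeding into the lemma. Taking $u_0 = w^g_{t-\down-\up}$, the client update rule~\eqref{eqn: client update rule erm} read at round $t-\up$ sets each $w_{i,t-\up}$ to minimize $\hatL_{i,t-\up}(w^g_{t-\down-\up}, \cdot)$; since $L(u_0,\cdot)$ separates across clients, the stacked vector $v_0 = (w_{1,t-\up},\ldots,w_{P,t-\up})$ is exactly $\argmin_v L(u_0,v)$, realizing the hypothesis $v_0 = \argmin_v \ell(u_0,v)$. Dually, the server update rule~\eqref{eqn: server update rule erm} produces the global model that minimizes $L(\cdot, v_0)$, so that $\min_u L(u,v_0) = L(w^g_t, v_0)$, realizing the outer minimization appearing in the conclusion of the lemma.

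With this correspondence, Lemma~\ref{lemma: alternating key lemma} applied with $u_*=w^g_*$ and $v_*=(w_{1,*},\ldots,w_{P,*})$ yields
\begin{align*}
  \min_u L(u, v_0) \leq L(u_0, v_0) - \frac{1}{18\gamma D^2+2}\left[L(u_0,v_0) - L(u_*,v_*)\right]_+^2 .
\end{align*}
It then remains to translate into the $\delta$ notation: subtracting the common constant $L(u_*,v_*)=\frac{1}{P}\sum_i \hatL_{i,t-\up}(w^g_*,w_{i,*})$ turns $L(u_0,v_0)-L(u_*,v_*)$ into $\delta_{t-\down-\up,\,t-\up}$ and $\min_u L(u,v_0)-L(u_*,v_*)$ into $\delta_{t,\,t-\up}$, which reproduces the stated inequality verbatim.

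The hard part will be the delay bookkeeping rather than any analytic difficulty. One must verify that the global model inside the client's minimization is precisely $w^g_{t-\down-\up}$ and that the global model minimizing $L(\cdot,v_0)$ is precisely $w^g_t$, tracking the uplink delay $\up$ and downlink delay $\down$ through both update rules; this is also where the hypothesis $t>\tau+1$ enters, since it guarantees that $t-\up\geq 1$ and $t-\down-\up\geq 1$ so that every $\hatL_{i,t-\up}$ is well defined. The most delicate point is confirming that the local models entering the server's minimization are \emph{exactly} the $(w_{i,t-\up})_i$ appearing in $\delta_{t,\,t-\up}$, so that the outer $\min_u$ in the lemma is identified with $L(w^g_t, v_0)$ and not with the loss at a neighboring local iterate; establishing this exact alignment, which the delayed update structure is designed to enforce, is the crux of the argument.
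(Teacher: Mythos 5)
Your proposal is correct and is essentially identical to the paper's own proof: both apply Lemma~\ref{lemma: alternating key lemma} with $R=1$ to the averaged empirical loss $\frac{1}{P}\sum_{i=1}^P \hatL_{i,t-\up}(\cdot,\cdot)$, taking $u_0=w^g_{t-\down-\up}$, $v_0=(w_{1,t-\up},\ldots,w_{P,t-\up})$ (the joint minimizer by the client rule~\eqref{eqn: client update rule erm}), identifying $\min_u$ with evaluation at $w^g_t$ via the server rule~\eqref{eqn: server update rule erm}, and subtracting $\frac{1}{P}\sum_i \hatL_{i,t-\up}(w^g_*,w_{i,*})$ to pass to the $\delta$ notation. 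Your added checks (the $\frac{1}{P}$-average preserves $\gamma$-smoothness, and the index alignment of the server's local models) are exactly the points the paper's proof uses implicitly.
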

\begin{proof}
    By Algorithm~\ref{alg: ERM client}, $w_{i, t-\up}$ minimizes $\hatL_{i,t-\up}(w_{t-\down-\up}, \cdot)$, and therefore, $\left(w_{1,t-\up}, w_{2,t-\up}, \ldots, w_{P,t-\up}\right)$ jointly minimizes $\sum_{i=1}^P \hatL_{i, t-\up}(w^g_{t-\down-\up}, \cdot)$. Using Lemma~\ref{lemma: alternating key lemma} with $R=1$, we get
    \begin{align*}
        \frac{1}{P}\sum_{i=1}^P \hatL_{i,t-\up}(w^g_t, w_{i, t-\up}) \leq  \frac{1}{P}\sum_{i=1}^P \hatL_{i,t-\up}(w^g_{t-\down-\up}, w_{i, t-\up}) - \frac{1}{18\gamma D^2 + 2} \left[ \frac{1}{P}\sum_{i=1}^P  \Delta_{i,t-\up}(w_{t-\down-\up}^g, w_{i,t-\up})  \right]_+^2.
    \end{align*}
    Subtracting both sides with $\frac{1}{P}\sum_{i=1}^P \hatL_{i,t-\up}(w^g_*, w_{i, *})$ finishes the proof.
\end{proof}

\begin{lemma}
    \label{lemma: erm component 2}
     \alg.ERM (Algorithm~\ref{alg: ERM client} and~\ref{alg: ERM server}) ensures that for any $t>\tau+1$,
     \begin{align*}
         \delta_{t-\down, t}\leq \delta_{t-\down, t-\down-\up} + \order\left( \frac{\tau}{t}\sqrt{\frac{\overline{\sigma}^2 \overline{d}}{t-\tau}} + P\cdot \frac{\tau\overline{d}}{t(t-\tau)}  \right)
     \end{align*}


\end{lemma}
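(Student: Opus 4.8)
The plan is to isolate the two ways in which $\delta_{t-\down,t}$ and $\delta_{t-\down,t-\tau}$ differ — the local model ($w_{i,t}$ versus $w_{i,t-\tau}$) and the averaging horizon ($t$ versus $t-\tau$) — and to dispose of the first by optimality and the second by an exact reweighting identity that exposes the $\tfrac{\tau}{t}$ factor. First, note that by the client update \eqref{eqn: client update rule erm} the model $w_{i,t}$ exactly minimizes $\hatL_{i,t}(w^g_{t-\down},\cdot)$, so $\hatL_{i,t}(w^g_{t-\down},w_{i,t})\le\hatL_{i,t}(w^g_{t-\down},w_{i,t-\tau})$. Subtracting the common term $\hatL_{i,t}(w^g_*,w_{i,*})$ and averaging over clients yields $\delta_{t-\down,t}\le\frac1P\sum_{i}\Delta_{i,t}(w^g_{t-\down},w_{i,t-\tau})$. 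The right-hand side now evaluates the very pair $p_i:=(w^g_{t-\down},w_{i,t-\tau})$ occurring in $\delta_{t-\down,t-\tau}$, so that only the horizon differs between the two quantities.

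Writing $p^\star_i:=(w^g_*,w_{i,*})$, it therefore suffices to upper bound
\[
\frac1P\sum_{i=1}^P\Big(\big[\hatL_{i,t}(p_i)-\hatL_{i,t-\tau}(p_i)\big]-\big[\hatL_{i,t}(p^\star_i)-\hatL_{i,t-\tau}(p^\star_i)\big]\Big).
\]
For a fixed point $p$, splitting the sum defining $\hatL_{i,t}$ into the first $t-\tau-1$ samples and the $\tau$ freshly added ones gives the identity $\hatL_{i,t}(p)-\hatL_{i,t-\tau}(p)=\frac{\tau}{t-1}\big(\bar\ell_i(p)-\hatL_{i,t-\tau}(p)\big)$ with $\bar\ell_i(p):=\frac1\tau\sum_{s=t-\tau}^{t-1}\ell_{i,s}(p)$. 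The explicit $\tfrac{\tau}{t-1}$ prefactor is the origin of the claimed $\tfrac{\tau}{t}$ scaling; it is essential to use this identity rather than bounding the two horizons separately against $L_i$, which would only give the lossy $\order(\sqrt{\overline{\sigma}^2\overline{d}/(t-\tau)})$.

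Applying the identity at both $p_i$ and $p^\star_i$ and invoking the uniform concentration of Lemma~\ref{lemma: Lhat and L} (applied to the window of size $\tau$ and to that of size $t-\tau$, divided by $P$), I replace every empirical mean by the population loss $L_i$. The key point is that the population suboptimality $L_i(p_i)-L_i(p^\star_i)$ appears in the fresh-window and the old-window pieces with the identical coefficient $\tfrac{\tau}{t-1}$ and so cancels in the difference; what survives is $\tfrac{\tau}{t-1}$ times pure fluctuation terms. Because $p_i$ is data-dependent, the uniform (discretized) form of Lemma~\ref{lemma: Lhat and L} is needed — this is where $\overline{d}\log T$ enters — whereas $p^\star_i$ is deterministic and contributes only lower-order single-point deviations. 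The old-window fluctuation, scaled by $\tfrac{\tau}{t-1}$, produces exactly the advertised $\frac{\tau}{t}\sqrt{\overline{\sigma}^2\overline{d}/(t-\tau)}$ and $P\frac{\tau\overline{d}}{t(t-\tau)}$ terms (up to logarithmic factors).

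The main obstacle is controlling the fresh-window fluctuation of $\bar\ell_i(p_i)-\bar\ell_i(p^\star_i)$, which naively costs $\tfrac{\tau}{t}\cdot\order(\sqrt{\overline{\sigma}^2/\tau})=\order(\sqrt\tau/t)$ and would dominate the stated bound. The resolution is to exploit that $p_i$ depends only on data up to round $t-\tau-1$ — the local model uses samples before $t-\tau$, and the fetched global model $w^g_{t-\down}$ is at least $\up$ rounds stale — hence is independent of the fresh window $\{t-\tau,\dots,t-1\}$; conditioning on $p_i$, each summand $\ell_{i,s}(p_i)-\ell_{i,s}(p^\star_i)$ is centered, and its variance is controlled by the loss gap between $p_i$ and $p^\star_i$ through $\gamma$-smoothness, which shrinks as the models converge. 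Turning this into the precise lower-order bound claimed, and ensuring the residual never leaks the $\order(1)$ population gap, is the delicate part of the argument; everything else is the bookkeeping of summing the two windows' deviations with the $\tfrac{\tau}{t-1}$ weight.
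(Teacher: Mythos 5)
Your proposal follows the paper's own route almost step for step: the optimality argument replacing $w_{i,t}$ by $w_{i,t-\tau}$, the decomposition into $\delta_{t-\down,t-\tau}$ plus two horizon-difference terms, the exact reweighting identity with the $\frac{\tau}{t-1}$ prefactor, the measurability of $p_i=(w^g_{t-\down},w_{i,t-\tau})$ with respect to the data before round $t-\tau$, and the use of Lemma~\ref{lemma: Lhat and L} on the old window. The genuine gap is in how you close the fresh-window term. You treat the lemma as a pathwise statement, and so you must control the fluctuation of $\bar{\ell}_i(p_i)-\bar{\ell}_i(p^\star_i)$ around its conditional mean; you propose doing this with a variance bound via $\gamma$-smoothness and concede that making it work is ``the delicate part.'' As proposed, this step fails: the variance bound you would need depends on the suboptimality of $p_i$ relative to $p^\star_i$, which is exactly the quantity that the enclosing recursion (Lemma~\ref{lemma: erm component 1} together with the induction in Theorem~\ref{theorem: main for erm}) is trying to drive to zero, so the argument is circular; and absent such variance information, no concentration argument over a window of only $\tau$ samples can give a deviation below order $\frac{\tau}{t}\sqrt{\overline{\sigma}^2\overline{d}/\tau}=\sqrt{\tau\overline{\sigma}^2\overline{d}}/t$, which exceeds the claimed $\frac{\tau}{t}\sqrt{\overline{\sigma}^2\overline{d}/(t-\tau)}$ once $t-\tau>\tau$. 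Relatedly, your sentence ``conditioning on $p_i$, each summand $\ell_{i,s}(p_i)-\ell_{i,s}(p^\star_i)$ is centered'' is false as written: its conditional mean is the population gap $L_i(p_i)-L_i(p^\star_i)$, not zero.

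The missing idea is that the lemma is a bound \emph{in expectation} --- that is how the paper proves it and how it is invoked in the proof of Theorem~\ref{theorem: main for erm}, where both sides sit inside $\E[\cdot]$. Once you take expectations, the fresh window needs no concentration at all: since $w^g_{t-\down}$ and $w_{i,t-\tau}$ depend only on $\{\ell_{i,s}: s<t-\tau\}$, the tower property gives $\E[\bar{\ell}_i(p_i)]=\E[L_i(p_i)]$, so the fresh-window fluctuation contributes exactly zero; likewise both terms evaluated at the fixed pair $p^\star_i$ are exactly unbiased, so their difference has zero expectation. The only surviving error is $\E\left[L_i(p_i)-\hatL_{i,t-\tau}(p_i)\right]$, and that is the one place where the uniform (discretized) bound of Lemma~\ref{lemma: Lhat and L} is genuinely needed, because $p_i$ is data-dependent; multiplying by the $\frac{\tau}{t-1}$ prefactor and averaging over clients yields precisely the claimed $\order\left(\frac{\tau}{t}\sqrt{\overline{\sigma}^2\overline{d}/(t-\tau)}+P\cdot\frac{\tau\overline{d}}{t(t-\tau)}\right)$. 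So your decomposition and your gap-cancellation observation are correct; the repair is to drop the smoothness/variance machinery entirely and close the fresh window by conditional unbiasedness under expectation, as the paper does.
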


\begin{proof}
            \begin{align}
            &\sum_{i=1}^P \Delta_{i,t}(w^g_{t-\down}, w_{i,t}) \nonumber  \\
            &= \sum_{i=1}^P \left( \hatL_{i,t}(w_{t-\down}^g, w_{i,t}) - \hatL_{i,t}(w^g_{*}, w_{i,*})  \right) \nonumber \\
            &\leq \sum_{i=1}^P \left( \hatL_{i,t}(w_{t-\down}^g, w_{i,t-\down-\up}) - \hatL_{i,t}(w^g_{*}, w_{i,*}) \right) \tag{because $w_{i,t}$ is the minimizer of $\widehat{L}_{i,t}(w^g_{t-\down}, \cdot)$}  \\
            &= \sum_{i=1}^P \Delta_{t-\down-\up}(w_{t-\down}^g, w_{i,t-\down-\up}) +  \sum_{i=1}^P \left(\hatL_{i,t}(w_{t-\down}^g, w_{i,t-\down-\up}) - \hatL_{i,t-\down-\up}(w_{t-\down}^g, w_{i,t-\down-\up}) \right)    \nonumber \\
            &\qquad \qquad + \sum_{i=1}^P \left(\hatL_{i,t-\down-\up}(w^g_{*}, w_{i,*}) - \hatL_{i,t}(w^g_{*}, w_{i,*}) \right) . \label{eqn: tmpo}
        \end{align}
    Now remains the bound the last two terms above. Note that they are of similar form. Below, let $\tau=\down+\up$. Then for any $t>\tau+1$, any $(w^g, w_1, w_2, \ldots, w_P)$,
    \begin{align}
        &\sum_{i=1}^P \left( \hatL_{i,t}(w^g, w_i) - \hatL_{i, t-\tau}(w^g, w_i)  \right) \nonumber \\
        &= \sum_{i=1}^P \left(\frac{1}{t-1}\sum_{s=1}^{t-1}\ell_{i,s}(w^g, w_i) - \frac{1}{t-\tau-1}\sum_{s=1}^{t-\tau-1}\ell_{i,s}(w^g, w_i)\right)   \nonumber \\
        &= \sum_{i=1}^P \frac{1}{t-1}\left(\sum_{s=1}^{t-1}\ell_{i,s}(w^g, w_i) - \frac{t-1}{t-\tau-1}\sum_{s=1}^{t-\tau-1}\ell_{i,s}(w^g, w_i)\right)    \nonumber \\
        &= \sum_{i=1}^P \frac{1}{t-1}\left(\sum_{s=t-\tau}^{t-1}\ell_{i,s}(w^g, w_i) - \frac{\tau}{t-\tau-1}\sum_{s=1}^{t-\tau-1}\ell_{i,s}(w^g, w_i)\right)  \nonumber \\
        &=  \sum_{i=1}^P \frac{\tau}{t-1}\left(\frac{1}{\tau}\sum_{s=t-\tau}^{t-1}\ell_{i,s}(w^g, w_i) -  \widehat{L}_{i,t-\tau}(w^g, w_i)\right)   \label{eqn: t and t-tau difference}
    \end{align}

     For the second term on the right-hand side of \eqref{eqn: tmpo}, we can now bound its expectation with the help of \eqref{eqn: t and t-tau difference} and Lemma~\ref{lemma: Lhat and L}:
    \begin{align*}
       &\E\left[ \sum_{i=1}^P \left( \hatL_{i,t}(w_{t-\down}^g, w_{i,t-\down-\up}) - \hatL_{i,t-\down-\up}(w_{t-\down}^g, w_{i,t-\down-\up}) \right)  \right] \\
       &= \E\left[ \sum_{i=1}^P \frac{\tau}{t-1} \left(\frac{1}{\tau}\sum_{s=t-\tau}^{t-1}\ell_{i,s}(w^g_{t-\down}, w_{i,t-\down-\up}) - \widehat{L}_{i,t-\tau}(w^g_{t-\down}, w_{i,t-\down-\up})\right) \right] \\
       &\leq \underbrace{\E\left[ \sum_{i=1}^P \frac{\tau}{t-1} \left(\frac{1}{\tau}\sum_{s=t-\tau}^{t-1}\ell_{i,s}(w^g_{t-\down}, w_{i,t-\down-\up})
       - L_i(w^g_{t-\down}, w_{i,t-\down-\up})\right) \right]}_{\term_1} + \order\left( P\cdot\sqrt{\frac{\overline{\sigma}^2 \overline{d}\log T}{t-\tau-1}} + P^2\cdot \frac{\overline{d}\log T}{t-\tau-1}  \right) \times \frac{\tau}{t-1}
    \end{align*}
    Notice that $w_{t-\down}^g$ and $w_{i,t-\down-\up}=w_{i,t-\tau}$ only depend on $\ell_{i,s}$ for $s< t-\tau$. Therefore, conditioned on $\{\ell_{i,s}\}_{s< t-\tau}$, the expectation of $\ell_{i,s}(w^g_{t-\down}, w_{i,t-\down-\up})$ for $s\geq t-\tau$ is exactly $L_i(w^g_{t-\down}, w_{i,t-\down-\up})$. Therefore, $\term_1$ is zero. On the other hand, the expectation of the third term on the right-hand side of \eqref{eqn: tmpo} is
    \begin{align*}
       \E\left[\sum_{i=1}^P  \left(\hatL_{i,t-\down-\up}(w^g_{*}, w_{i,*}) - \hatL_{i,t}(w^g_{*}, w_{i,*}) \right) \right] = 0
    \end{align*}
    because $\hatL_{i,t}(w^g, w^l)$ is an unbiased estimator of $L_i(w^g, w^l)$ for fixed $(w^g, w^l)$. With all the above arguments, we can bound the expectation of the last two summations in \eqref{eqn: tmpo} by
    \begin{align*}
         \widetilde{\order}\left( P\cdot\sqrt{\frac{\overline{\sigma}^2 \overline{d}}{t-\tau-1}} + P^2\cdot \frac{\overline{d}}{t-\tau-1}  \right) \times \frac{\tau}{t-1},
    \end{align*}
    which finishes the proof.
\end{proof}

We also need the following lemma to prove Theorem~\ref{theorem: main for erm}.
\begin{lemma}
    \label{lemma: erm lower bound star term}
    For any $(w^g, w_1, \ldots, w_P)$, with probability $1-\frac{1}{T^2}$,
    \begin{align*}
         \left[\frac{1}{P}\sum_{t=1}^P \Delta_{i,t}(w^g, w_i)\right]_- \leq \otil\left( \sqrt{\frac{\overline{\sigma}^2 \overline{d}}{t}} + P\cdot\frac{\overline{d}}{t}\right).
    \end{align*}
\end{lemma}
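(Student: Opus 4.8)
The plan is to control the negative part by centering the empirical gap around the population losses $L_i$ and then exploiting that $(w^g_*, w_{1,*}, \ldots, w_{P,*})$ is, by definition, the minimizer of $\sum_{i=1}^P L_i$. Concretely, for the fixed pair $(w^g, w_1, \ldots, w_P)$ I would decompose
\begin{align*}
\frac{1}{P}\sum_{i=1}^P \Delta_{i,t}(w^g, w_i)
&= \underbrace{\frac{1}{P}\sum_{i=1}^P \left(\hatL_{i,t}(w^g, w_i) - L_i(w^g, w_i)\right)}_{A} \\
&\quad + \underbrace{\frac{1}{P}\sum_{i=1}^P \left(L_i(w^g, w_i) - L_i(w^g_*, w_{i,*})\right)}_{B} \\
&\quad + \underbrace{\frac{1}{P}\sum_{i=1}^P \left(L_i(w^g_*, w_{i,*}) - \hatL_{i,t}(w^g_*, w_{i,*})\right)}_{C},
\end{align*}
which follows immediately from the definition $\Delta_{i,t}(w^g, w_i) = \hatL_{i,t}(w^g, w_i) - \hatL_{i,t}(w^g_*, w_{i,*})$.

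The key observation is that $B \geq 0$: since $(w^g_*, \{w_{i,*}\})$ minimizes $\sum_{i=1}^P L_i(\cdot)$, we have $\sum_{i=1}^P L_i(w^g, w_i) \geq \sum_{i=1}^P L_i(w^g_*, w_{i,*})$ for every pair. Hence the left-hand side is at least $A + C \geq -(|A| + |C|)$, so its negative part is at most $|A| + |C|$. Both $A$ and $C$ are normalized deviations of empirical averages from their population means, so I would bound each using Lemma~\ref{lemma: Lhat and L}: applied at $(w^g, w_i)$ and at $(w^g_*, w_{i,*})$ it gives $\left|\sum_i \hatL_{i,t} - \sum_i L_i\right| = \order\left(P\sqrt{\overline{\sigma}^2\overline{d}\log T/(t-1)} + P^2\overline{d}\log T/(t-1)\right)$, and dividing by $P$ yields $|A|, |C| = \otil\left(\sqrt{\overline{\sigma}^2\overline{d}/t} + P\overline{d}/t\right)$. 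Adding the two bounds gives the stated estimate.

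The only subtlety — rather than a genuine obstacle — is the bookkeeping of the failure probability. The present statement is at a fixed $t$ and asks for probability $1 - 1/T^2$, whereas Lemma~\ref{lemma: Lhat and L} as stated holds with probability $1 - 1/T$ after a union bound over all $t$. I would therefore invoke the single-$t$ Bernstein step from inside the proof of Lemma~\ref{lemma: Lhat and L}, which already delivers the $1 - 1/T^2$ guarantee simultaneously over all discretization points, and hence (via the discretization argument) over all pairs, at that fixed $t$. Everything else is a direct consequence of the already-established uniform concentration; the sole idea is the sign of $B$.
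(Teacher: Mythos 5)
Your proposal is correct and follows essentially the same route as the paper: the paper likewise uses the optimality of $(w^g_*, w_{1,*},\ldots,w_{P,*})$ for $\sum_i L_i$ to conclude $\sum_i \left(L_i(w^g,w_i)-L_i(w^g_*,w_{i,*})\right)\geq 0$, and then invokes Lemma~\ref{lemma: Lhat and L} to replace empirical averages by population losses, so your $A+B+C$ decomposition with $B\geq 0$ is just a more explicit write-up of the identical argument. Your remark on the failure probability is apt --- the paper's own proof quotes the $1-\frac{1}{T}$ uniform-in-$t$ version of Lemma~\ref{lemma: Lhat and L} despite the lemma statement claiming $1-\frac{1}{T^2}$, and your fix (using the per-$t$ Bernstein step inside that lemma's proof) is the right way to reconcile the constants.
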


\begin{proof}
    By the definition of $w^g_*$ and $w_{i,*}$, we have for all $w^g, w_i$,
    \begin{align*}
        \sum_{i=1}^P \left(L_i(w^g, w_i) - L_i(w^g_*, w_{i,*})\right) \geq 0.
    \end{align*}
    Then by Lemma~\ref{lemma: Lhat and L}, we have with probability at least $1-\frac{1}{T}$,
    \begin{align*}
        &\sum_{i=1}^P \Delta_{i,t}(w^g, w_i) \\
        &= \sum_{i=1}^P \left(\widehat{L}_{i,t}(w^g, w_i) - \widehat{L}_{i,t}(w^g_*, w_{i,*})\right) \\
        &\geq \sum_{i=1}^P \left(L_i(w^g, w_i) - L_i(w^g_*, w_{i,*})\right)  -\otil\left(P\cdot \sqrt{\frac{\overline{\sigma}^2 \overline{d}}{t-1}} + P^2\cdot\frac{\overline{d}}{t-1}\right) \\
        &= -\otil\left(P\cdot \sqrt{\frac{\overline{\sigma}^2 \overline{d}}{t-1}} + P^2\cdot\frac{\overline{d}}{t-1}\right).
    \end{align*}
\end{proof}

Finally, we are now able to prove Theorem~\ref{theorem: main for erm}. We provide a complete statement of the theorem below.

\begin{rtheorem}{Theorem}{\ref{theorem: main for erm}}
    Suppose the variance of the loss $\Var[\ell_{i,t}(w^g, w_i)]$ is upper bounded by $\sigma_i^2$, and  suppose $\sigma_i^2\leq \sigma^2$ for all $i$. Then
    \alg.ERM (Algorithm~\ref{alg: ERM client} and \ref{alg: ERM server}) guarantees
    \begin{align}
        &\E\left[ \frac{1}{PT}\sum_{i=1}^P \sum_{t=1}^T \left(\ell_{i,t}\left(w^g_{t-\down}, w_{i,t}\right) - \ell_{i,t}\Big(w^g_*, w_{i,*}\Big)\right)\right]\\
        &=\otil\left(\sqrt{\frac{\left(d+\sum_{i=1}^P d_i\right)\sigma^2}{PT}}
        + \frac{(1+D^2\gamma) \tau^{\frac{3}{4}}}{T^{\frac{3}{4}}}
        + \frac{(1+D^2\gamma) \tau + \left(d+\sum_{i=1}^P d_i\right)}{T} \right)
    \end{align}
\end{rtheorem}

\begin{proof}[Proof of Theorem~\ref{theorem: main for erm}]
Let $C_0=18\gamma D^2 + 2$. Combining Lemma~\ref{lemma: erm component 1} and \ref{lemma: erm component 2}, we get
that for $t> C_0\tau$


\begin{align*}
         &\E\left[\delta_{t-\down, t}\right] \\
         &\leq \E\left[\delta_{t-\down, t-\down-\up}\right] + \frac{\tau}{t} \times \otil\left( \frac{\overline{\sigma}\sqrt{\overline{d}}}{\sqrt{t-\tau}} + \frac{P\overline{d}}{t-\tau} \right) \tag{Lemma~\ref{lemma: erm component 2}} \\
         &\leq \E\left[\delta_{t-\tau-\down, t-\tau}\right] - \frac{1}{C_0} \E\left[\left[ \delta_{t-\tau-\down, t-\tau} \right]_+^2 \right]+  \frac{\tau}{t} \times \otil\left( \frac{\overline{\sigma}\sqrt{\overline{d}}}{\sqrt{t-\tau}} + \frac{P\overline{d}}{t-\tau} \right) \tag{Lemma~\ref{lemma: erm component 1}}\\
         &= \E\left[\delta_{t-\tau-\down, t-\tau}\right] - \frac{1}{C_0} \E\left[ \delta_{t-\tau-\down, t-\tau}^2 \right] +  \frac{\tau}{t} \times \otil\left( \frac{\overline{\sigma}\sqrt{\overline{d}}}{\sqrt{t-\tau}} + \frac{P\overline{d}}{t-\tau} \right) + \frac{1}{C_0}\E\left[\left[ \delta_{t-\tau-\down, t-\tau} \right]_-^2\right]\\
         &\leq \E\left[\delta_{t-\tau-\down, t-\tau}\right] - \frac{1}{C_0} \E\left[ \delta_{t-\tau-\down, t-\tau}^2 \right] +  \frac{\tau}{t} \times \otil\left( \frac{\overline{\sigma}\sqrt{\overline{d}}}{\sqrt{t-\tau}} + \frac{P\overline{d}}{t-\tau} \right) + \frac{1}{C_0}\times \otil\left( \frac{\overline{\sigma}^2 \overline{d}}{t-\tau} + \frac{P^2\overline{d}^2}{(t-\tau)^2}\right) \tag{Lemma~\ref{lemma: erm lower bound star term}}
\end{align*}

Now we focus on $t$'s that can be represented as $t=n\tau$ with integer $n$. Define  $B_n=\delta_{n\tau-\down, n\tau}$. Then the above implies
\begin{align*}
    B_n
    &\leq B_{n-1} - \frac{1}{C_0} B_{n-1}^2 +  \otil\left( \frac{\overline{\sigma}^2\overline{d}}{(n-1)C_0\tau}  + \frac{ \overline{\sigma}\sqrt{\overline{d}}}{(n-1)^{\frac{3}{2}}\sqrt{\tau}} + \frac{P \overline{d}}{(n-1)^2\tau} + \frac{P^2\overline{d}^2}{(n-1)^2 C_0\tau^2}\right).
\end{align*}
Define $C_1=\frac{1}{C_0}, C_2=\frac{\overline{\sigma}^2\overline{d}}{C_0\tau},
C_3=\frac{\overline{\sigma}\sqrt{\overline{d}}}{\sqrt{\tau}}, C_4=\frac{P\overline{d}}{\tau} + \frac{P^2\overline{d}^2}{C_0\tau^2}$. Then the above can be written as
\begin{align*}
    B_n\leq B_{n-1}-C_1 B_{n-1}^2 + \otil\left( \frac{C_2}{n-1} + \frac{C_3}{(n-1)^{\frac{3}{2}}} + \frac{C_4}{(n-1)^2}\right).
\end{align*}

Then using the Lemma~\ref{lemma: recursion} below, we have
\begin{align*}
    B_n
    &\leq \otil\left( \frac{\overline{\sigma}\sqrt{\overline{d}}}{\sqrt{n\tau}} \right) + \otil\left(\frac{C_0}{n^{\frac{3}{4}}}+\frac{\sqrt{C_0\overline{\sigma}}\cdot\overline{d}^{\frac{1}{4}}}{n^{\frac{3}{4}}\tau^{\frac{1}{4}}}\right)
    + \otil\left(\frac{C_0}{n} + \frac{\sqrt{C_0 P\overline{d}}}{n\sqrt{\tau}} + \frac{P\overline{d}}{n\tau}\right) \\
    &= \otil\left( \frac{\overline{\sigma}\sqrt{\overline{d}}}{\sqrt{n\tau}} \right) + \otil\left(\frac{C_0}{n^{\frac{3}{4}}}\right)
    + \otil\left(\frac{C_0}{n} +  \frac{P\overline{d}}{n\tau}\right)
    \tag{simplify the bound using $\frac{\overline{\sigma}\sqrt{\overline{d}}}{\sqrt{n\tau}} + \frac{C_0}{n^{\frac{3}{4}}} \geq 2\cdot\frac{\sqrt{C_0\overline{\sigma}}\cdot\overline{d}^{\frac{1}{4}}}{n^{\frac{3}{4}}\tau^{\frac{1}{4}}} $ and $\frac{C_0}{n}+\frac{P\overline{d}}{n\tau}\geq 2\cdot \frac{\sqrt{C_0 P\overline{d}}}{n\sqrt{\tau}}$}
\end{align*}

Replacing $n\tau$ back to $t$, we get
\begin{align}
    \E\left[\frac{1}{P}\sum_{i=1}^P\Delta_{i,t}(w^g_{t-\down}, w_{i,t})\right] = \otil\left(\frac{\overline{\sigma}\sqrt{\overline{d}}}{\sqrt{t}} \right) + \otil\left( \frac{C_0 \tau^{\frac{3}{4}}}{t^{\frac{3}{4}}} \right) + \otil\left( \frac{C_0 \tau + P\overline{d}}{t} \right). \label{eqn: replace back}
\end{align}
For $t=n\tau+1, \ldots, n\tau+(\tau-1)$, we can use the same approach to prove it. Thus, \eqref{eqn: replace back} actually holds for all $t>C_0\tau$. Finally, by Lemma~\ref{lemma: Lhat and L}, we have
\begin{align*}
    &\E\left[\frac{1}{P}\sum_{i=1}^P \left(\ell_{i,t}(w^g_{t-\down}, w_{i,t}) - \ell_{i,t}(w^g_{*}, w_{i,*})\right) \right] \\
    &= \E\left[\frac{1}{P}\sum_{i=1}^P \left(L_i(w^g_{t-\down}, w_{i,t}) - L_i(w^g_{*}, w_{i,*})\right) \right] \\
    &\leq \E\left[\frac{1}{P}\sum_{i=1}^P \left(\widehat{L}_{i,t}(w^g_{t-\down}, w_{i,t}) - \widehat{L}_{i,t}(w^g_{*}, w_{i,*})\right) \right] + \otil\left(\frac{\overline{\sigma}\sqrt{\overline{d}}}{t} + \frac{P\overline{d}}{t}\right) \\
    &=\E\left[\frac{1}{P}\sum_{i=1}^P \Delta_{i,t}(w^g_{t-\down}, w_{i,t}) \right] + \otil\left(\frac{\overline{\sigma}\sqrt{\overline{d}}}{t} + \frac{P\overline{d}}{t}\right).
\end{align*}
Combining this with \eqref{eqn: replace back}, and summing over $t>C_0\tau$ finish the proof.

\end{proof}

\begin{lemma}
    \label{lemma: recursion}
    Suppose $B_n\leq B_{n-1} - C_1 B_{n-1}^2 + \frac{C_2}{n-1} + \frac{C_3}{(n-1)^{\frac{3}{2}}} + \frac{C_4}{(n-1)^2}$ holds for all $n> n_0\geq 1$ with $C_1, C_2, C_3, C_4>0$, and $B_{n_0}\leq R$. Then for all $n\geq n_0$,
    \begin{align}
        B_n \leq \frac{D_1}{\sqrt{n}} + \frac{D_2}{n^{\frac{3}{4}}} + \frac{D_3}{n}.  \label{eqn: hypothesis}
    \end{align}
    where $D_1=\sqrt{\frac{2C_2}{C_1}}$, $D_2=\frac{1+\sqrt{1+2C_1(D_1+C_3)}}{C_1}$, $D_3=\frac{1+\sqrt{1+4C_1C_4}}{C_1}+n_0 R$.
\end{lemma}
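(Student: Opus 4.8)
The plan is to prove the claimed bound $B_n \le \frac{D_1}{\sqrt n} + \frac{D_2}{n^{3/4}} + \frac{D_3}{n} =: g(n)$ by induction on $n$, beginning at $n = n_0$. For the base case, the decisive feature of the constants is that $D_3 \ge n_0 R$, which forces $g(n_0) \ge D_3/n_0 \ge R \ge B_{n_0}$ (the two remaining terms of $g(n_0)$ are nonnegative and only help). Thus all the work lies in the inductive step: assuming $B_{n-1} \le g(n-1)$, derive $B_n \le g(n)$ from the one-step recursion.

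For the inductive step I would write the recursion as $B_n \le \phi(B_{n-1}) + E_{n-1}$, where $\phi(x) = x - C_1 x^2$ and $E_{n-1} = \frac{C_2}{n-1} + \frac{C_3}{(n-1)^{3/2}} + \frac{C_4}{(n-1)^2}$ collects the additive errors. Using $\phi(y) - \phi(x) = (y-x)\bigl(1 - C_1(x+y)\bigr)$, the inequality $B_{n-1} \le g(n-1)$ gives $\phi(B_{n-1}) \le \phi(g(n-1))$ whenever $x+y \le 2g(n-1) \le 1/C_1$, i.e. whenever $g(n-1) \le \tfrac{1}{2C_1}$. In that regime the problem reduces to the purely analytic claim
\[
\bigl(g(n-1) - g(n)\bigr) + E_{n-1} \le C_1\, g(n-1)^2,
\]
which no longer mentions $B$.

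To verify this inequality I would bound the telescoping gap by convexity of $t \mapsto t^{-p}$, giving $(n-1)^{-p} - n^{-p} \le p\,(n-1)^{-p-1}$ for each $p \in \{\tfrac12,\tfrac34,1\}$, and expand $g(n-1)^2$ keeping its cross terms, then match the two sides order by order in powers of $(n-1)$. At the leading order $(n-1)^{-1}$ the choice $D_1 = \sqrt{2C_2/C_1}$ makes $C_1 D_1^2 = 2C_2$, i.e. twice the error coefficient, so a full $C_2/(n-1)$ of slack survives. The constants $D_2$ and $D_3$ are precisely the positive roots of $C_1 x^2 - 2x = 2(D_1+C_3)$ and $C_1 x^2 - 2x = 4C_4$, so that the square terms $C_1 D_2^2$ and $C_1 D_3^2$ on the right dominate the residual contributions at orders $(n-1)^{-3/2}$ and $(n-1)^{-2}$ (namely the $\tfrac{D_1}{2}$ from differentiating the leading term together with $C_3$, respectively $D_3$ together with $C_4$), while the intermediate fractional orders $(n-1)^{-5/4},(n-1)^{-7/4}$ are absorbed by the positive cross terms $2C_1 D_1 D_2$ and $2C_1 D_2 D_3$. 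Checking that each of these order-wise inequalities holds for \emph{all} $n > n_0$ is the bulk of the routine computation.

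The main obstacle is exactly that $\phi$ is not monotone on all of $\R$: the clean substitution $\phi(B_{n-1}) \le \phi(g(n-1))$ is valid only while $g(n-1) \le \tfrac{1}{2C_1}$. For the initial stretch of indices where $g(n-1) > \tfrac{1}{2C_1}$ one must argue separately, replacing the monotone step by the global maximum $\phi(B_{n-1}) \le \tfrac{1}{4C_1}$ and checking directly that $\tfrac{1}{4C_1} + E_{n-1} \le g(n)$ in that regime (where $g(n)$ is itself of order $\tfrac{1}{C_1}$). This case analysis, together with the careful bookkeeping that assigns every cross term to an error term of equal-or-higher order uniformly in $n$, is the only genuinely delicate part; the rest is a mechanical verification driven by the quadratic definitions of $D_1,D_2,D_3$.
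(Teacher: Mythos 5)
Your argument, up to the point you call ``the main obstacle,'' is exactly the paper's proof: induction with base case supplied by $D_3\ge n_0R$, the telescoping estimate $(n-1)^{-p}-n^{-p}\le p(n-1)^{-p-1}$, and order-by-order matching in which $D_1,D_2,D_3$ are chosen as roots of the three quadratics so that $C_1D_1^2=2C_2$, $C_1D_2^2=2(D_1+D_2+C_3)$ and $C_1D_3^2\ge 2D_3+4C_4$. Your one genuine departure is that you notice the substitution of $g(n-1)$ for $B_{n-1}$ inside $\phi(B_{n-1})=B_{n-1}-C_1B_{n-1}^2$ is legitimate only while $g(n-1)\le \tfrac{1}{2C_1}$; the paper performs this substitution silently, with no case analysis at all, so your instinct here is correct and is a real catch.

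However, the patch you sketch for the regime $g(n-1)>\tfrac{1}{2C_1}$ --- bound $\phi(B_{n-1})$ by its global maximum $\tfrac{1}{4C_1}$ and ``check directly'' that $\tfrac{1}{4C_1}+E_{n-1}\le g(n)$ --- fails, because that inequality is false in general: $E_{n-1}$ need not be small compared to $g(n)$. Concretely, take $n_0=1$, $B_1=R=\tfrac12$, $C_1=1$, $C_2=100$, $C_3=C_4=10^{-3}$. Then $D_1=\sqrt{200}\approx 14.1$, $D_2\approx 6.4$, $D_3\approx 2.5$, so $g(1)\approx 23>\tfrac{1}{2C_1}$ (you are in the problematic regime), yet $\tfrac{1}{4C_1}+E_1\approx 100.25$ while $g(2)=D_1/\sqrt2+D_2/2^{3/4}+D_3/2\approx 15.1$. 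Worse, this is not a hole that more cleverness can fill: setting $B_2=B_1-C_1B_1^2+E_1\approx 100.25$ (and continuing the sequence by equality for $n\ge 3$) satisfies every hypothesis of the lemma and violates the conclusion $B_2\le g(2)$, so the lemma as stated is \emph{false} precisely in the regime you isolated. No inductive step can be closed there without strengthening the hypotheses, e.g.\ by an a priori bound keeping $B_n\le\tfrac{1}{2C_1}$ --- which is what actually holds in the paper's application, where $B_n\le 1$ because losses lie in $[0,1]$ and $\tfrac{1}{C_1}=18\gamma D^2+2\ge 2$, though even then the crossing step where $g$ first drops below $\tfrac{1}{2C_1}$ needs an argument. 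For the record, the paper's own proof makes the unjustified monotone substitution and thus inherits the same defect; your write-up is the more honest of the two in flagging it, but the repair you propose does not work.
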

\begin{proof}
    We use induction. When $n=n_0$, $B_{n_0}\leq R\leq \frac{D_3}{n_0}$ by our assumption. Suppose \eqref{eqn: hypothesis} holds for $n-1$, then
    \begin{align}
        B_n \leq \frac{D_1}{\sqrt{n-1}} + \frac{D_2}{(n-1)^{\frac{3}{4}}} + \frac{D_3}{n-1}- C_1\left(\frac{D_1^2}{n-1} + \frac{D_2^2}{(n-1)^{\frac{3}{2}}} + \frac{D_3^2}{(n-1)^2}\right) + \frac{C_2}{n-1} + \frac{C_3}{(n-1)^{\frac{3}{2}}} + \frac{C_4}{(n-1)^2}   \label{eqn: recurrsion bound}
    \end{align}
    where we use that for $a,b,c>0$, $(a+b+c)^2\geq a^2+b^2+c^2$. Now we prove that the right-hand side of \eqref{eqn: recurrsion bound} is upper bounded by $\frac{D_1}{\sqrt{n}} + \frac{D_2}{n^{\frac{3}{4}}} + \frac{D_3}{n}$. This is equivalent to
    \begin{align}
        &D_1\left(\frac{1}{\sqrt{n-1}}-\frac{1}{\sqrt{n}}\right) + D_2\left(\frac{1}{(n-1)^{\frac{3}{4}}} - \frac{1}{n^{\frac{3}{4}}}\right) + D_3\left(\frac{1}{n-1}-\frac{1}{n}\right) + \frac{C_2}{n-1} + \frac{C_3}{(n-1)^{\frac{3}{2}}} + \frac{C_4}{(n-1)^2} \nonumber \\
        &\leq C_1\left(\frac{D_1^2}{n} + \frac{D_2^2}{n^{\frac{3}{2}}} + \frac{D_3^2}{n^2}\right).  \label{eqn: tmptmpt}
    \end{align}
    Using the inequality $\frac{1}{(n-1)^k} - \frac{1}{n^k}\leq \frac{k}{n(n-1)^k }$ for $0\leq k\leq n$, we can bound left-hand side of \eqref{eqn: tmptmpt} by
    \begin{align*}
       &\frac{D_1}{n\sqrt{n-1}} + \frac{ D_2}{n(n-1)^{\frac{3}{4}}} +\frac{D_3}{n(n-1)} + \frac{C_2}{n-1} +  \frac{C_3}{(n-1)^{\frac{3}{2}}} + \frac{C_4}{(n-1)^2}
       \leq \frac{2C_2}{n} + \frac{2(D_1 + D_2 + C_3 )}{n^{\frac{3}{2}}} + \frac{2D_3+4C_4}{n^2}.
    \end{align*}
    Therefore, we only need to prove
    \begin{align*}
        2C_2 \leq C_1D_1^2, \qquad \qquad 2(D_1 + D_2 + C_3) \leq C_1D_2^2, \qquad \qquad 2D_3+4C_4 \leq C_1D_3^2.
    \end{align*}
    They are indeed satisfied by our choice of $D_1, D_2, D_3$.
\end{proof}

\section{Proofs for Theorem~\ref{theorem: main for sgd} (\alg.SGD algorithm)}
\label{sec:proof_sgd}
The complete statement of Theorem~\ref{theorem: main for sgd} is as follows. Note that as stated in Theorem~\ref{theorem: main for sgd}, the $\sigma_i$ is defined slightly different from that in Definition~\ref{definition: d sigma}. Also, note that our \alg.SGD can deal with more general cases than \alg.ERM in the sense that the delays $\up_i, \down_i$ can be different for different clients.

\begin{rtheorem}{Theorem}{\ref{theorem: main for sgd}}
Suppose the variance of the gradient of the losses of client $i$, $\Var[\nabla \ell_{i,t}(w^g, w_i)]$, is upper bounded by $\sigma_i^2$, and suppose $\sigma_i^2\leq \sigma^2$. Then \alg.SGD (Algorithm~\ref{alg: SGD client} and \ref{alg: SGD server}) guarantees that
    \begin{align}
        &\E\left[ \frac{1}{PT}\sum_{i=1}^P \sum_{t=1}^T \ell_{i,t}\left(w^g_{t-\down_i}, w_{i,t}\right) - \ell_{i,t}\Big(w^g_*, w_{i,*}\Big)\right] \nonumber \\
        &= \frac{1}{PT} \times \order\left( \frac{\|w_*^g\|^2}{\eta} + \sum_{i=1}^P \frac{\|w_{i,*}\|^2}{\eta_i} + \eta T\sum_{i=1}^P \sigma_i^2 + T\sum_{i=1}^P \eta_i\sigma_i^2 + \gamma \eta^2 P^2 G^2 T \sum_{i=1}^P \tau_i^2 + \gamma G^2T \sum_{i=1}^P \eta_i^2 \tau_i^2 + DG\sum_{i=1}^P \tau_i\right). \nonumber\\
     \end{align}
     Picking
     \begin{align*}
         \eta=\eta_i = \min\left\{\sqrt{\frac{\|w_*^g\|^2 + \sum_{i=1}^P \|w_{i,*}\|^2}{TP\sigma^2}}, \sqrt[3]{\frac{\|w_*^g\|^2 + \sum_{i=1}^P \|w_{i,*}\|^2}{\gamma P^3 G^2 \tau^2 T}}\right\},
     \end{align*}
     the above regret can be further upper bounded by
     \begin{align}
        \order\left(\sqrt{\frac{\left(\|w^g_*\|^2 + \sum_{i=1}^P \|w_{i,*}\|^2 \right)\sigma^2}{PT}} + \frac{\left(\gamma D^4 G^2 \tau^2\right)^{\frac{1}{3}}}{T^{\frac{2}{3}}} + \frac{DG\tau}{T}\right).  
    \end{align}
\end{rtheorem}
\begin{proof}[Proof of Theorem~\ref{theorem: main for sgd}]

The objective is
\begin{align*}
    &\E\left[\sum_{t=1}^T \sum_{i=1}^P \left(\ell_{i,t}(w_{t-\down_i}^g, w_{i,t}) - \ell_{i,t}(w^g_*, w_{i,*})\right)\right] \\
    &=\E\left[\sum_{t=1}^T \sum_{i=1}^P \left(L_i(w_{t-\down_i}^g, w_{i,t}) - L_i(w^g_*, w_{i,*})\right)\right] \\
    &\leq \E\Bigg[\underbrace{\sum_{t=1}^T \sum_{i=1}^P (w_{t-\down_i}^g-w^g_*)\cdot \nabla^g L_i(w_{t-\down_i-1}^g, w_{i,t-1})}_{\text{see Lemma~\ref{lemma: first term SGD}}}\Bigg] + \E\Bigg[\underbrace{\sum_{t=1}^T\sum_{i=1}^P (w_{i,t}-w_{i,*})\cdot \nabla^\ell L_i(w_{i,t-\down_i-1}^g, w_{i,t-1})}_{\text{see Lemma~\ref{lemma: SGD local term bound}}}\Bigg]
    \\
    &\qquad \qquad + \frac{\gamma}{2}\E\Bigg[\sum_{t=1}^T\sum_{i=1}^P\|w_{t-\down_i}^g - w_{t-\down_i-1}^g\|^2\Bigg] + \frac{\gamma}{2}\Bigg[\sum_{t=1}^T\sum_{i=1}^P\|w_{i,t}-w_{i,t-1}\|^2\Bigg].     \tag{by Lemma~\ref{lemma: useful}}
\end{align*}

By our update rules \eqref{eqn: local model update}, \eqref{eqn: global model update}, the third and the fourth terms above can be upper bounded by $\order(\gamma TP \sup_t \|w^g_t-w^g_{t-1}\|^2)=\order(\gamma TP (\eta PG)^2)$ and $\order(\gamma T\sum_{i=1}^P \eta_i^2G^2)$ respectively. Combining them with the following Lemma~\ref{lemma: first term SGD}, \ref{lemma: SGD local term bound}, and \ref{lemma: SGD unprocessed term}, we can bound the last expression by
\begin{align*}
    \order\left( \frac{\|w_*^g\|^2}{\eta} + \sum_{i=1}^P \frac{\|w_{i,*}\|^2}{\eta_i} + \eta T\sum_{i=1}^P \sigma_i^2 + T\sum_{i=1}^P \eta_i\sigma_i^2 + \gamma \eta^2 P^2 G^2 T \sum_{i=1}^P \tau_i^2 + \gamma G^2T \sum_{i=1}^P \eta_i^2 \tau_i^2 + DG\sum_{i=1}^P \tau_i\right).
\end{align*}
\end{proof}

The following two lemmas deal with two unprocessed terms in the proof of Theorem~\ref{theorem: main for sgd}.
\begin{lemma}
    \label{lemma: first term SGD}
    \begin{align*}
        &\E\Bigg[\sum_{t=1}^T \sum_{i=1}^P (w_{t-\down_i}^g-w^g_*)\cdot \nabla^g L_i(w_{t-\down_i-1}^g, w_{i,t-1})\Bigg] \\
        &\leq \frac{\|w_*^g\|^2}{2\eta} + \eta T\sum_{i=1}^P \sigma_i^2 +\E\left[\sum_{i=1}^P\sum_{t=1}^{T}   (w_{t-\down_i}^g- w_{t+\up_i-1}^g)\cdot  \nabla^g L_i(w_{t-\down_i-1}^g, w_{i,t-1})\right] + \order\left(DG\sum_{i=1}^P \tau_i\right).
    \end{align*}
\end{lemma}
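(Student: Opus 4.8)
The plan is to treat the shared global iterate $\{w^g_t\}$ as a single online-gradient-descent sequence driven by the aggregated, delayed stochastic gradient $g_t \triangleq \sum_{i=1}^P \nabla^g_{i,t-\up_i}$, and to run a delayed-SGD argument (in the spirit of \cite{agarwal2011distributed}) against the \emph{population} gradients $\nabla^g L_i$ that appear on the left-hand side. The only genuinely new ingredients relative to single-agent delayed SGD are that the update aggregates contributions from $P$ clients (so the noise must be combined with care) and that the model point appearing in the target, $w^g_{t-\down_i}$, is \emph{not} the base point of the update that consumes the matching gradient; reconciling these two is exactly what produces the ``unprocessed'' displacement term kept on the right-hand side.

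First I would write the one-step inequality for the server update \eqref{eqn: global model update}. Since $\|w^g_*\|\le D$ and $\Pi_D$ is nonexpansive,
\begin{align*}
\|w^g_t - w^g_*\|^2 &\le \|w^g_{t-1} - \eta g_t - w^g_*\|^2 \\
&= \|w^g_{t-1}-w^g_*\|^2 - 2\eta\, \langle g_t,\, w^g_{t-1}-w^g_*\rangle + \eta^2\|g_t\|^2,
\end{align*}
so that $\langle g_t, w^g_{t-1}-w^g_*\rangle \le \tfrac{1}{2\eta}\big(\|w^g_{t-1}-w^g_*\|^2-\|w^g_t-w^g_*\|^2\big) + \tfrac{\eta}{2}\|g_t\|^2$. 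Summing over $t$ telescopes the first group to $\tfrac{\|w^g_*\|^2}{2\eta}$ (taking $w^g_0=0$), which supplies the leading term.

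Next I would handle the two remaining pieces. For the second-order piece $\tfrac{\eta}{2}\sum_t\|g_t\|^2$ I split each $\nabla^g_{i,t-\up_i}$ into its conditional mean $\nabla^g L_i$ plus a zero-mean noise $\xi_{i,t-\up_i}$; because different clients draw data independently and each model argument is measurable with respect to strictly earlier data (here the round-trip delay is used), the cross terms vanish in expectation and $\E\|\,\sum_i\xi_{i,t-\up_i}\|^2 = \sum_i \E\|\xi_{i,t-\up_i}\|^2\le \sum_i\sigma_i^2$, giving the variance contribution $\eta T\sum_i\sigma_i^2$ (the deterministic mean part being of lower order and absorbed with the smoothness/step terms already accounted for in the main proof). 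For the first-order cross term I reindex $s=t-\up_i$ per client so that each stochastic gradient $\nabla^g_{i,s}$ is paired with the base point of the update that actually consumed it; by the deliberate one-round-trip delay in the client update \eqref{eqn: local model update}, this base point depends only on samples strictly before the fresh sample inside $\nabla^g_{i,s}$, so $\E[\nabla^g_{i,s}\mid \text{past}] = \nabla^g L_i(w^g_{s-\down_i},w_{i,s})$ and the stochastic gradient may be replaced by the population gradient in expectation.

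Finally, to match the model point $w^g_{t-\down_i}$ in the statement I insert $w^g_{t-\down_i}-w^g_* = (w^g_{t+\up_i-1}-w^g_*) + (w^g_{t-\down_i}-w^g_{t+\up_i-1})$; the first summand is what the reindexed telescoping produces, while the second is precisely the displacement term carried unchanged into the right-hand side (to be bounded later in Lemma~\ref{lemma: SGD unprocessed term}). The per-client index shifts leave $O(\up_i)$ boundary terms per client, each a single inner product $\langle \nabla^g L_i, w^g-w^g_*\rangle$ bounded by $2DG$, so they sum to $O(DG\sum_i\tau_i)$. I expect the main obstacle to be the bookkeeping that certifies the unbiasedness step: one must verify, for every client, that the argument pair inside the matched gradient is independent of the fresh sample it is evaluated on --- exactly the property the round-trip-delayed client update was engineered to guarantee --- and track the off-by-one index shifts so the leftover pieces are genuinely only boundary terms of size $O(DG\sum_i\tau_i)$ rather than contributing to the dominant rate.
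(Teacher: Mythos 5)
Your high-level architecture (a descent inequality for the server iterates, conditional unbiasedness supplied by the round-trip delay, per-client reindexing, and carrying the displacement term to the right-hand side) matches the paper's proof, but there is a genuine gap exactly where the lemma's sharpness lives. Your one-step inequality pairs the consumed gradient $g_t=\sum_{i=1}^P\nabla^g_{i,t-\up_i}$ with the \emph{pre-update} iterate $w^g_{t-1}$, which forces the term $\frac{\eta}{2}\sum_t\|g_t\|^2$, i.e.\ the full second moment of the aggregated gradient. Writing $g_t=c_t+\xi_t$ with $c_t=\E[g_t\mid \text{past}]=\sum_i\nabla^g L_i(w^g_{t-\tau_i},w_{i,t-\up_i})$, your martingale argument correctly gives $\E\|\xi_t\|^2\le\sum_i\sigma_i^2$, but the mean part only satisfies $\|c_t\|\le PG$, so $\frac{\eta}{2}\sum_t\E\|g_t\|^2$ carries an extra $\order(\eta TP^2G^2)$ that does not appear in the lemma. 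This term is not ``lower order and absorbed in the main proof'': after the $\frac{1}{PT}$ normalization it contributes $\order(\eta PG^2)$, and balancing it against $\frac{\|w^g_*\|^2}{2\eta PT}$ forces a per-round regret of order $\|w^g_*\|G/\sqrt{T}$, with no $1/\sqrt{P}$ gain on the global component --- precisely the improvement over independent per-client SGD that Theorem~\ref{theorem: main for sgd} is designed to establish. Nothing elsewhere in the proof can cancel it.

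The paper avoids this by never letting the mean $c_t$ enter quadratically. Its Lemma~\ref{lemma: gradient update lemma} pairs the gradient with the \emph{post-update} iterate,
\begin{align*}
(w^g_t-w^g_*)\cdot b_t \;\le\; \frac{\|w^g_{t-1}-w^g_*\|^2-\|w^g_t-w^g_*\|^2-\|w^g_t-w^g_{t-1}\|^2}{2\eta},
\end{align*}
so no $\frac{\eta}{2}\|b_t\|^2$ ever appears and a negative increment term is retained. The deviation $\sum_t(w^g_t-w^g_*)\cdot(a_t-b_t)$ is then split into $(w^g_t-w^g_*)\cdot(a_t-c_t)$ (which becomes the displacement term after reindexing), $(w^g_{t-1}-w^g_*)\cdot(c_t-b_t)$ (zero mean because $w^g_{t-1}$ is measurable with respect to the conditioning), and $(w^g_t-w^g_{t-1})\cdot(c_t-b_t)$, which Young's inequality bounds by $\frac{\|w^g_t-w^g_{t-1}\|^2}{4\eta}+\eta\|b_t-c_t\|^2$; the squared-increment piece is absorbed by the retained negative term, so only the variance $\eta T\sum_i\sigma_i^2$ survives. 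You need this device (or an equivalent variance-only argument); as written, your proof establishes a strictly weaker inequality than the lemma claims. A secondary, fixable issue: after your reindexing $s=t+\up_i$, the population gradient you must match, $\nabla^g L_i(w^g_{s-\tau_i-1},w_{i,s-\up_i-1})$, is off by one step from the conditional mean of the consumed gradient $\nabla^g_{i,s-\up_i}$, whose arguments are $(w^g_{s-\tau_i},w_{i,s-\up_i})$; the paper's ordering of the algebra (splitting off $a_t-c_t$ before reindexing) makes this mismatch land inside the displacement telescope with only $\order(DG\sum_i\tau_i)$ boundary cost, whereas patching it afterwards with smoothness or Abel summation again produces terms of the problematic $\eta P^2G^2T$ size.
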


\begin{proof}
\begin{align}
    &\sum_{t=1}^T \sum_{i=1}^P (w_{t-\down_i}^g-w^g_*)\cdot \nabla^g L_i(w_{t-\down_i-1}^g, w_{i,t-1})   \nonumber \\
    &= \sum_{t=1}^T (w^g_t-w_*^g)\cdot \underbrace{ \sum_{i=1}^P \nabla^g L_i(w_{t-1}^g, w_{i,t+\down_i-1}) }_{a_t} + \order\left(DG\sum_{i=1}^P \down_{i} \right)   \nonumber \\
    &= \sum_{t=1}^T (w_t^g-w_*^g)\cdot \underbrace{ \left( \sum_{i=1}^P  \nabla^g_{i,t-\up_i}  \right) }_{b_t} + \sum_{t=1}^T (w_t^g - w^g_*)\cdot (a_t-b_t) + \order\left(DG\sum_{i=1}^P \down_{i} \right)\nonumber  \\
    &\leq \sum_{t=1}^T \frac{\|w_*^g - w_{t-1}^g\|^2 - \|w_*^g - w_t^g\|^2-\|w^g_{t-1}-w^g_{t}\|^2}{2\eta}  + \underbrace{\sum_{t=1}^T (w_t^g - w^g_*)\cdot (a_{t}-b_{t})}_{\term_1} + \order\left(DG\sum_{i=1}^P \down_{i} \right).  \label{eqn: first decompose}
\end{align}
Note that $b_t$ is the gradient that is used to update the global model from $w^g_{t-1}$ to $w^g_{t}$ (Eq.\eqref{eqn: global model update}). Therefore using Lemma~\ref{lemma: gradient update lemma} we have the last equality.

We continue to bound $\term_1$. We use $c_t $ to denote the expectation of $b_t$ conditioned on all examples that reach the server before time $t$. That is,
\begin{align}
    c_t
    &= \E\Big[b_t~\Big|~ \ell_{i,s}: s<t-\up_i\Big] \nonumber \\
    &= \E\left[\sum_{i=1}^P \Delta_{i,t-\up_i}^g ~\Big|~ \ell_{i,s}: s<t-\alpha_i\right] \nonumber \\
    &= \sum_{i=1}^P \E\left[\nabla^g \ell_{i,t-\up_i}(w^g_{t-\up_i-\down_i}, w_{i,t-\up_i})~\Big|~ \ell_{i,s}: s<t-\alpha_i \right]  \nonumber \\
    &= \sum_{i=1}^P \nabla^g L_i(w^g_{t-\up_i-\down_i}, w_{i,t-\up_i}).
    \label{eqn: interpret c tau}
\end{align}
The last equality comes from the fact that $w^g_{t-\down_i-\up_i}$ and $w_{i,t-\up_i}$ only depend on $\ell_{i,s}$ with $s<t-\up_i$ (see update rules \eqref{eqn: local model update}, \eqref{eqn: global model update}).
Then we can decompose $\term_1$ as follows:
\begin{align}
    \term_1
    &= \sum_{t=1}^T (w_t^g - w^g_*)\cdot (a_{t}-b_{t}) \nonumber \\
    &= \sum_{t=1}^T (w_t^g - w^g_*)\cdot (a_{t}-c_{t}) + \sum_{t=1}^T (w_{t-1}^g-w_*^g)\cdot(c_t-b_t) + \sum_{t=1}^T (w^g_t-w^g_{t-1})\cdot(c_t-b_t).  \label{eqn: term 1 decomposition}
\end{align}
Since $w^g_{t-1}$ only depends on $\{\ell_{i,s}: s< t-\up_i\}$ (by Algorithm~\ref{alg: SGD server}), the conditional expectation of the second term in \eqref{eqn: term 1 decomposition} is
\begin{align}
    \E\left[\sum_{t=1}^T (w_{t-1}^g-w_*^g)\cdot(c_t-b_t)~\Bigg|~\ell_{i,s}: s< t-\up_i\right]
    = \sum_{t=1}^T (w_{t-1}^g-w_*^g)\cdot\E\Big[c_t-b_t~\Big|~\ell_{i,s}: s< t-\up_i\Big]=0
    \label{eqn: further decompose 1}
\end{align}
by Eq.\eqref{eqn: interpret c tau}.
The third term in \eqref{eqn: term 1 decomposition} can be bounded as
\begin{align}
    \sum_{t=1}^T (w^g_t-w^g_{t-1})\cdot(c_t-b_t) \leq \sum_{t=1}^T \frac{\|w^g_t-w^g_{t-1}\|^2}{4\eta} + \eta \sum_{t=1}^T \|b_t-c_t\|^2. \label{eqn: further decompose 2}
\end{align}
Observe that $\E\left[\|b_t-c_t\|^2~|~\ell_{i,s}:s<t-\up_i\right] = \Var\left[b_t~|~\ell_{i,s}:s<t-\up_i\right]$. By the independence among the examples from different clients, we can bound
\begin{align}
    \E\left[\sum_{t=1}^T \|b_t-c_t\|^2\right] = \sum_{t=1}^T\sum_{i=1}^P \Var[\nabla^g_{i,t-\up_i}] \leq T\sum_{i=1}^P \sigma_i^2. \label{eqn: global bound covariance}
\end{align}
Now we deal with the first term in \eqref{eqn: term 1 decomposition}:
\begin{align}
    &\sum_{t=1}^T (w_t^g - w_*^g) \left( \sum_{i=1}^P   \nabla^g L_i(w_{t-1}^g, w_{i,t+\down_i-1}) -  \sum_{i=1}^P \nabla^g L_i(w^g_{t-\up_i-\down_i}, w_{i,t-\up_i}) \right) \nonumber   \\
    &= \sum_{i=1}^P\sum_{t=1}^{T}   (w_{t-\down_i}^g- w_{t+\up_i-1}^g)\cdot  \nabla^g L_i(w_{t-\down_i-1}^g, w_{i,t-1}) + \order\left(DG\sum_{i=1}^P \tau_i\right). \tag{re-indexing}  \\
    \label{eqn: further decompose 3}
\end{align}

Combining Eq.\eqref{eqn: first decompose}-\eqref{eqn: further decompose 3}, we see that the right-hand side of \eqref{eqn: first decompose}, after taking expectation, is upper bounded by
\begin{align*}
    &\frac{\|w_{0}^g-w_*^g \|^2}{2\eta} + \sum_{t=1}^T  \left(\frac{-1}{2\eta} + \frac{1}{4\eta}\right)\E\left[\|w_t^g-w_{t-1}^g\|^2 \right] + \eta T\sum_{i=1}^P \sigma_i^2 \\
    &\qquad + \E\left[\sum_{i=1}^P\sum_{t=1}^{T}   (w_{t-\down_i}^g- w_{t+\up_i-1}^g)\cdot  \nabla^g L_i(w_{t-\down_i-1}^g, w_{i,t-1})\right] + \order\left(DG\sum_{i=1}^P \tau_i\right)\\
    &\leq \frac{\|w_*^g\|^2}{2\eta} + \eta T\sum_{i=1}^P \sigma_i^2 + \E\left[\sum_{i=1}^P\sum_{t=1}^{T}   (w_{t-\down_i}^g- w_{t+\up_i-1}^g)\cdot  \nabla^g L_i(w_{t-\down_i-1}^g, w_{i,t-1})\right] + \order\left(DG\sum_{i=1}^P \tau_i\right).
\end{align*}
\end{proof}

\begin{lemma}
    \label{lemma: SGD local term bound}
    \begin{align*}
        &\E\left[\sum_{t=1}^T\sum_{i=1}^P (w_{i,t}-w_{i,*})\cdot \nabla^l L_i(w_{i,t-\down_i-1}^g, w_{i,t-1})\right] \\
        &\leq \sum_{i=1}^P \frac{\|w_{i,*}\|^2}{2\eta_i} + T\sum_{i=1}^P \eta_i \sigma_i^2  + \E\left[\sum_{t=1}^{T} \sum_{i=1}^P (w_{i,t}-w_{i,t+\down_i+\up_i-1})\cdot \nabla^\ell L_i(w_{i,t-\down_i-1}^g, w_{i,t-1})\right] + \order\left(DG\sum_{i=1}^P \tau_i\right).
    \end{align*}
\end{lemma}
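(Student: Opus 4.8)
The plan is to mirror the proof of Lemma~\ref{lemma: first term SGD} (its global counterpart), exploiting the symmetry that the client's \emph{delayed} update in \eqref{eqn: local model update} was engineered to create. The role played there by the server's update $w^g_{t-1}\to w^g_t$ is played here by the client's update $w_{i,t-1}\to w_{i,t}$, which uses the one-round-trip-delayed stochastic gradient $b_{i,t}\triangleq \nabla^l_{i,t-\tau_i}=\nabla^l\ell_{i,t-\tau_i}(w^g_{t-\tau_i-\down_i}, w_{i,t-\tau_i})$. The one simplification over the global case is that the prediction at round $t$ uses the \emph{current} local model $w_{i,t}$, so no re-indexing of the factor $(w_{i,t}-w_{i,*})$ is needed at the outset.

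First I would apply the projection/descent lemma (Lemma~\ref{lemma: gradient update lemma}) to \eqref{eqn: local model update} and telescope, using $w_{i,0}=0$, to obtain for each $i$
\[
\sum_{t=1}^T (w_{i,t}-w_{i,*})\cdot b_{i,t}\le \frac{\|w_{i,*}\|^2}{2\eta_i}-\sum_{t=1}^T\frac{\|w_{i,t-1}-w_{i,t}\|^2}{2\eta_i}.
\]
Writing $a_{i,t}\triangleq \nabla^l L_i(w^g_{t-\down_i-1}, w_{i,t-1})$ for the population gradient appearing on the left-hand side of the lemma, I would then split $a_{i,t}=b_{i,t}+(a_{i,t}-b_{i,t})$ and decompose the error exactly as in \eqref{eqn: term 1 decomposition}, introducing $c_{i,t}\triangleq \E[b_{i,t}\mid \{\ell_{i,s}\}_{s<t-\tau_i}]=\nabla^l L_i(w^g_{t-\tau_i-\down_i}, w_{i,t-\tau_i})$, into the three pieces $(w_{i,t}-w_{i,*})\cdot(a_{i,t}-c_{i,t})$, $(w_{i,t-1}-w_{i,*})\cdot(c_{i,t}-b_{i,t})$, and $(w_{i,t}-w_{i,t-1})\cdot(c_{i,t}-b_{i,t})$.

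These three pieces are handled as in the global proof. For the martingale piece, $w_{i,t-1}$ was formed using $\nabla^l_{i,t-1-\tau_i}$ and hence depends only on samples strictly before time $t-\tau_i$, so it is measurable with respect to the conditioning and the piece has zero expectation. For the drift piece I would use Young's inequality to get $\tfrac{\|w_{i,t}-w_{i,t-1}\|^2}{4\eta_i}+\eta_i\|b_{i,t}-c_{i,t}\|^2$; the quadratic part is absorbed by the negative telescoping term above, and since $\nabla^l\ell$ is a sub-vector of $\nabla\ell$ its conditional variance is at most $\sigma_i^2$, giving $\E[\eta_i\sum_t\|b_{i,t}-c_{i,t}\|^2]\le \eta_i T\sigma_i^2$ as in \eqref{eqn: global bound covariance}. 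For the first piece, the full round-trip delay $\tau_i=\down_i+\up_i$ makes the arguments of $c_{i,t}$ coincide with those of $a_{i,t}$ under the shift $t\mapsto t+\tau_i-1$; re-indexing $\sum_t(w_{i,t}-w_{i,*})\cdot c_{i,t}$ by this amount turns $(w_{i,t}-w_{i,*})\cdot(a_{i,t}-c_{i,t})$ into $(w_{i,t}-w_{i,t+\down_i+\up_i-1})\cdot \nabla^l L_i(w^g_{t-\down_i-1}, w_{i,t-1})$, which is exactly the unprocessed residual term retained in the statement, at the cost of the $O(\tau_i)$ summands shifted past the boundary, each bounded by $2D\cdot G$, hence $\order(DG\tau_i)$. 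Summing over $i$ and collecting the four contributions yields the claim.

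The main obstacle I anticipate is the index bookkeeping in this re-indexing together with the measurability check for the martingale piece: one must verify that evaluating the update gradient at the pair $(w^g_{t-\tau_i-\down_i}, w_{i,t-\tau_i})$ the client actually predicted with $\tau_i$ rounds earlier makes the shift \emph{exactly} $\tau_i-1$, with no residual mismatch in either the global or the local argument of $\nabla^l L_i$. This is precisely the symmetric-delay property built into Algorithm~\ref{alg: SGD client}, and it is what confines the delay to the lower-order $\order(DG\tau_i)$ boundary contribution (along with the minor effect of the first $\tau_i$ rounds, where the delayed gradient is unavailable) rather than corrupting the dominant terms.
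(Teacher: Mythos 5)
Your proposal is correct and follows essentially the same route as the paper's proof: the paper likewise applies Lemma~\ref{lemma: gradient update lemma} to the delayed update gradient $e_{i,t}=\nabla^l_{i,t-\down_i-\up_i}$ (your $b_{i,t}$), decomposes the remaining error via the conditional mean $f_{i,t}=\nabla^l L_i(w^g_{t-2\down_i-\up_i},w_{i,t-\down_i-\up_i})$ (your $c_{i,t}$) into the same martingale, drift, and re-indexed residual pieces, and handles each exactly as you describe, including the $\order(DG\tau_i)$ boundary cost and absorption of the Young-inequality term by the negative telescoping term.
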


\begin{proof}
This proof goes through almost the same procedure as in Lemma~\ref{lemma: first term SGD}'s proof.
\begin{align}
    &\sum_{t=1}^T\sum_{i=1}^P (w_{i,t}-w_{i,*})\cdot \underbrace{\nabla^l L_i(w_{i,t-\down_i-1}^g, w_{i,t-1})}_{d_{i,t}}  \nonumber  \\
    &= \sum_{t=1}^T\sum_{i=1}^P (w_{i,t}-w_{i,*})\cdot \underbrace{\nabla^l_{i,t-\down_i-\up_i}}_{e_{i,t}} + \sum_{t=1}^T\sum_{i=1}^P  (w_{i,t}-w_{i,*})\cdot \left(d_{i,t} - e_{i,t}\right)  \nonumber \\
    &\leq \sum_{t=1}^T \sum_{i=1}^P \frac{\|w_{i,t-1}-w_{i,*}\|^2 - \|w_{i,t}-w_{i,*}\|^2 - \|w_{i,t-1}-w_{i,t} \|^2}{2\eta_i} + \underbrace{\sum_{t=1}^T\sum_{i=1}^P  (w_{i,t}-w_{i,*})\cdot \left(d_{i,t} - e_{i,t}\right)}_{\term_2}. \label{eqn: local part bound}
\end{align}
The last inequality is by Lemma~\ref{lemma: gradient update lemma} and the fact that $e_{i,t}$ is the gradient that is used to update the local model from $w_{i,t-1}$ to $w_{i,t}$.
To bound $\term_2$, we define
\begin{align*}
    f_{i,t}
    &=\E[e_{i,t}~|~\ell_{i,s}: s< t-\down_i-\up_i ]\\
    &=  \E\left[\nabla^l \ell_{i,t-\down_i-\up_i}(w^g_{t-2\down_i-\up_i}, w_{i,t-\down_i-\up_i})~\Big|~\ell_{i,s}:s<t-\down_i-\up_i\right] \\
    &= \nabla^l L_i(w^g_{t-2\down_i-\up_i}, w_{i,t-\down_i-\up_i})
\end{align*}
because $w^g_{t-2\down_i-\up_i}$ and $w_{i,t-\down_i-\up_i}$ only depend on $\ell_{i,s}$ with $s<t-\down_i-\up_i$.
Then we make the following decomposition:
\begin{align}
    \term_2
    &= \sum_{t=1}^T\sum_{i=1}^P  (w_{i,t}-w_{i,*})\cdot (d_{i,t}-f_{i,t})  + \sum_{t=1}^T\sum_{i=1}^P (w_{i,t-1}-w_{i,*})\cdot (f_{i,t}-e_{i,t}) + \sum_{t=1}^T\sum_{i=1}^P (w_{i,t}-w_{i,t-1})\cdot (f_{i,t}-e_{i,t}).   \label{eqn: term 2 decomposition}
\end{align}
The second term in \eqref{eqn: term 2 decomposition} has zero expectation because
\begin{align}
    &\E[(w_{i,t-1}-w_{i,*})\cdot(f_{i,t}-e_{i,t})~|~\ell_{i,s}:s< t-\down_i-\up_i]   \nonumber \\
    &=(w_{i,t-1}-w_{i,*})\cdot \E[(f_{i,t}-e_{i,t})~|~\ell_{i,s}:s< t-\down_i-\up_i] =0.
\end{align}

The third term in \eqref{eqn: term 2 decomposition} can be upper bounded as
\begin{align}
    \sum_{t=1}^T\sum_{i=1}^P (w_{i,t}-w_{i,t-1})\cdot (f_{i,t}-e_{i,t}) \leq \sum_{t=1}^T \sum_{i=1}^P \left(\frac{\|w_{i,t}-w_{i,t-1}\|^2}{4\eta_i} + \eta_i\|f_{i,t}-e_{i,t}\|^2\right),
\end{align}
and we note that $\E[\|f_{i,t}-e_{i,t}\|^2~|~\ell_{i,s}:s<t-\down_i-\up_i] = \Var[e_{i,t}~|~\ell_{i,s}:s<t-\down_i-\up_i]$ is the conditional variance of $e_{i,t}$. Since all samples are independent, we can bound
\begin{align}
    &\E\left[\sum_{t=1}^T\|f_{i,t}-e_{i,t}\|^2~\Bigg|~\ell_{i,s}: s<t-\down_i-\up_i \right]\nonumber  \leq \sum_{t=1}^T \Var\left[\nabla_{i,t-\down_i-\up_i}~\Bigg|~\ell_{i,s}: s<t-\down_i-\up_i \right] \leq   T\sigma_i^2.
\end{align}

The first term in \eqref{eqn: term 2 decomposition} is
\begin{align}
    &\sum_{t=1}^T \sum_{i=1}^P (w_{i,t}-w_{i,*})\cdot (d_{i,t}-f_{i,t})   \nonumber \\
    &= \sum_{t=1}^T \sum_{i=1}^P (w_{i,t}-w_{i,*})\cdot \left(\nabla^l L_i(w_{i,t-\down_i-1}^g, w_{i,t-1}) -  \nabla^l L_i(w^g_{t-2\down_i-\up_i}, w_{i,t-\down_i-\up_i})\right)   \nonumber \\
    &= \sum_{t=1}^{T} \sum_{i=1}^P (w_{i,t}-w_{i,t+\down_i+\up_i-1})\cdot \nabla^\ell L_i(w_{i,t-\down_i-1}^g, w_{i,t-1}) + \order\left(DG\sum_{i=1}^P \tau_i\right). \tag{telescoping and reindexing} \\
    & \label{eqn: the smooth term local}
\end{align}

Combining \eqref{eqn: local part bound}-\eqref{eqn: the smooth term local}, we get that the left-hand side of \eqref{eqn: local part bound}, after taking expectation, is upper bounded by
\begin{align*}
    &\sum_{i=1}^P \frac{\|w_{i,0}-w_{i,*}\|^2}{2\eta_i} + \sum_{t=1}^T \sum_{i=1}^P \left(-\frac{1}{2\eta_i} + \frac{1}{4\eta_i}\right)\E[\|w_{i,t}-w_{i,t-1}\|^2] + T\sum_{i=1}^P \eta_i \sigma_i^2 \\
    &\qquad + \E\left[\sum_{t=1}^{T} \sum_{i=1}^P (w_{i,t}-w_{i,t+\down_i+\up_i-1})\cdot \nabla^\ell L_i(w_{i,t-\down_i-1}^g, w_{i,t-1})\right] + \order\left(DG\sum_{i=1}^P \tau_i\right) \\
    &\leq \sum_{i=1}^P \frac{\|w_{i,*}\|^2}{2\eta_i} + T\sum_{i=1}^P \eta_i \sigma_i^2  + \E\left[\sum_{t=1}^{T} \sum_{i=1}^P (w_{i,t}-w_{i,t+\down_i+\up_i-1})\cdot \nabla^\ell L_i(w_{i,t-\down_i-1}^g, w_{i,t-1})\right] + \order\left(DG\sum_{i=1}^P \tau_i\right).
\end{align*}
\end{proof}

The following lemma further deals with the unprocessed terms in Lemma~\ref{lemma: first term SGD} and Lemma~\ref{lemma: SGD local term bound}.
\begin{lemma}
    \label{lemma: SGD unprocessed term}
    \begin{align*}
       &\E\left[ \sum_{t=1}^{T}\sum_{i=1}^P   (w_{t-\down_i}^g- w_{t+\up_i-1}^g)\cdot  \nabla^g L_i(w_{t-\down_i-1}^g, w_{i,t-1})
       + \sum_{t=1}^{T} \sum_{i=1}^P (w_{i,t}-w_{i,t+\down_i+\up_i-1})\cdot \nabla^l L_i(w_{i,t-\down_i-1}^g, w_{i,t-1})
       \right] \\
       &=\order\left( \gamma \eta^2 P^2 G^2 T \sum_{i=1}^P \tau_i^2 + \gamma G^2T \sum_{i=1}^P \eta_i^2 \tau_i^2 + DG\sum_{i=1}^P \tau_i\right).
    \end{align*}
\end{lemma}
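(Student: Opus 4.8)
The plan is to treat the two summations in the statement (call them (I), the global one with $\nabla^g L_i$, and (II), the local one with $\nabla^l L_i$) not separately but as a \emph{single} joint inner product. The key structural fact is that both $\nabla^g L_i$ and $\nabla^l L_i$ are evaluated at the same joint point $\theta_0^{(i,t)}\defeq(w^g_{t-\down_i-1},w_{i,t-1})$. Introducing the joint points
\[
\theta_1^{(i,t)}\defeq(w^g_{t-\down_i},\,w_{i,t}),\qquad
\theta_2^{(i,t)}\defeq(w^g_{t+\up_i-1},\,w_{i,t+\down_i+\up_i-1}),
\]
the $(i,t)$-summand of $\text{(I)}+\text{(II)}$ is exactly $\big(\theta_1^{(i,t)}-\theta_2^{(i,t)}\big)\cdot\nabla L_i\big(\theta_0^{(i,t)}\big)$, since $\theta_1^{(i,t)}-\theta_2^{(i,t)}=(w^g_{t-\down_i}-w^g_{t+\up_i-1},\,w_{i,t}-w_{i,t+\down_i+\up_i-1})$ and $\nabla L_i=(\nabla^gL_i,\nabla^lL_i)$. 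This repackaging is the crux of the argument.

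Next I would apply convexity and smoothness of $L_i$. Splitting $(\theta_1-\theta_2)\cdot\nabla L_i(\theta_0)=(\theta_1-\theta_0)\cdot\nabla L_i(\theta_0)-(\theta_2-\theta_0)\cdot\nabla L_i(\theta_0)$, convexity gives $(\theta_1-\theta_0)\cdot\nabla L_i(\theta_0)\le L_i(\theta_1)-L_i(\theta_0)$, while $\gamma$-smoothness gives $(\theta_2-\theta_0)\cdot\nabla L_i(\theta_0)\ge L_i(\theta_2)-L_i(\theta_0)-\tfrac{\gamma}{2}\|\theta_2-\theta_0\|^2$. Combining,
\[
\big(\theta_1^{(i,t)}-\theta_2^{(i,t)}\big)\cdot\nabla L_i\big(\theta_0^{(i,t)}\big)
\;\le\; L_i\big(\theta_1^{(i,t)}\big)-L_i\big(\theta_2^{(i,t)}\big)+\frac{\gamma}{2}\big\|\theta_2^{(i,t)}-\theta_0^{(i,t)}\big\|^2 .
\]
All iterates lie in the radius-$D$ ball (they are outputs of $\Pi_D$), so each block of every $\theta$ has norm $\le D$ and the inequalities apply; moreover they hold pathwise, so the expectation in the statement is immaterial.

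The decisive observation is that the function-difference terms telescope. Because $t+\up_i-1=(t+\down_i+\up_i-1)-\down_i$, we have the identity $\theta_2^{(i,t)}=\theta_1^{(i,\,t+\down_i+\up_i-1)}=\theta_1^{(i,\,t+\tau_i-1)}$. Hence, writing $h_i(t)\defeq L_i(\theta_1^{(i,t)})\in[0,1]$, the sum $\sum_{t=1}^T\big(h_i(t)-h_i(t+\tau_i-1)\big)$ is a gapped telescope that collapses to the first and last $\tau_i-1$ terms, contributing $\order(\tau_i)$ per client and thus $\order(\sum_i\tau_i)$ overall, absorbed into the stated $DG\sum_i\tau_i$ term. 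For the remainder I would bound per-step displacements by nonexpansiveness of $\Pi_D$ together with the gradient bound $G$: $\|w^g_s-w^g_{s+1}\|\le\eta PG$ and $\|w_{i,s}-w_{i,s+1}\|\le\eta_iG$. Since $\theta_2^{(i,t)}$ and $\theta_0^{(i,t)}$ differ by a window of $\tau_i$ steps in each block, $\|\theta_2^{(i,t)}-\theta_0^{(i,t)}\|^2\le\eta^2P^2G^2\tau_i^2+\eta_i^2G^2\tau_i^2$; summing $\tfrac{\gamma}{2}\|\cdot\|^2$ over $t\in[T]$ and $i\in[P]$ yields precisely $\order\!\big(\gamma\eta^2P^2G^2T\sum_i\tau_i^2+\gamma G^2T\sum_i\eta_i^2\tau_i^2\big)$.

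The main obstacle is recognizing that (I) and (II) must be combined. If one bounds each separately, the convexity/smoothness step produces function differences whose \emph{other} coordinate is frozen at the wrong time index, so they do not telescope, leaving an $\order(1)$ contribution per round and only the crude Cauchy--Schwarz estimate $\order(\eta PG^2T\sum_i\tau_i)$ --- linear in the step size and in $\tau_i$, and carrying no $\gamma$ --- which is far too weak to be a lower-order term. The joint packaging converts the potentially $\order(T)$ leading function differences into a telescoping $\order(\sum_i\tau_i)$ boundary term and pushes the true delay cost into the quadratic-in-$(\eta,\tau)$ smoothness remainder. The only delicate point is the index bookkeeping confirming $\theta_2^{(i,t)}=\theta_1^{(i,t+\tau_i-1)}$ and that the windows defining $\|\theta_2^{(i,t)}-\theta_0^{(i,t)}\|$ span exactly $\tau_i$ steps in each block.
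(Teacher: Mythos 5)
Your proposal is correct and takes essentially the same route as the paper's own proof: the paper likewise packages the two sums into the single joint inner product $\sum_{t,i}(u_{i,t}-u_{i,t+\tau_i-1})\cdot\nabla L_i(u_{i,t-1})$ with $u_{i,t}=(w^g_{t-\down_i},w_{i,t})$, applies the same combined convexity-plus-smoothness inequality (Lemma~\ref{lemma: useful}), telescopes the resulting function differences into $\order\left(DG\sum_{i=1}^P\tau_i\right)$ boundary terms, and bounds the quadratic remainder via the per-step displacements $\eta PG$ and $\eta_i G$ over windows of $\tau_i$ steps. The only differences are cosmetic (you re-derive the key inequality inline rather than citing it, and you bound the telescope's boundary terms by $\ell_{i,t}\in[0,1]$ where the paper's stated constant is $DG$, obtainable instead from the $G$-Lipschitzness of $L_i$ over the radius-$D$ ball).
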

\begin{proof}
    Define the joint parameter $u_{i,t}=(w^g_{t-\down_i}, w_{i,t})$. Then the left-hand side can be written as
    \begin{align*}
        \sum_{t=1}^T \sum_{i=1}^P (u_{i,t} - u_{i,t+\down_i+\up_i-1})\cdot \nabla L_i(u_{i,t-1}) = \sum_{t=1}^T \sum_{i=1}^P (u_{i,t} - u_{i,t+\tau_i-1})\cdot \nabla L_i(u_{i,t-1}).
    \end{align*}
    By Lemma~\ref{lemma: useful}, we can bound it by
    \begin{align*}
        \sum_{t=1}^T \sum_{i=1}^P (u_{i,t} - u_{i,t+\tau_i-1})\cdot \nabla L_i(i,u_{t-1})
        &\leq \sum_{t=1}^T \sum_{i=1}^P \left( L_i(u_{i,t})-L_i(u_{i,t+\tau_i-1}) + \frac{\gamma}{2} \|u_{i,t+\tau_i-1}-u_{i,t}\|^2 \right)\\
        & = \frac{\gamma}{2}\sum_{t=1}^T \sum_{i=1}^P \|u_{i,t+\tau_i-1}-u_{i,t}\|^2 + \order\left(DG\sum_{i=1}^T \tau_i\right).
    \end{align*}
    By our update rule, we have $\|w^g_{t+\tau_i-1}-w^g_t\|^2 \leq (\eta \tau_i PG)^2$ (from $t$ to $t+\tau_i-1$, there are $\tau_iP$ gradient updates for $w^g$) and $\|w_{i,t+\tau_i-1}-w_{i,t}\|^2 \leq (\eta_i\tau_i G)^2$. Combining them finishes the proof.
\end{proof}

\begin{lemma}
    \label{lemma: gradient update lemma}
    Let $w'=\Pi_{\Omega}(w-\eta g)$, where $\Pi_\Omega: \mathbb{R}^d\rightarrow \mathbb{R}^d$ is the projection operator that projects the input vector to the convex set $\Omega\subset \mathbb{R}^d$, and $w\in\Omega$, $g\in\mathbb{R}^d$, $\eta>0$. Then we have for any $w_*\in \Omega$,
    \begin{align*}
        (w'-w_*)\cdot g \leq \frac{\|w-w_*\|^2-\|w'-w_*\|^2-\|w'-w\|^2}{2\eta}.
    \end{align*}
\end{lemma}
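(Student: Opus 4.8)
The plan is to treat this as the standard one-step progress inequality for projected gradient descent, which follows from the variational characterization of the Euclidean projection combined with the three-point (polarization) identity. The statement is a purely deterministic fact about a single update $w' = \Pi_\Omega(w - \eta g)$, so none of the probabilistic or federated structure of the surrounding analysis enters; I only need to exploit convexity of $\Omega$.

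First I would invoke the obtuse-angle / first-order optimality property of the projection. Since $w'$ is by definition the closest point of the convex set $\Omega$ to $w - \eta g$, optimality of the defining minimization gives $\big((w - \eta g) - w'\big)\cdot(u - w') \leq 0$ for every $u \in \Omega$. I would then instantiate this at the feasible comparator $u = w_* \in \Omega$ and rearrange to isolate the quantity of interest, arriving at $\eta\, g\cdot(w' - w_*) \leq (w - w')\cdot(w' - w_*)$.

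The second and final step is to rewrite the right-hand inner product using the polarization identity $a\cdot b = \tfrac12(\|a+b\|^2 - \|a\|^2 - \|b\|^2)$ applied to $a = w - w'$ and $b = w' - w_*$, for which $a + b = w - w_*$. This turns the cross term into $\tfrac12(\|w-w_*\|^2 - \|w-w'\|^2 - \|w'-w_*\|^2)$; dividing through by $\eta > 0$ and using $\|w - w'\| = \|w' - w\|$ yields exactly the claimed bound.

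There is essentially no obstacle here, as the lemma is a routine building block (it is precisely the inequality invoked for the telescoping steps in Lemma~\ref{lemma: first term SGD} and Lemma~\ref{lemma: SGD local term bound}). The one point requiring care is the orientation of the projection inequality: flipping its sign would spoil the subsequent polarization and produce the wrong cross term. As a quick sanity check, in the unconstrained case $\Omega = \mathbb{R}^d$ one has $w' = w - \eta g$ and the inequality holds with equality, which pins down all the constants.
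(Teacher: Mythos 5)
Your proof is correct and follows essentially the same route as the paper's: both invoke the first-order optimality (obtuse-angle) condition of the projection at the comparator $w_*$ and then expand the resulting cross term $(w-w')\cdot(w'-w_*)$ into the three squared norms, which is exactly the ``direct expansion'' (polarization identity) step in the paper. Your unconstrained sanity check confirming equality when $\Omega=\mathbb{R}^d$ is a nice addition but does not change the argument.
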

\begin{proof}
    By the definition of $w'$, it is the minimizer of $\|w'-w+\eta g\|^2$ over $\Omega$. Therefore, by the first-order optimality condition, we have for any $w_*\in \Omega$,
    \begin{align*}
        (w'-w+\eta g)\cdot(w'-w_*)\leq 0.
    \end{align*}
    Rearranging it we get
    \begin{align*}
        (w'-w_*)\cdot g \leq \frac{(w-w')\cdot(w'-w_*)}{\eta} = \frac{\|w-w_*\|^2-\|w'-w_*\|^2-\|w'-w\|^2}{2\eta},
    \end{align*}
    where the last equality can be obtained by direct expansion.
\end{proof}

\begin{lemma}
    \label{lemma: useful}
    For any $\gamma$-smooth convex function $f$, and any $a,b,c$,
    \begin{align*}
        f(a) - f(b) \leq (a-b)\cdot\nabla f(c) + \frac{\gamma}{2}\|a-c\|^2.
    \end{align*}
\end{lemma}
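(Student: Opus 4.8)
The plan is to combine the two structural hypotheses on $f$ — convexity and $\gamma$-smoothness — each instantiated at a carefully chosen pair of points, and then add the resulting inequalities so that the function values and the stray gradient terms cancel, leaving exactly the claimed bound. The point $c$ is the common anchor: both the quadratic slack and the linear gradient term are to be expressed through $\nabla f(c)$.

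First I would invoke the $\gamma$-smoothness definition given in the problem setup, $f(u) - f(v) \leq \nabla f(v)\cdot(u-v) + \frac{\gamma}{2}\|u-v\|^2$, with the choice $u = a$ and $v = c$. This produces $f(a) \leq f(c) + \nabla f(c)\cdot(a-c) + \frac{\gamma}{2}\|a-c\|^2$, which is the only place the quadratic error term $\frac{\gamma}{2}\|a-c\|^2$ enters, and it is the reason the anchor $c$ is paired with $a$ here. Second, I would apply the first-order convexity inequality at the points $b$ and $c$, namely $f(b) \geq f(c) + \nabla f(c)\cdot(b-c)$ (the supporting hyperplane at $c$ lies below the graph). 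Rewriting this as $-f(b) \leq -f(c) - \nabla f(c)\cdot(b-c)$ puts it in a form ready to be summed with the smoothness bound.

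Adding the two displays, the $f(c)$ terms cancel, and the two gradient contributions combine through $\nabla f(c)\cdot(a-c) - \nabla f(c)\cdot(b-c) = \nabla f(c)\cdot(a-b)$, yielding precisely $f(a) - f(b) \leq (a-b)\cdot\nabla f(c) + \frac{\gamma}{2}\|a-c\|^2$. There is no genuine obstacle: the lemma is a one-step interpolation between the smoothness upper bound (anchored at $c$ via $a$, supplying the quadratic slack) and the convex lower bound (anchored at $c$ via $b$). The only thing to be careful about is instantiating each inequality at the correct pair of points so that both gradients are evaluated at the common $\nabla f(c)$ and the linear cross terms telescope into $\nabla f(c)\cdot(a-b)$; this is exactly the asymmetric form needed in Lemmas~\ref{lemma: first term SGD} and~\ref{lemma: SGD local term bound}, where the point $c$ (an outdated model) differs from both $a$ and $b$.
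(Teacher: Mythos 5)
Your proof is correct and is essentially identical to the paper's: the paper likewise adds the convexity inequality $f(c)-f(b)\leq (c-b)\cdot\nabla f(c)$ to the smoothness inequality $f(a)-f(c)\leq (a-c)\cdot\nabla f(c)+\frac{\gamma}{2}\|a-c\|^2$, both anchored at $\nabla f(c)$, so the $f(c)$ terms cancel and the linear terms telescope to $(a-b)\cdot\nabla f(c)$. There is nothing to add.
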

\begin{proof}
    By the convexity and the $\gamma$-smoothness of $f$, we have
    \begin{align*}
        f(c)-f(b)&\leq (c-b)\cdot\nabla f(c), \\
        f(a)-f(c)&\leq (a-c)\cdot\nabla f(c) + \frac{\gamma}{2}\|a-c\|^2.
    \end{align*}
    Adding up two inequalities we get the desired inequality.
\end{proof}

\section{The Failure of the Fictitious-Play Variant of the ERM Algorithm}
\label{sec: comparing three}
In this section, we experimentally compare \alg.SGD (Algorithm~\ref{alg: SGD client},~\ref{alg: SGD server}), \alg.ERM (Algorithm~\ref{alg: ERM client},~\ref{alg: ERM server}), and the \emph{fictitious play} variant of the ERM algorithm that we describe at Eq.\eqref{eqn: fp client} and \eqref{eqn: fp server}. The goal is to show that the last one may take significantly more rounds to converge.

\subsection{Data Generation}
Suppose there is only one client. The feature dimensions are $2$ for both global and local features. The feature vectors $(x^g_t, x^l_t)$ and the label $y_t$ are generated i.i.d. according to
\begin{align*}
    a_t&\sim \mathcal{N}(0,1)\\
    b_t&\sim \mathcal{N}(0,1)\\
    x_t^g &= \begin{bmatrix}
        a_t + \epsilon_t\\
        b_t
    \end{bmatrix} \qquad \text{where $\epsilon_t\sim \mathcal{N}(0,0.25)$} \\
    x_t^l &= \begin{bmatrix}
        1-a_t\\
        1-b_t
    \end{bmatrix}\\
    y_t &= 1
\end{align*}
The loss is defined as $\ell_t(w^g, w^l) = (y_t-w^{g}\cdot x^g_t - w^{l}\cdot x^\ell_t)^2$.
Clearly, the best pair of regressors is $w^g_* = \begin{bmatrix}0 \\ 1\end{bmatrix}$, $w^l_* = \begin{bmatrix}0 \\ 1\end{bmatrix}$, and this pair gives zero average loss.
We run three algorithms for $T=20000$ steps.

\subsection{Algorithms}
We let the parameters be initialized as $w^g_1 = \begin{bmatrix}1\\0\end{bmatrix}, w^l_1 = \begin{bmatrix}1\\0\end{bmatrix}$. Then the goal of the algorithms is to adjust both $w^g_t$ and $w^l_t$ from $ \begin{bmatrix}1\\0\end{bmatrix}$ to $ \begin{bmatrix}0\\1\end{bmatrix}$ since the latter is the optimal solution.

Assume no delays. Then the three algorithms we compare can be simplified as in Algorithm~\ref{alg: FP_SGD}, \ref{alg: FP_ERM}, \ref{alg: FP_FP}. The main difference between Algorithm~\ref{alg: FP_ERM} and \ref{alg: FP_FP} is that in the former, the server (client) re-applies the new parameters from the client (server) to the old samples, but the latter does not. As we mentioned in Section~\ref{subsec: erm}, in terms of computational and communication efficiency, Algorithm~\ref{alg: FP_FP} is actually preferred over Algorithm~\ref{alg: FP_ERM}.
\begin{algorithm}[H]
\caption{\alg.SGD}
    \label{alg: FP_SGD}
    Let $\eta=1.0$ (an arbitrary choice). \\
    \For{$t=1, \ldots, T$}{
        Suffer loss $\ell_t(w_t^g, w_t^l)$ and make updates:
        \begin{align*}
            w^g_{t+1} = w_t^g - \eta \nabla^g \ell_t(w^g_t, w^l_t)\\
            w^l_{t+1} = w^l_t - \eta \nabla^l \ell_t(w^g_t, w^l_t)
        \end{align*}
    }
\end{algorithm}
\begin{algorithm}[H]
\caption{\alg.ERM}
    \label{alg: FP_ERM}
    \For{$t=1, \ldots, T$}{
        Suffer loss $\ell_t(w_t^g, w_t^l)$ and make updates:
        \begin{align*}
            w^g_{t+1} &= \argmin_{w^g} \sum_{s=1}^t \ell_s(w^g, w^l_t) \\
            w^l_{t+1} &= \argmin_{w^l} \sum_{s=1}^t \ell_s(w^g_{t}, w^l)
        \end{align*}
    }
\end{algorithm}
\begin{algorithm}[H]
\caption{Fictitious Play}
    \label{alg: FP_FP}
    \For{$t=1, \ldots, T$}{
        Suffer loss $\ell_t(w_t^g, w_t^l)$ and make updates:
        \begin{align*}
            w^g_{t+1} &= \argmin_{w^g} \sum_{s=1}^t \ell_s(w^g, w^l_s) \\
            w^l_{t+1} &= \argmin_{w^l} \sum_{s=1}^t \ell_s(w^g_s, w^l)
        \end{align*}
    }
\end{algorithm}

\subsection{Comparing the performance}
We compare the average loss performances of the three algorithms over time, and observe that the Fictitious-play strategy is highly sub-optimal (Figure~\ref{Fig: loss compare}). All plots in this section are an average over $50$ random rollouts.
\begin{figure}[H]
\includegraphics[width=10cm]{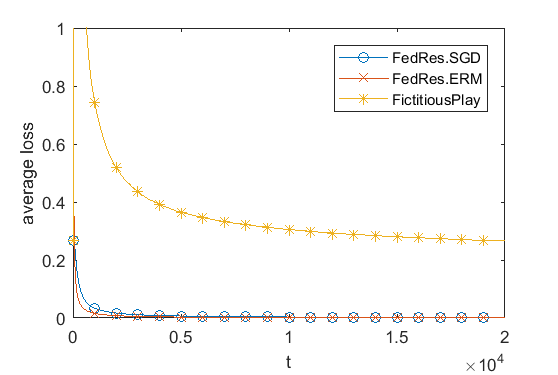}
\caption{Comparing the average loss performance among \alg.SGD (Algorithm~\ref{alg: FP_SGD}), \alg.ERM (Algorithm~\ref{alg: FP_ERM}) and the fictitious-play strategy (Algorithm~\ref{alg: FP_FP})}
\label{Fig: loss compare}
\end{figure}

\begin{figure}[H]
\includegraphics[width=8cm]{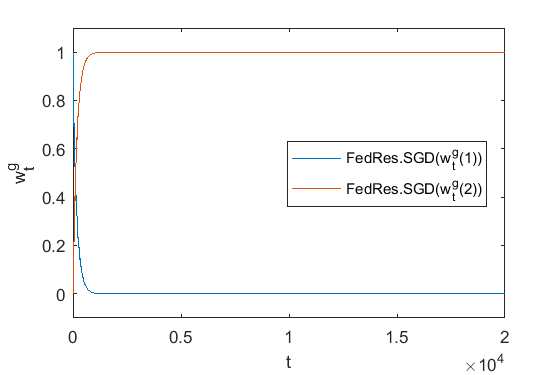}
\includegraphics[width=8cm]{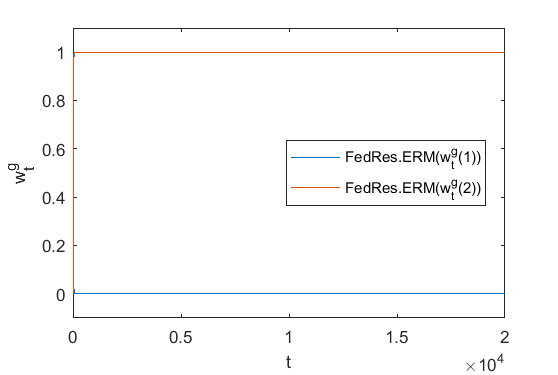}
\includegraphics[width=8cm]{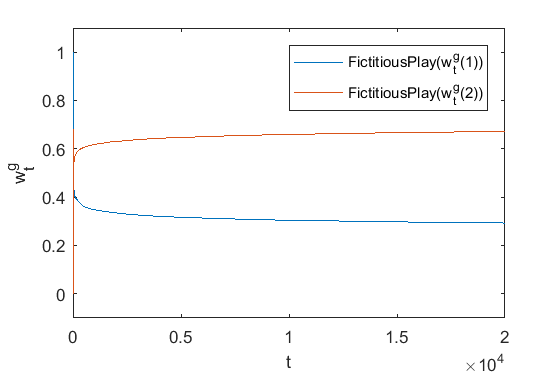}
\caption{The change of $w^g_t$ over time (better viewed with color). Each sub-figure is for one algorithm. The blue lines plot the first component of $w^g_t$, which is initialized as $1$ and the learner should adjust it to $0$; the red lines plot the second component of $w^g_t$, which is initialized as $0$ and should be adjusted to $1$. }
\label{fig: weight change}
\end{figure}

Recall that the goal of the algorithms is to change both $w^g_t$ and $w^l_t$ from $ \begin{bmatrix}1\\0\end{bmatrix}$ to $ \begin{bmatrix}0\\1\end{bmatrix}$. We plot the changes of the components of $w^g_t$ over time for three algorithms in Figure~\ref{fig: weight change}. From Figure~\ref{fig: weight change} we see that while \alg.SGD and \alg.ERM can quickly find the optimal solutions, the fictitious-play strategy gets stuck before reaching the optimum. Our explanation for this phenomenon is below. Observe that by our construction of $(x_t^g, x_t^l, y_t)$, if $w^g_t$ is of the form $\begin{bmatrix}z\\1-z\end{bmatrix}$ (e.g., in the beginning, $z$ is $1$), then it creates a loss for $w^l_t$ as
\begin{align*}
     &\left(1- \begin{bmatrix}z\\1-z\end{bmatrix}\cdot \begin{bmatrix}a_t\\b_t\end{bmatrix} - \begin{bmatrix}w^l_t(1)\\w^l_t(2)\end{bmatrix}\cdot  \begin{bmatrix}1-a_t\\1-b_t\end{bmatrix}\right)^2
     = \bigg((1-w_t^l(1)-w_t^l(2)) + a_t(w_t^l(1)-z) + b_t(w_t^l(2)-1+z)\bigg)^2,
\end{align*}
whose expectation is minimized when $w_t^l(1)=z$ and $w_t^l(2)=1-z$; that is, when $w_t^l=w_t^g$, the expected loss is minimized. Similarly, when $w_t^l$ is fixed, the expected loss minimizer for $w_t^g$ is $w_t^l$. Since the fictitious-play strategy memorizes all previous losses under the outdated parameters, $w_t^g$ tends to be close to the average of $w_s^l$'s with $s<t$; similarly, $w_t^l$ tends to be close to the average of previous $w_s^g$'s. Therefore, the server and the client tend to lock each other, and this makes their updates very slow, which results in the learning curve of the fictitious-play strategy that we observe in Figure~\ref{fig: weight change}.

\section{Proof for Theorem~\ref{theorem: cb}}
\label{section: proof of Theorem CB}

We provide the complete statement of Theorem~\ref{theorem: cb} below.
\begin{rtheorem}{Theorem}{\ref{theorem: cb}}
     With the algorithm stated in Section~\ref{subsec: epsilon-greedy for CB}, and supposed that $\|w_{i,*}\|$ are all upper bounded by $\|w_*^l\|$, the regret can be upper bounded as follows:
     \begin{align*}
          &\E\left[ \frac{1}{PT}\sum_{t=1}^T \sum_{i=1}^P r_{i,t}(a_{i,t}^*) - r_{i,t}(a_{i,t}) \right] \\
          &=  \mathcal{O}\left(\left(\frac{K^4\left(\|w_*^g\|^2 + \sum_{i=1}^P \|w_{i,*}\|^2 \right)\sigma^2}{PT}\right)^{\frac{1}{5}} + \frac{\left(K^6 \gamma D^4 G^2\right)^{\frac{1}{8}}}{T^{\frac{1}{4}}} + + \frac{K(\gamma D^4 G^2 \tau^2)^{\frac{1}{6}}+ (K^2 DG)^{\frac{1}{3}}}{T^{\frac{1}{3}}} + \frac{K\sqrt{DG\tau}}{\sqrt{T}}\right).
     \end{align*}
\end{rtheorem}

\begin{proof}
Below we derive the regret bound using the theorem for \alg.SGD (Theorem~\ref{theorem: main for sgd}). Since the update of model parameters are only once per $B$ rounds, the equivalent delay for client $i$ is $\lceil \frac{\tau_i}{B} \rceil$. Using Theorem~\ref{theorem: main for sgd}, we have the following bound:
\begin{align*}
    &\E\left[\frac{B}{PT} \sum_{i=1}^P\sum_{t=1}^{T} \left(\ell_{i,t}\left(\widehat{w}^g_{t}, \widehat{w}_{i,t}\right) - \ell_{i,t}(w^g_*, w_{i,*})\right)\one[t=nB] \right]\\
    &=\mathcal{O}\left(\sqrt{\frac{\left(\|w_*^g\|^2 + \sum_{i=1}^P \|w_{i,*}\|^2\right) \sigma^2}{P\cdot\frac{T}{B}}} + \frac{(\gamma D^4G^2\lceil\frac{\tau}{B}\rceil^2)^{\frac{1}{3}}}{\left(\frac{T}{B}\right)^{\frac{2}{3}}} + \frac{DG\lceil\frac{\tau}{B}\rceil}{\frac{T}{B}} \right)  
\end{align*}
where for simplicity, we assume $\|w_{i,*}\|^2\leq \|w_*^l\|^2$ for all $i$.
By the definition of $\ell_{i,t}$ and the realizability assumption,
\begin{align*}
     &\E\bigg[\bigg(\ell_{i,t}\left(\widehat{w}^g_{t}, \widehat{w}_{i,t}\right) - \ell_{i,t}(w^g_*, w_{i,*})\bigg)\one[t=nB]\bigg] \\
     &=\E\bigg[\bigg(2r_{i,t}(a_{i,t})-f(x_{i,t}(a_{i,t}); w^g_*, w_{i,*})-f(x_{i,t}(a_{i,t}); \widehat{w}^g_t, \widehat{w}_{i,t})\bigg)
     \bigg(f(x_{i,t}(a_{i,t}); w^g_*, w_{i,*})-f(x_{i,t}(a_{i,t}); \widehat{w}^g_t, \widehat{w}_{i,t})\bigg)\one[t=nB] \bigg] \\
     &=\E\left[\bigg(f(x_{i,t}(a_{i,t}); w^g_*, w_{i,*})-f(x_{i,t}(a_{i,t}); \widehat{w}^g_t, \widehat{w}_{i,t})\bigg)^2\one[t=nB]\right]\\
     &=\E\left[\frac{1}{K}\sum_{a=1}^K\bigg(f(x_{i,t}(a); w^g_*, w_{i,*})-f(x_{i,t}(a); \widehat{w}^g_t, \widehat{w}_{i,t})\bigg)^2\one[t=nB]\right]
\end{align*}

Therefore,
\begin{align*}
    &\E\left[\frac{B}{PKT} \sum_{i=1}^P\sum_{t=1}^{T} \sum_{a=1}^K\bigg(f(x_{i,t}(a); w^g_*, w_{i,*})-f(x_{i,t}(a); \widehat{w}^g_t, \widehat{w}_{i,t})\bigg)^2\one[t=nB] \right]\\
    &=\mathcal{O}\left(\sqrt{\frac{\left(\|w_*^g\|^2 + \sum_{i=1}^P \|w_{i,*}\|^2\right) \sigma^2 B}{PT}} + \frac{\left(\gamma D^4 G^2(\tau+B)^2\right)^{\frac{1}{3}}}{T^\frac{2}{3}} + \frac{DG(\tau + B)}{T}\right)
\end{align*}
Due to the i.i.d. assumption, the left-hand side is identical to
\begin{align*}
    \E\left[\frac{1}{PKT} \sum_{i=1}^P\sum_{t=1}^{T} \sum_{a=1}^K\bigg(f(x_{i,t}(a); w^g_*, w_{i,*})-f(x_{i,t}(a); \widehat{w}^g_t, \widehat{w}_{i,t})\bigg)^2\right].
\end{align*}
By Cauchy-Schwarz's inequality,
\begin{align*}
     &\frac{1}{PKT}\sum_{i=1}^P\sum_{t=1}^{T} \sum_{a=1}^K \bigg\vert f(x_{i,t}(a); w^g_*, w_{i,*})-f(x_{i,t}(a); \widehat{w}^g_t, \widehat{w}_{i,t}) \bigg\vert\\
     &\leq \frac{1}{PKT}\left(\sum_{i=1}^P\sum_{t=1}^{T} \sum_{a=1}^K \bigg( f(x_{i,t}(a); w^g_*, w_{i,*})-f(x_{i,t}(a); \widehat{w}^g_t, \widehat{w}_{i,t})\bigg)^2\right)^{\frac{1}{2}}\left(PKT\right)^{\frac{1}{2}}\\
     &= \left(\frac{1}{PKT}\sum_{i=1}^P\sum_{t=1}^{T} \sum_{a=1}^K \bigg( f(x_{i,t}(a); w^g_*, w_{i,*})-f(x_{i,t}(a); \widehat{w}^g_t, \widehat{w}_{i,t})\bigg)^2\right)^{\frac{1}{2}}.
\end{align*}
Combining them, we get
\begin{align}
     &\E\left[\frac{1}{PKT}\sum_{i=1}^P\sum_{t=1}^{T} \sum_{a=1}^K \bigg\vert f(x_{i,t}(a); w^g_*, w_{i,*})-f(x_{i,t}(a); \widehat{w}^g_t, \widehat{w}_{i,t}) \bigg\vert\right]   \nonumber \\
     &= \mathcal{O}\left(\left(\frac{\left(\|w_*^g\|^2 + \sum_{i=1}^P  \|w_{i,*}\|^2 \right)\sigma^2 B}{PT}\right)^{\frac{1}{4}} + \frac{\left(\gamma D^4 G^2(\tau+B)^2\right)^{\frac{1}{6}}}{T^\frac{1}{3}} + \left(\frac{DG(\tau + B)}{T}\right)^{\frac{1}{2}} \right). \label{eqn: CB tmp regret bound}
\end{align}
Now we consider the regret of the contextual bandit problem defined in \eqref{eqn: CB regret}. Notice that by defining $a_{i,t}^*=\argmax_a f(x_{i,t}(a); w^g_*, w_{i,*})$, we have
\begin{align}
     &f(x_{i,t}(a_{i,t}^*); w^g_*, w_{i,*})  - f(x_{i,t}(a_{i,t}); w^g_*, w_{i,*})   \nonumber  \\
     &\leq \Big\vert f(x_{i,t}(a_{i,t}^*); w^g_*, w_{i,*}) - f(x_{i,t}(a_{i,t}^*); \widehat{w}^g_t, \widehat{w}_{i,t}) \Big\vert   \nonumber  \\
     &\qquad + f(x_{i,t}(a_{i,t}^*); \widehat{w}^g_t, \widehat{w}_{i,t}) - f(x_{i,t}(a_{i,t}); \widehat{w}^g_t, \widehat{w}_{i,t}) \nonumber  \\
     &\qquad + \Big\vert f(x_{i,t}(a_{i,t}); \widehat{w}^g_t, \widehat{w}_{i,t}) - f(x_{i,t}(a_{i,t}); w^g_*, w_{i,*}) \Big\vert  \nonumber  \\
     &\leq \underbrace{f(x_{i,t}(a_{i,t}^*); \widehat{w}^g_t, \widehat{w}_{i,t}) - f(x_{i,t}(a_{i,t}); \widehat{w}^g_t, \widehat{w}_{i,t})}_{\term_1} + \underbrace{2\sum_{a=1}^K \Big\vert  f(x_{i,t}(a); w^g_*, w_{i,*}) -  f(x_{i,t}(a); \widehat{w}^g_t, \widehat{w}_{i,t}) \Big\vert}_{\term_2}   \nonumber    \label{eqn: CB regret decomposition}
\end{align}
By our strategy of choosing actions (Eq.\eqref{eqn: CB choose action}), when $t\neq nB$, $\term_1$ is non-positive. Besides, we can bound the sum of $\term_2$ using \eqref{eqn: CB tmp regret bound}. Thus combining everything we get
\begin{align*}
    &\E\left[\frac{1}{PT}\sum_{i=1}^P \sum_{t=1}^T \left(f(x_{i,t}(a_{i,t}^*); w^g_*, w_{i,*})  - f(x_{i,t}(a_{i,t}); w^g_*, w_{i,*})\right)\right]\\
    &\leq \frac{1}{B} + \E\left[\frac{2}{PT} \sum_{i=1}^P \sum_{t=1}^T \sum_{a=1}^K  \Big\vert  f(x_{i,t}(a); w^g_*, w_{i,*}) -  f(x_{i,t}(a); \widehat{w}^g_t, \widehat{w}_{i,t}) \Big\vert \right] \\
    &\leq \frac{1}{B} +  \mathcal{O}\left(K\left(\frac{\left(\|w_*^g\|^2 + \sum_{i=1}^P \|w_{i,*}\|^2\right)\sigma^2 B}{PT}\right)^{\frac{1}{4}}  + K\cdot \frac{\left(\gamma D^4 G^2(\tau+B)^2\right)^{\frac{1}{6}}}{T^\frac{1}{3}} + K\cdot\left(\frac{DG(\tau + B)}{T}\right)^{\frac{1}{2}} \right).
\end{align*}
Seting
\begin{align*}
     B = \min\left\{\left(\frac{PT}{K^4\left(\|w_*^g\|^2 + \sum_{i=1}^P \|w_{i,*}\|^2 \right)\sigma^2}\right)^{\frac{1}{5}},
          \frac{T^{\frac{1}{4}}}{\left(K^6 \gamma D^4 G^2\right)^{\frac{1}{8}}},
          \frac{T^{\frac{1}{3}}}{(K^2DG)^{\frac{1}{3}}} \right\},
\end{align*}
we get the bound of
\begin{align*}
     \mathcal{O}\left(\left(\frac{K^4\left(\|w_*^g\|^2 + \sum_{i=1}^P \|w_{i,*}\|^2 \right)\sigma^2}{PT}\right)^{\frac{1}{5}} + \frac{\left(K^6 \gamma D^4 G^2\right)^{\frac{1}{8}}}{T^{\frac{1}{4}}} + \frac{K(\gamma D^4 G^2 \tau^2)^{\frac{1}{6}}+ (K^2 DG)^{\frac{1}{3}}}{T^{\frac{1}{3}}} + \frac{K\sqrt{DG\tau}}{\sqrt{T}}\right).
\end{align*}
\end{proof}

\section{More Experimental Results}
\label{sec:exp-details}

\subsection{The effect of the number of clients with no delay}
\label{subsec: effect of number of clients}
In Section~\ref{sec: exp}, we showed the effect of the number of workers for four of the datasets we test on (see Figure~\ref{fig: the performance with number of clients group 1} and Figure~\ref{fig: the performance with number of clients group 2}) in the absence of delay. In Figure~\ref{fig: the performance with number of clients} we provide the plots for the other four datasets we use.

\begin{figure}[H]
\subfigure[satimage]{
   \includegraphics[width=8cm]{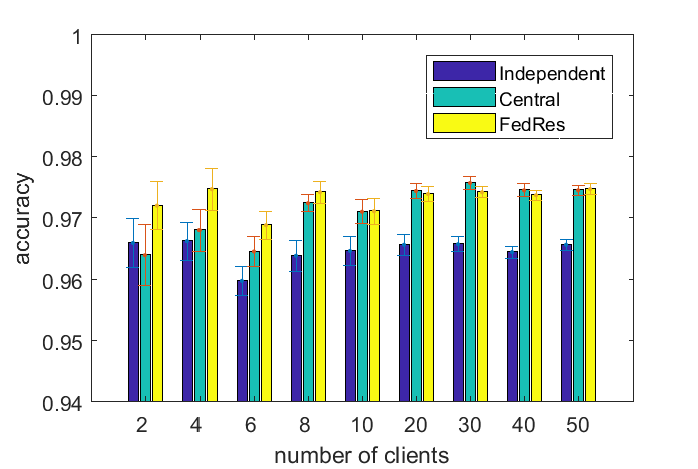}
 }\hfill
\subfigure[usps]{
   \includegraphics[width=8cm]{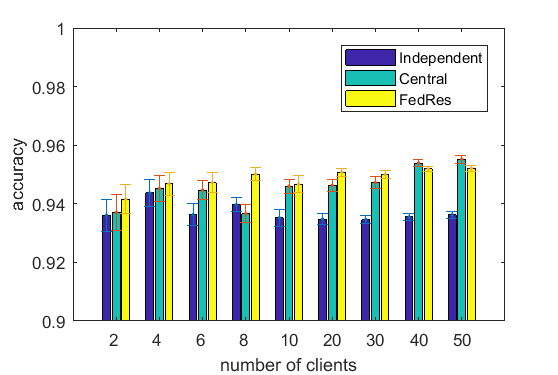}
 }\hfill
\subfigure[shuttle]{
   \includegraphics[width=8cm]{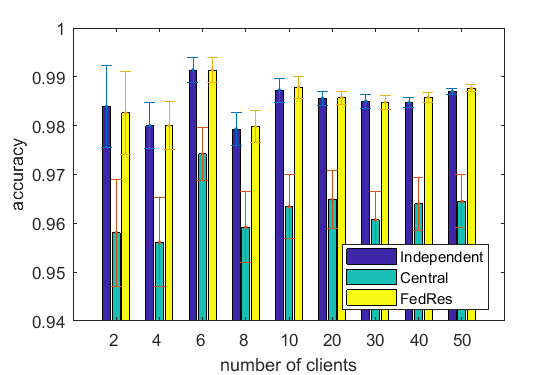}
 }\hfill
\subfigure[covtype]{
   \includegraphics[width=8cm]{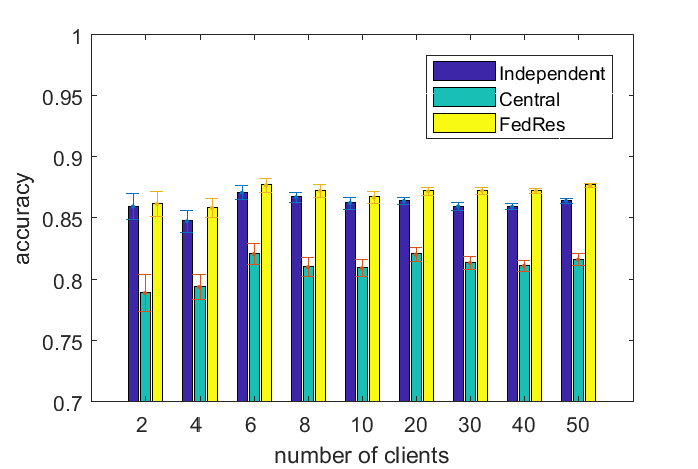}
 }\hfill

 \caption{Test accuracy versus the numbers of clients. In these experiments, we let the delay be zero. Each data point is an average over $50$ random trials. }
 \label{fig: the performance with number of clients}
\end{figure}

\subsection{The effect of delay}
\label{subsec: effect of delay in appendix}
In this section, we extend Section~\ref{subsec: effect of delay}, showing more experimental results to see the effect of delay on the performance of the algorithms. We compare the following three schemes:
\begin{enumerate}
    \item \textbf{Independent without delay}: same as the Independent scheme described in Section~\ref{subsec: test algorithms and implementation}
    \item \textbf{Central with delay}: same as the Central scheme described in Section~\ref{subsec: test algorithms and implementation}, but with delayed communication between the server and the clients.
    \item \textbf{\alg with delay}: same as the \alg described in Section~\ref{subsec: test algorithms and implementation}, but with delayed communication between the server and the clients.
\end{enumerate}
We make the above assumptions because for Central and \alg, there is communications between the server and the clients, while for Independent, all learning happens locally on clients. We plot the test accuracy for the case the number of clients is $50$ under different amount of delay ranging from $0$ to $200$ (for Independent, we simply plot a constant that corresponds to the accuracy without delay). Like in Section~\ref{subsec: experiment for robustness}, we separate the discussions for two types of datasets: those for which Central is better than Independent, and those Independent is better than Central.

\newpage
\paragraph{Type 1 datasets: Central is better than Independent (mnist, satimage, sensorless, usps)}
For this type of datasets, we see from Figure~\ref{figure: type 1 delay} that in three out of the four datasets (mnist, satimage, usps), \alg and Central are robust with delays, while \alg constantly outperform both baselines. For the sensorless dataset, \alg and Central suffer from degradation with delays, among which \alg has a somewhat worse degradation. However, \alg still outperforms Central when the delay is not excessively large.
\begin{figure}[H]
\subfigure[mnist]{
   \includegraphics[width=8cm]{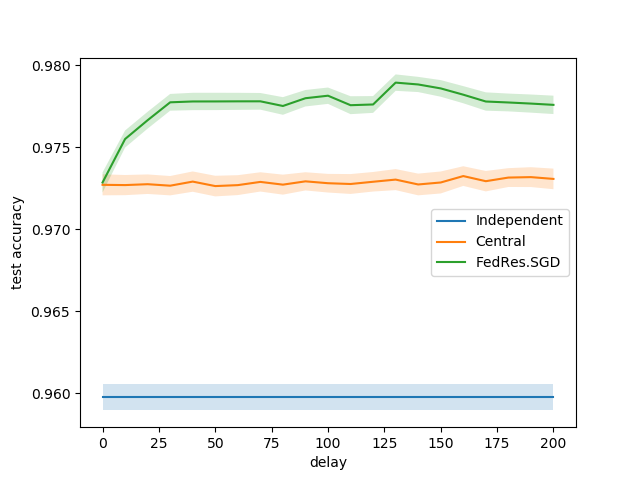}
 }\hfill
\subfigure[satimage]{
   \includegraphics[width=8cm]{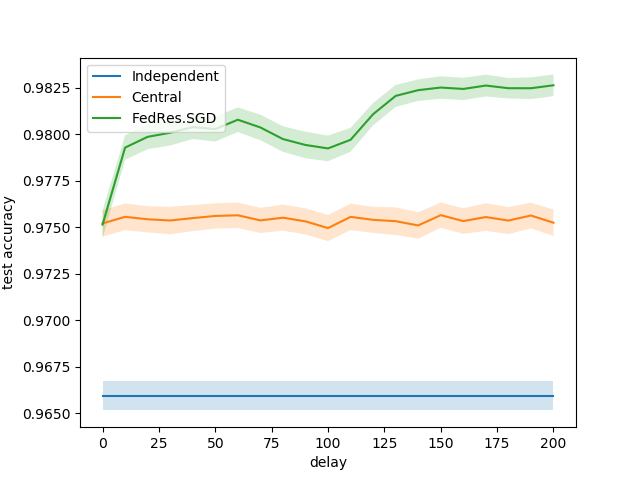}
 }\hfill
\subfigure[sensorless]{
   \includegraphics[width=8cm]{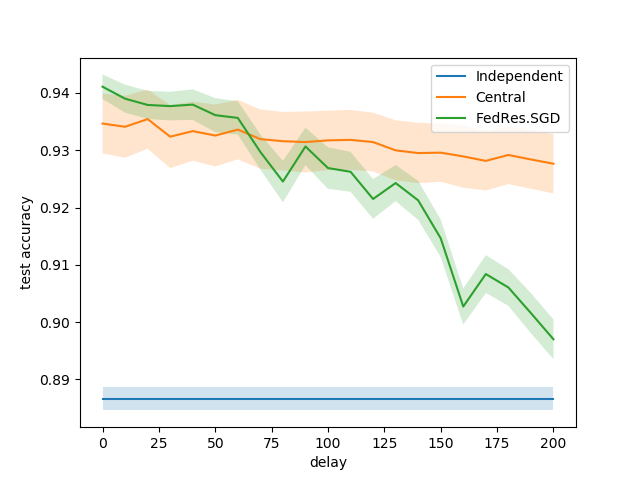}
 }\hfill
\subfigure[usps]{
   \includegraphics[width=8cm]{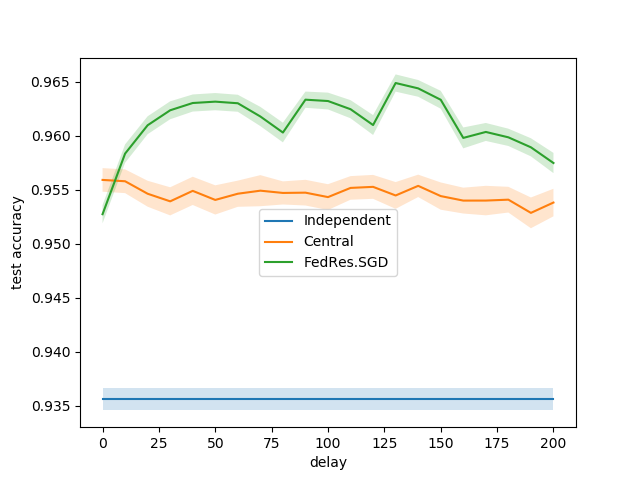}
 }\hfill

\caption{Test accuracy versus delay for mnist, satimage, sensorless, usps. We let the number of clients be $50$. Each data point is an average over $50$ random trials. }
\label{figure: type 1 delay}
\end{figure}

\newpage
\paragraph{Type 2 datasets: Independent is better than Central (letter, pendigits, shuttle, covtype)}
For this type of datasets, we already argued in Section~\ref{subsec: experiment for robustness} that the federated scheme does not provide clear advantages over the Independent baseline. As seen in Figure~\ref{figure: type 2 delay}, when coupled with delay, \alg can actually perform worse than Independent (letter, pendigits, shuttle) even when the delay is of moderate amount. This is likely due to a combination of these datasets not requiring too many samples to learn a good predictor so that the Independent scheme succeeds, and a lack of similarity in the prediction problems across clients which means that the shared global component does not accelerate learning significantly.
\begin{figure}[H]
\subfigure[letter]{
   \includegraphics[width=8cm]{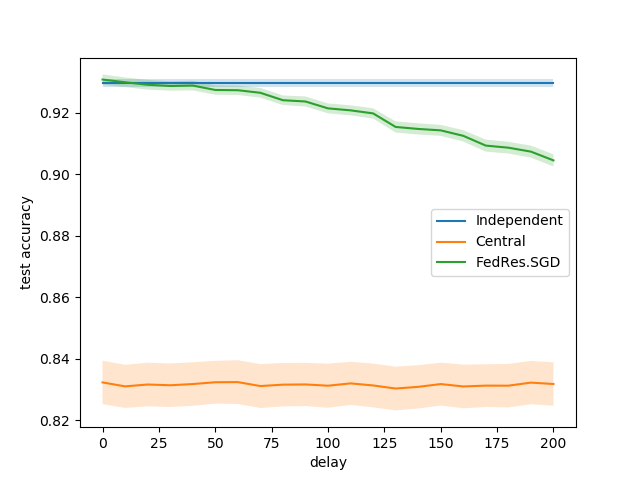}
 }\hfill
\subfigure[pendigits]{
   \includegraphics[width=8cm]{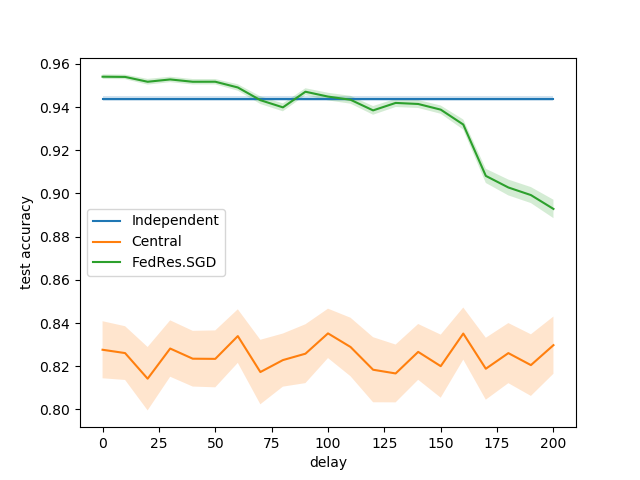}
 }\hfill
\subfigure[shuttle]{
   \includegraphics[width=8cm]{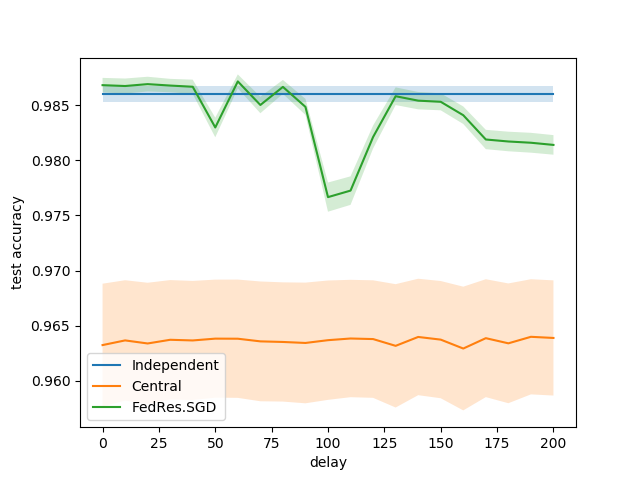}
 }\hfill
\subfigure[covtype]{
   \includegraphics[width=8cm]{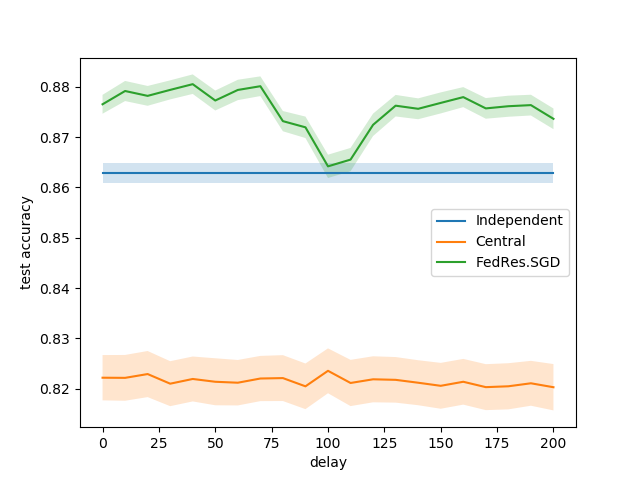}
 }\hfill

\caption{Test accuracy versus delay for letter, pendigits, shuttle, covtype. We let the number of clients be $50$. Each data point is an average over $50$ random trials.}
\label{figure: type 2 delay}
\end{figure}

\subsection{Test accuracy versus the number of clients with delay}
\label{subsec: combined plots}
In this subsection, we provide plots of ``test accuracy versus the number of clients'' under delay (i.e., similar to Figure~\ref{fig: the performance with number of clients} but with delay). As explained in Section~\ref{subsec: effect of delay in appendix}, the we only apply delay on the Central and \alg schemes, but not on Independent. We plot the cases for delay being $20$ and $80$ in Figure~\ref{fig: delay=20 case} and \ref{fig: delay=80 case} respectively. We observe similar patterns in the two figures, with the performance loss being larger for the higher delay, though it is typically overcome as the number of clients increases.

\begin{figure}[H]
\subfigure[mnist]{
   \includegraphics[width=7cm]{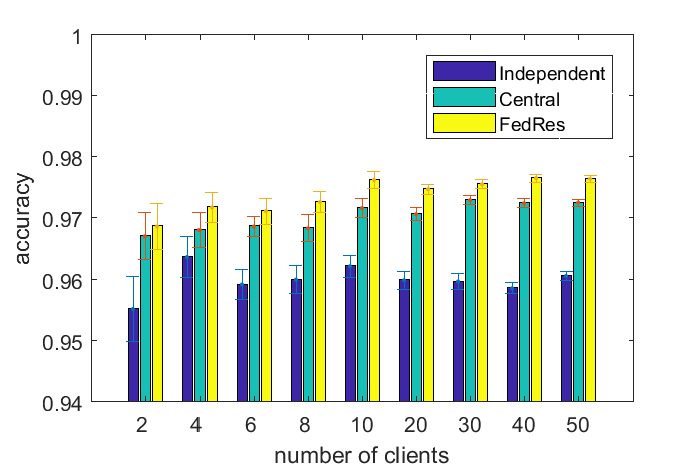}
 }\hfill
\subfigure[satimage]{
   \includegraphics[width=7cm]{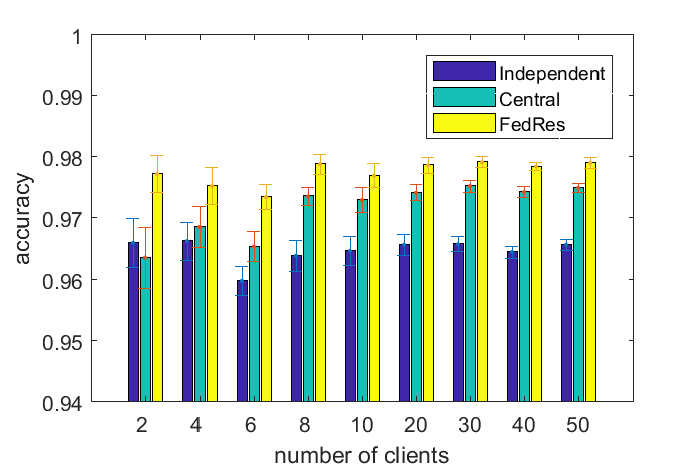}
 }\hfill
\subfigure[sensorless]{
   \includegraphics[width=7cm]{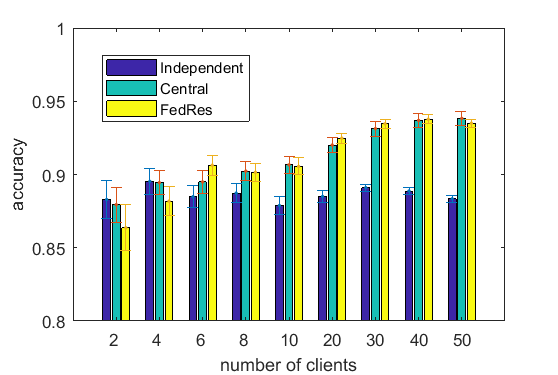}
 }\hfill
\subfigure[usps]{
   \includegraphics[width=7cm]{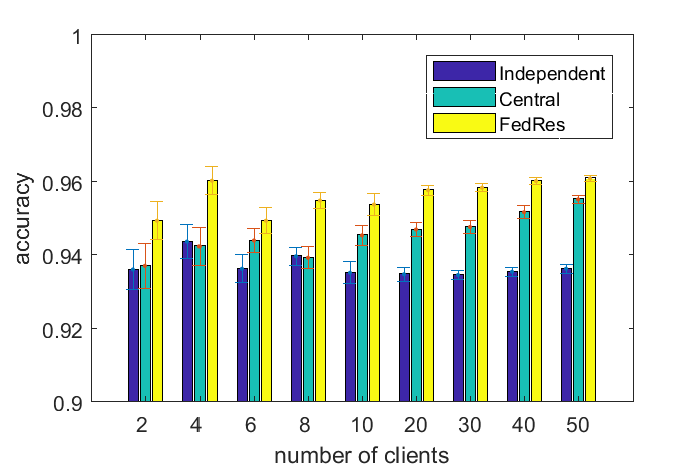}
 }\hfill
\subfigure[letter]{
   \includegraphics[width=7cm]{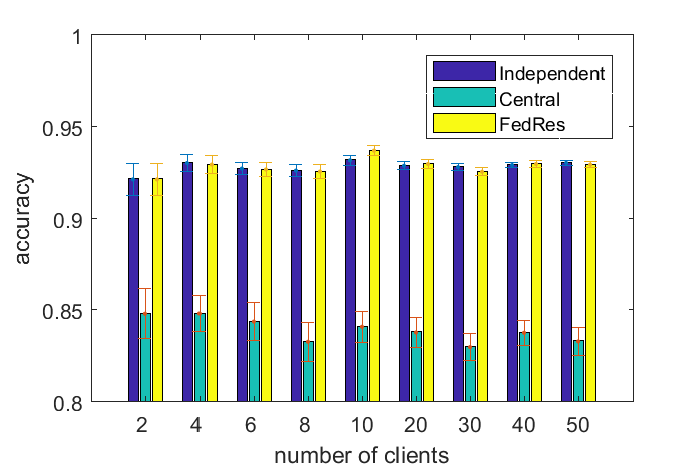}
 }\hfill
\subfigure[pendigits]{
   \includegraphics[width=7cm]{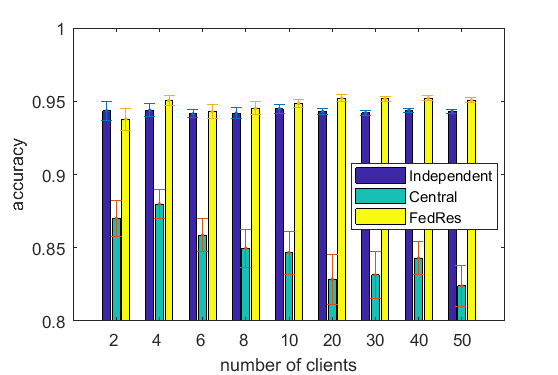}
 }\hfill
\subfigure[shuttle]{
   \includegraphics[width=7cm]{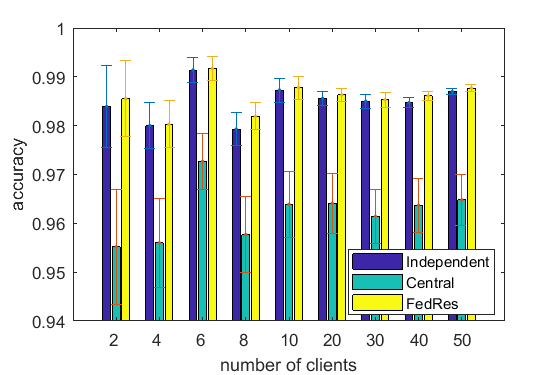}
 }\hfill
\subfigure[covtype]{
   \includegraphics[width=7cm]{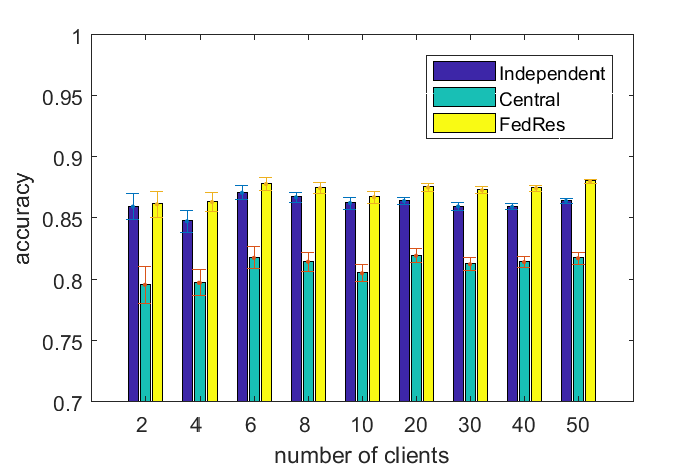}
 }\hfill

\caption{Test accuracy versus the number of clients under fixed delay of $20$. Each data point is an average over $50$ random trials.}
\label{fig: delay=20 case}
\end{figure}

\newpage

\begin{figure}[H]
\subfigure[mnist]{
   \includegraphics[width=7cm]{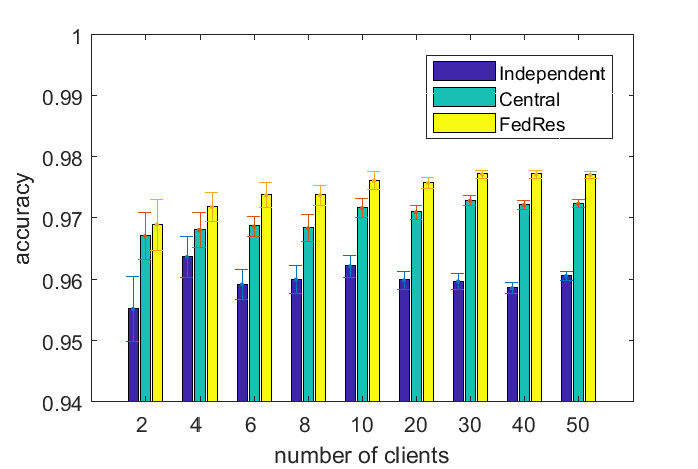}
 }\hfill
\subfigure[satimage]{
   \includegraphics[width=7cm]{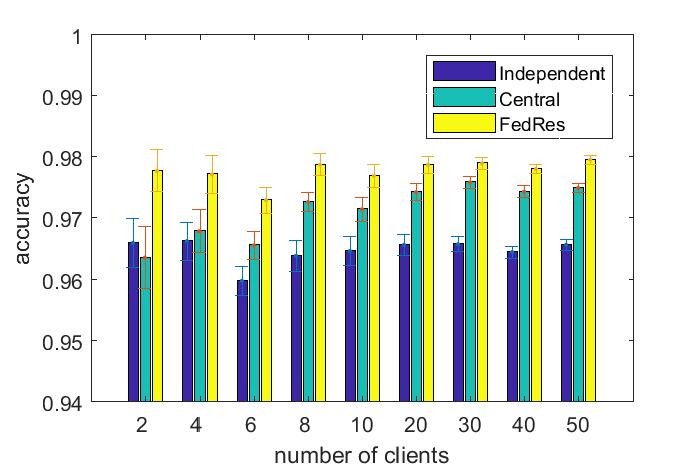}
 }\hfill
\subfigure[sensorless]{
   \includegraphics[width=7cm]{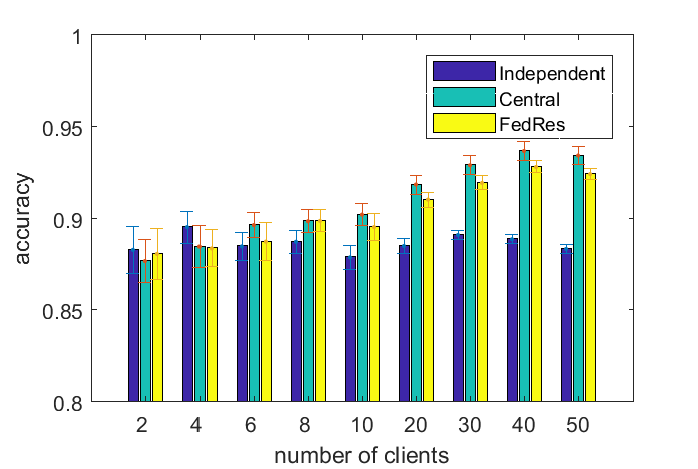}
 }\hfill
\subfigure[usps]{
   \includegraphics[width=7cm]{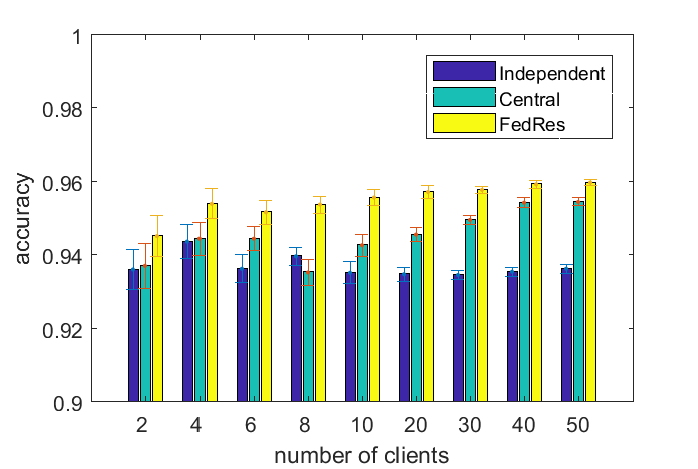}
 }\hfill
\subfigure[letter]{
   \includegraphics[width=7cm]{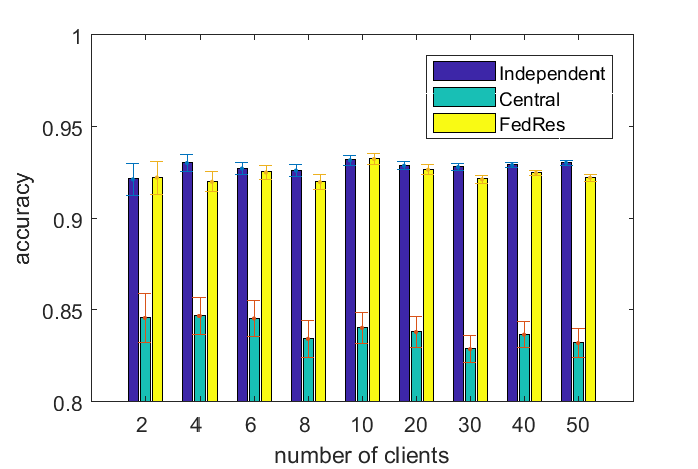}
 }\hfill
\subfigure[pendigits]{
   \includegraphics[width=7cm]{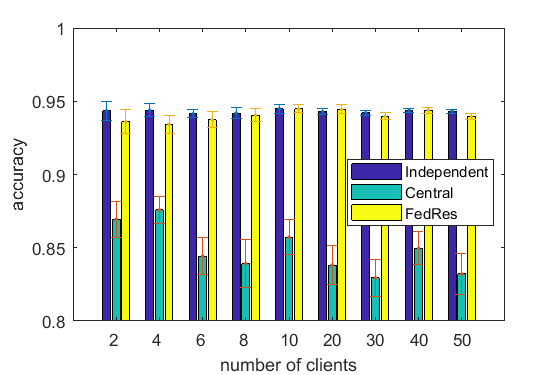}
 }\hfill
\subfigure[shuttle]{
   \includegraphics[width=7cm]{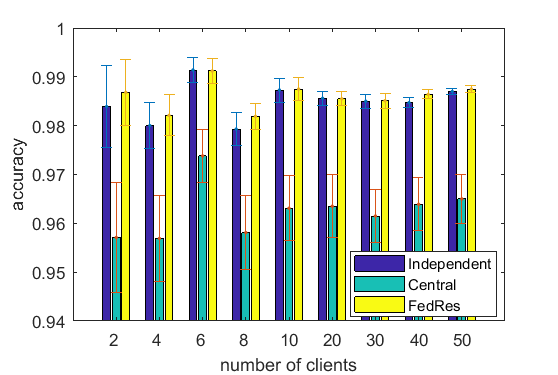}
 }\hfill
\subfigure[covtype]{
   \includegraphics[width=7cm]{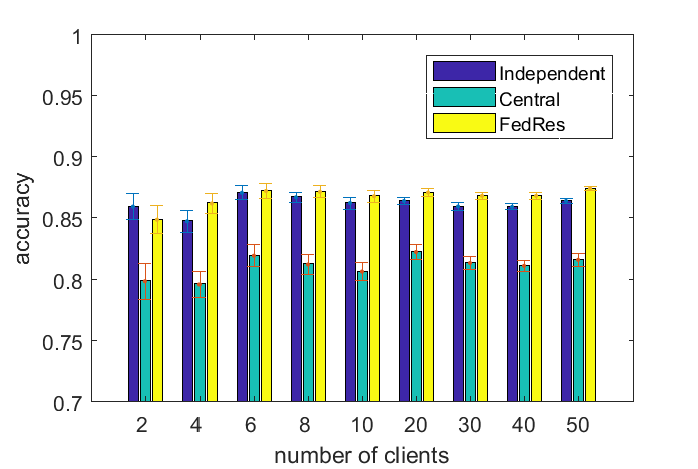}
 }\hfill

\caption{Test accuracy versus the number of clients under fixed delay of $80$. Each data point is an average over $50$ random trials.}
\label{fig: delay=80 case}
\end{figure}

\subsection{Conclusions from the experiments}
\label{app exp conclusion}
From the discussion and the experimental results in Section~\ref{subsec: experiment for robustness}, when there is no delay (or insignificant delay), the \alg provides robustness to the task similarity among clients --- it takes advantage of the equivalently larger datasets when the tasks of the clients are similar, and keeps the performance similar to Independent when the Central scheme is actually harmful. From the extensive experiments shown in Section~\ref{subsec: effect of delay in appendix} and Section~\ref{subsec: combined plots}, we see that when delay is presented, \alg is generally robust despite these delays, particularly when the clients can jointly learn a good global model. On the other hand, in settings where each client can learn a reasonably good model locally, the delay can be more harmful. In general, we find that \alg presents a robust way of leveraging shared learning when it is helpful, while competing well with completely local learning when that is the best thing to do, even in the face of communication delays.

\end{document}